\newcommand{\cmark}{\ding{51}}%
\newcommand{\xmark}{\ding{55}}%
\newtheorem{problem}{Problem}
\newcommand{\propl}{individual differential privacy deterministic bound\xspace}
\newcommand{\propi}{iDP\xspace}
\newcommand{\propa}{iDP-DB\xspace}
\newcommand{\tool}{\texttt{LUCID}\xspace}
\newcommand{\boundtool}{\texttt{LUCID-Compute}\xspace}
\newcommand{\reptool}{\texttt{LUCID-Inference}\xspace}
\newcommand{\anan}[1]{\textcolor{blue}{\bf Anan: #1}}
\newcommand{\TBD}[1]{\textcolor{red}{\bf TBD}}
\newcommand{\WITHNAIVE}{1}
\newcommand{\EXTENDEDVER}{-1} % set to -1 to enable extended , to to 1 to disable extended.
\begin{document}

%\title{Guarding the Privacy of Label-Only Access to Neural Network Classifiers via Formal Verification}
\title{Guarding the Privacy of Label-Only Access to Neural Network Classifiers via \propi Verification}

\author{Anan Kabaha}
\orcid{0000-0002-0969-6169}
\affiliation{%
  \institution{Technion}
  \city{Haifa}
  \country{Israel}
}
\email{anan.kabaha@campus.technion.ac.il}

\author{Dana Drachsler-Cohen}
\orcid{0000-0001-6644-5377}
\affiliation{%
  \institution{Technion}
  \city{Haifa}
  \country{Israel}
}
\email{ddana@ee.technion.ac.il}

\begin{CCSXML}
<ccs2012>
<concept>
<concept_id>10003752.10010124.10010138.10010143</concept_id>
<concept_desc>Theory of computation~Program analysis</concept_desc>
<concept_significance>500</concept_significance>
</concept>
<concept>
<concept_id>10003752.10010124.10010138.10010142</concept_id>
<concept_desc>Theory of computation~Program verification</concept_desc>
<concept_significance>500</concept_significance>
</concept>
<concept>
<concept_id>10011007.10010940.10010992.10010998</concept_id>
<concept_desc>Software and its engineering~Formal methods</concept_desc>
<concept_significance>500</concept_significance>
</concept>
<concept>
<concept_id>10010147.10010257.10010293.10010294</concept_id>
<concept_desc>Computing methodologies~Neural networks</concept_desc>
<concept_significance>500</concept_significance>
</concept>
</ccs2012>
\end{CCSXML}

\ccsdesc[500]{Theory of computation~Program analysis}
\ccsdesc[500]{Theory of computation~Program verification}
\ccsdesc[500]{Software and its engineering~Formal methods}
\ccsdesc[500]{Computing methodologies~Neural networks}

\keywords{Neural Network Verification, Neural Network Privacy, Constrained Optimization}

\begin{abstract}
Neural networks are susceptible to privacy attacks that can extract private information of the training set. To cope, several training algorithms guarantee differential privacy (DP) by adding noise to their computation. 
However, DP requires to add noise 
considering \emph{every possible} training set. %Often, network designers care about a \emph{given} training set, in which case the DP requirement leads to a higher noise than necessary, resulting in a significant decrease in the network's accuracy. 
This leads to a significant decrease in the network's accuracy.
%We focus on a more relaxed privacy guarantee, called 
Individual DP (\propi) restricts DP to a \emph{given} training set. %by adding noise to the network's predicted labels independently for each input.
%Satisfying \propi enables us to add noise to the network's predicted labels independently for each input. 
We observe that some inputs deterministically satisfy \propi \emph{without any noise}. By identifying them, we can provide \propi label-only access to the network with a minor decrease to its accuracy. %while protecting the privacy of individuals in the given training set. 
However, identifying the inputs that satisfy \propi without any noise is highly challenging. 
Our key idea is to compute the \emph{\propi deterministic bound} (\propa), which overapproximates the set of inputs that do not satisfy \propi, and add noise only to their predicted labels. 
To compute the tightest \propa, which enables to guard the label-only access with minimal accuracy decrease, we propose \tool, which leverages several formal verification techniques.
%We introduce \tool, a system that leverages formal verification to compute the \propa and accordingly define \propi label-only access to the network. \tool's verification relies on several ideas.
First, it encodes the problem as a mixed-integer linear program, defined over a network and over every network trained identically but without a unique data point. 
Second, 
it abstracts a set of networks using a \emph{hyper-network}. 
Third, it eliminates the overapproximation error via a novel branch-and-bound technique. 
Fourth, it bounds the differences of matching neurons in the network and the hyper-network, encodes them as linear constraints to prune the search space, and employs linear relaxation if they are small.  
We evaluate \tool on fully-connected and convolutional networks for four datasets and compare the results to existing DP training algorithms, which in particular provide \propi guarantees.  
We show that \tool can provide classifiers with a perfect individuals' privacy guarantee ($0$-\propi) -- which is infeasible for DP training algorithms -- with an accuracy decrease of 1.4\%. For more relaxed $\varepsilon$-\propi guarantees, \tool has an accuracy decrease of 1.2\%.  
In contrast, existing DP training algorithms that obtain $\varepsilon$-DP guarantees, and in particular $\varepsilon$-\propi guarantees, reduce the accuracy by 12.7\%. 
\end{abstract}

\maketitle
\section{Introduction}
Deep neural networks achieve remarkable success in a wide range of tasks. 
However, this success is challenged by their vulnerability to privacy leakage~\citep{ref_23,ref_24,ref_25}, a critical concern in the era of data-driven algorithms. 
Privacy leakage refers to the unintended disclosure of sensitive information of the training set. 
Several kinds of attacks demonstrate this vulnerability: 
membership inference attacks%determine whether a specific data point is included in the network's training set
~\citep{ref_26, ref_20, ref_28,ref_13}, 
reconstruction attacks% recreate data points from the network's outputs
~\citep{ref_29,ref_30,ref_31}, and
model extraction attacks% reverse engineer a network's architecture or parameters to derive information about the training set
~\citep{ref83,ref84,ref85}. 
Some of these attacks can succeed even in the most challenging scenario, where the attacker has only label-only access to the network~\citep{ref_12,ref_13,ref_14}.    
This label-only access setting is common 
in machine-learning-as-a-service platforms (e.g., Google’s Vertex AI~\citep{ref_103} or Amazon’s ML on AWS~\citep{ref_104}). 
Thus, it is essential to develop privacy-preserving mechanisms.

There are different approaches for mitigating privacy leakage, including privacy-aware searches for a network architecture~\cite{ref_16,ref_17}, integration of regularization terms during training to prevent overfitting~\cite{ref_18, ref_19}, %, thus reducing privacy leakage. 
or federated learning over local datasets~\cite{ref_34,ref_57}. However, these approaches typically do not have a formal privacy guarantee. 
One of the most popular formal guarantees for privacy is 
differential privacy (DP)~\cite{Dwork06}. 
An algorithm taking as input a dataset is DP if, for any dataset,
 the presence or absence of a single data point does not significantly change its output. 
Several training algorithms add noise to guarantee DP~\cite{ref_22,ref_36,ref_37,ref_58,ref_59,ref_60,ref_61,ref_62}.  
Many of them rely on \emph{DP-SGD}~\citep{ref_22,ref_36,ref_37,ref_58}, where noise is added during training to the parameters' gradients.  
%\citet{ref_36,ref_37} developed DP-SGD approaches defending the case where the attacker has an access only to the classification labels. They designed methods that introduce noise only to the outputs of the classifier, rather than to the gradients or parameters directly. This modification aims to provide classifiers with a comparable level of privacy to standard DP-SGD techniques but with potentially higher accuracy.
However, %Although these algorithms provide DP guarantees, 
they have several disadvantages.
Their main disadvantage is that to satisfy DP, they add noise calibrated to protect the privacy of \emph{any} training set. To this end, they add noise protecting the \emph{maximal} privacy leakage, which tends to be high and thus decreases the resulting network's accuracy. 
In many scenarios, network designers care about protecting the privacy of a \emph{given} dataset.
For example, a network designer wishing to protect the privacy of a patients' dataset may not care that the training algorithm protects the privacy of other datasets (e.g., image datasets). 
However, even if they do not care about these datasets, ensuring DP forces the training algorithm
to add significantly higher noise and, consequently, the accuracy of the network that the designer cares about can be substantially reduced. 
%Beyond this disadvantage, DP-SGD have other disadvantages. 
Another disadvantage is that DP-SGD algorithms add noise to a large number of computations, thus %, which leads to decreasing the algorithm's accuracy. 
the amount of added noise is often higher than necessary.  %resulting in a higher than necessary accuracy decrease. 
This is because the noise %is 
%drawn from a distribution depending on privacy hyper-parameters ($\epsilon,\delta$). These hyper-parameters are used to define the algorithm's (i.e., trained network) 
is a function of the user-defined allowed privacy leakage and the algorithm's actual privacy leakage which is commonly overapproximated (e.g., using composition theorems~\cite{ref_62,ref_63,ref_64}).
%Since computing the exact maximal privacy leakage is challenging, it is commonly over-approximated (e.g., using composition theorems \Dana{add cites}),  which consequently results in unnecessary decrease to the algorithm's accuracy.
 %and provide the user a trade-off: the larger the added noise, the higher the privacy of individual training samples, but the lower the network's accuracy.
%Computing the exact maximal privacy leakage is challenging and thus it is commonly over-approximated (e.g., using composition theorems \Dana{add cites}). Because of the over-approximated bound, obtaining a given desired privacy leakage in practice leads to adding more noise than necessary, which consequently results in unnecessary decrease to the algorithm's accuracy.  %Further, DP is a probabilistic property which by design (2) their guarantees are inherently probabilistic, heavily dependent on the amount and type of noise introduced, and the specific training methodologies employed, (3) even after training the classifier with the DP-SGD technique, 
%Also, DP-SGD techniques add noise to the training computations, without differentiating inputs or computations that do not lead to privacy leakage, which consequently results in adding unnecessary noise and an unnecessary decrease in accuracy. %Third, 
%DP-SGD relies on %retraining the network and cannot provide guarantees to pre-trained networks. Thus, 
%specific training algorithm %it is hard to compose DP-SGD with other training techniques to leverage parallel advances in neural network training. 
Also, DP-SGD algorithms are specific algorithms and cannot easily be combined with other training techniques to leverage parallel advances in training. 
%the DP-SGD guarantees do not reveal the malicious attacker queries (classifier's inputs) that can cause privacy leakage. 
This raises the question: 
%\emph{can we provide DP guarantees for a pre-trained neural network with a minor 
%decrease in its accuracy?}
\emph{can we provide privacy guarantees for individuals in a given dataset used by a training algorithm while introducing a minor accuracy decrease to the label predictions of the resulting network?}
\begin{comment}
We propose label-only access to a network that guarantees \emph{Individual Differential Privacy} (\propi)~\citep{ref_88,ref_90} with a minor accuracy decrease.
\propi relaxes the DP requirement to a \emph{given} dataset, which enables to lower the added noise and thus reduce the accuracy decrease. 
\propi has similar properties to DP (e.g., parallel and sequential composition) and common DP noise mechanisms naturally extend to \propi. 
To the best of our knowledge, we are the first to enforce \propi for neural networks.
While we could consider other DP variants, \propi fits best to the above question and unlike other variants, it does not add or change the DP requirement (except for focusing on a given dataset).
For example, individualized privacy assignment~\cite{ref_94,ref_95} requires to assign a privacy budget to every individual in the training set, depending on how much the individual wishes to protect their privacy, which is not always easy to quantify. 
Local differential privacy requires to add randomization to the individuals' data before being released to form the training set~\cite{ref_96,ref_97}. Renyi differential privacy changes the DP requirement based on the Renyi divergence~\cite{ref_98,ref_99}. Homomorphic-encryption approaches require to encrypt the data~\cite{ref_100,ref_101}. 
%Compared to these variants, \emph{Individual Differential Privacy} (\propi)~\citep{ref_88,ref_90} is the closest variant to DP: it merely restricts the DP requirement to a \emph{given} dataset, which enables to lower the added noise and thus reduce the accuracy decrease. 
%\propi has similar properties to DP, including parallel and sequential composition~\citep{ref_88} and common DP noise mechanisms naturally extend to \propi. 
%However, to the best of our knowledge, \propi has not been enforced for neural networks.
\end{comment}

We propose label-only access to a network that guarantees \emph{Individual Differential Privacy} (\propi)~\citep{ref_88,ref_90} with a minor accuracy decrease.
\propi enforces the DP requirement only for individuals in a \emph{given} dataset.
Thanks to this relaxation, it enables the algorithm to introduce a lower noise and thus reduce the accuracy decrease. 
Although several other relaxations of DP have been proposed in order to reduce its accuracy decrease, they target specialized settings, which pose additional requirements. For example, individualized privacy assignment~\cite{ref_94,ref_95} requires to assign different privacy budgets to every individual in the training set, depending on how much the individual wishes to protect their privacy, which is not always given or easy to quantify. 
Local differential privacy~\cite{ref_60,ref_97} requires to add randomization to the individuals' data before being released to form the training set, which assumes that individuals' data is collected by a certain algorithm. Renyi differential privacy~\cite{ref_98,ref_99} changes the DP requirement based on the Renyi divergence. Homomorphic-encryption approaches~\cite{ref_100,ref_101} require to encrypt the data. In contrast, \propi assumes the exact setting
as DP, except that it limits the guarantee to the individuals in a given dataset. Additionally, \propi 
has similar properties to DP (e.g., parallel and sequential composition) and common DP noise mechanisms naturally extend to \propi. 
To the best of our knowledge, we are the first to enforce \propi for neural networks. 

 \begin{figure*}[t]
    \centering
  \includegraphics[width=1\linewidth, trim=0 280 0 0, clip,page=10]{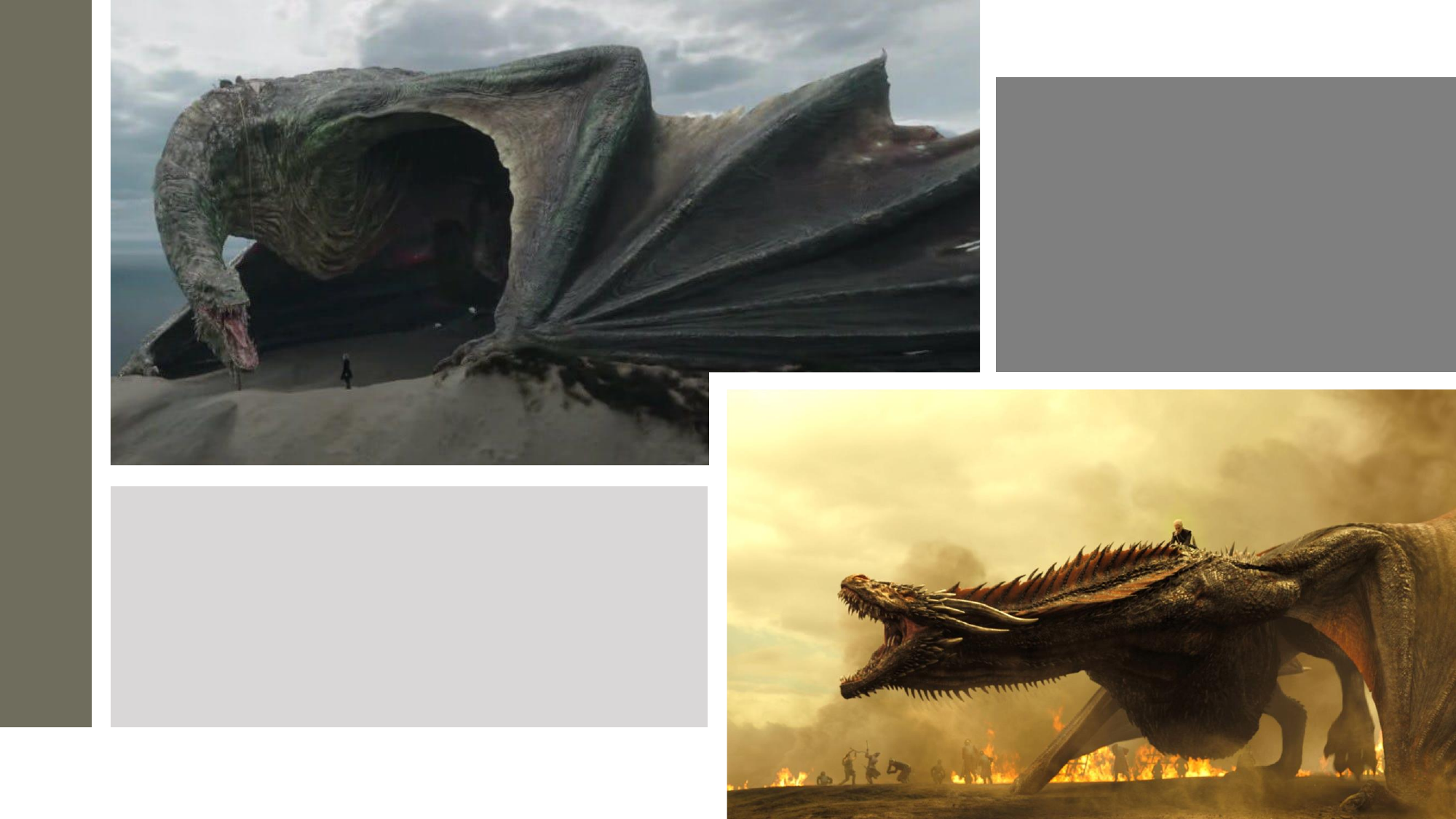}
    \caption{(a)~Illustration of the subspaces of inputs with confidence over or below the \propa on a 2D synthetic training set comprising 7,000 data points. (b)~An overview of \tool:
     given a dataset~$D$, a training algorithm~$\mathcal{T}$, and a classifier~$N$, it computes the \propa of every label (in this example, {\propa}$_{c_0}=0.05$, {\propa}$_{c_1}=0.01$) 
     and returns
     \propi label-only access to $N$. Given an input $x$, this access computes $N(x)$ to identify the predicted label $c$ and the confidence $\mathcal{C}$ in $c$.
      If $\mathcal{C}>$ {\propa}$_c$, it returns $c$; otherwise, it employs a noise mechanism. In this example, if a user submits to the \propi label-only access $x$ where
      $N(x) = (0.6,0.4)$, 
       then $c=c_0$ and $\mathcal{C}=0.6-0.4>0.05$ and thus $c_0$ is returned. If $N(x) = (0.52,0.48)$, then $c=c_0$ and $\mathcal{C}=0.52-0.48\leq 0.05$ and thus a noise mechanism is used to select a class $\widetilde{c}\in\{c_0,c_1\}$ given a probability distribution $(P_0,P_1)$.} %\tool returns the label with the maximal score ($c_1$).}
    \label{fig::intro}
\end{figure*}

%We propose label-only access to a network that guarantees \emph{Individual Differential Privacy} (\propi) and introduces a minor accuracy decrease.
%\propi enforces the DP requirement only for individuals in a given dataset.
%Thanks to this relaxation, it enables the algorithm to introduce a lower noise and thus reduce the accuracy decrease. 
%Although several other relaxations of DP have been proposed, aiming to reduce its accuracy decrease. These approaches assume specific requirements over the original DP property settings, e.g., data encryption in distributed data collection settings~\cite{ref_96,ref_97,ref_101}, different budgets of privacy protection based on different users' requests~\cite{ref_94,ref_95}, or local guarantees that do not generalize to unseen inputs~\cite{ref_8}. In contrast, the \propi property adheres to the original DP property settings but limits the guarantee only to the individuals in the actual training set. 

\sloppy
%We propose label-only access to a network that guarantees \propi and introduces a minor accuracy decrease.
%\propi enforces the DP requirement only for individuals in a given dataset.
%Thanks to this relaxation, it enables the algorithm to introduce a lower noise and thus reduce the accuracy decrease. 
%Although several other relaxations of DP have been proposed, aiming to reduce its accuracy decrease. These approaches assume specific requirements over the original DP property settings, e.g., data encryption in distributed data collection settings~\cite{ref_96,ref_97,ref_101}, different budgets of privacy protection based on different users' requests~\cite{ref_94,ref_95}, or local guarantees that do not generalize to unseen inputs~\cite{ref_8}. In contrast, the \propi property adheres to the original DP property settings but limits the guarantee only to the individuals in the actual training set. 
To naively obtain \propi label-only access to a network, we could add
%While we could naively guarantee \propi by adding 
noise that depends on the maximum privacy leakage of any possible input to the network. %any individual in the given dataset, 
However, this would significantly decrease the accuracy.
  %An algorithm satisfies \propi if, given a dataset, the presence or absence of a single data point does not significantly change the algorithm's output. 
%The \propi property provides a utility-preserving differential privacy framework that relaxes the strict requirements of differential privacy. While differential privacy calibrates added noise to any possible dataset, \propi focuses only on the differential privacy of individuals within the actual dataset. This ensures privacy guarantees for individuals similar to those obtained by differential privacy (but does not extend to groups or unseen data points). Thanks to this relaxation, \propi requires adding a smaller amount of noise compared to DP, thereby reducing the decrease in the resulted network's accuracy.
%We note that safeguarding the privacy of individuals in neural networks is practical mainly due to the focus of privacy attacks on individuals within the training set, such as membership inference or reconstruction of training data points.
 %to the classifier's accuracy. 
Instead, we observe that 
%We employ the \propi property to ensure individual privacy in our classifier settings with label-only access, allowing users to query the classifier by sending inputs and observing the classification results. We observe that, in our setting 
not all inputs to the network lead to an \propi privacy leakage. By identifying these inputs, we can avoid adding noise to their predicted labels and thus avoid unnecessary decrease in accuracy. However, determining whether an input leads to an \propi privacy leakage is challenging since it requires to obtain its classification as determined by a large number of networks: the network trained on the full training set and every network trained on the full training set except for a unique data point. Instead, we propose to overapproximate the set of privacy leaking inputs with the \emph{\propl} (\propa). 
The \propa is the maximal classification confidence of inputs that violate \propi. \emph{Any} input that the network classifies with confidence over this bound does not lead to an \propi violation. 
Namely, such inputs are classified the same regardless if the classifier is trained on the full training set or if any data point is omitted. Given this bound, we can design \propi label-only access to the network by adding noise only to the predicted labels of inputs that the network classifies with confidence smaller or equal to the \propa. 
%Technically, our repair runs an input through the classifier, checks the classification confidence and adds noise if needed
%Given the \propa, our repair is computationally light: given an input, it runs the input through the network and adds noise if the classification confidence is at most the \propa. It is also general to any noise function and is applicable to pre-trained networks, without additional retraining. 
\Cref{fig::intro}(a) visualizes the \propa.
In this example, we consider a binary network classifier trained on a 2D synthetic training set comprising 7,000 data points. 
The figure shows the decision boundary of the classifier along with the decision boundaries of every 
classifier trained identically except that its training set excludes a unique data point (the colored lines in~\Cref{fig::intro}(a)). 
Label-only access to this classifier is \propi
%At high-level, a label-only access to the classifier is \propi 
if all classifiers classify every input the same. 
However, in practice, these classifiers are different. Namely, their decision boundaries are close but not identical. Because the decision boundaries are close, there is a large subspace of inputs that are classified the same by all classifiers (shown by the light blue background). These inputs' classification confidence is higher than the \propa (shown by the dashed green lines), and for them, \propi holds \emph{without added noise}. %We define this space using the minimal confidence bound (shown by the dashed green lines).

Building on this idea, we design
%label-only for neural network classifiers to satisfy \propi with a small accuracy decrease.
%Our system, called 
\tool (for \textbf{L}abel-G\textbf{U}arded \textbf{C}lassifier by \textbf{I}ndividual \textbf{D}ifferential Privacy), a system that computes the \propa of every label and then returns \propi label-only access that wraps a trained network. % and provides users a label-only access. 
Given an input, \tool's label-only access passes the input through the network. If the classification confidence is over the \propa, it returns the predicted label. If not, it selects a label to return using a noise mechanism.  %passes the output through a noise function and returns the noised label.
\Cref{fig::intro}(b) visualizes \tool. 
The main benefits of our \propi label-only access are: (1)~it is applicable to any training algorithm, 
%pre-trained networks, 
(2)~it identifies inputs that cannot lead to privacy leakage and does not add noise to their predicted labels, which leads to a minor decrease in the network's accuracy,
(3)~its noise mechanism can be easily configured with the desired level of privacy guarantee, providing a privacy-accuracy trade-off,
(4)~its privacy guarantee is not tied to a specific privacy attack,  
 and (5)~it is fully automatic.

Computing the exact \propa is highly challenging for several reasons. 
First, it is defined over a very large number of classifiers (commonly, several thousand): the classifier trained with the full training set and every classifier trained on the full training set except for a unique data point.
Second, the \propa is a global property requiring to determine for \emph{any} input if it may be classified differently by any of these classifiers and thus violate \propi. Third, it requires to determine the \emph{maximal} classification confidence of any input that violates \propi. % which involves computing and comparing the decision boundaries of all classifiers.
To compute the tightest \propa, \tool relies on several formal verification techniques. 
First, it encodes the problem as a
mixed-integer linear program (MILP), defined over the network and every network trained on the same training set except for a unique data point. % for expressing the problem of computing the \propa's (minimal) bound given all classifiers. % , a technique that efficiently encodes the network classifier trained with all samples along with other classifiers trained on reduced sample sets. 
%It also allows formalizing the minimization problem over the differentially-private bound.   
Second, to scale the analysis over the immense number of networks, \tool abstracts a set of networks using a \emph{hyper-network}. 
It then restates the MILP to compute the \propa of the network and a hyper-network. %, abstracting a set of networks each trained without a unique data point. 
This enables \tool a simultaneous analysis of multiple, closely related networks. 
Third, to eliminate the abstraction's overapproximation error, \tool 
relies on a novel branch-and-bound (BaB) technique. 
Our BaB branches by refining a hyper-network into several hyper-networks, each
abstracting close networks that are identified by clustering. Our BaB bounds by identifying hyper-networks whose \propa can bound the \propa of other hyper-networks. To keep the number of analyzed hyper-networks minimal, \tool further orders the branching of hyper-networks to increase the chances of bounding as many hyper-networks as possible.
% the classifiers into subgroups based on their weights distance, thereby enabling a simultaneous analysis of multiple, closely related classifiers. 
Fourth, to prune the MILP's search space, \tool bounds the differences of matching neurons in the network and the hyper-network and adds them as linear constraints. %These dependencies are novel and arise from the similarity between classifiers, which are trained by the same algorithm and their training sets differ by a single data point. 
Fifth, neurons in the hyper-network whose differences are small are overapproximated by linear relaxation,
to reduce the MILP's complexity. While this step introduces an overapproximation error, it is small in practice and it is configurable. % employs a similarity encoding strategy, which reduces the number of the boolean variables of the MILP, which govern its complexity. 

%We instantiate \tool with two noise functions: Coin Toss and exponential.  
We evaluate \tool over four data-sensitive datasets (Cryptojacking, Twitter Spam Accounts, Adult Census, and Default of Credit Card Clients) and over fully-connected and convolutional classifiers. %We have implemented two privacy protection schemes based on our differential privacy bounds and 
We compare it to two DP-SGD techniques: the original one~\cite{ref_22}, providing a DP guarantee for all the classifier's parameters, and 
ALIBI~\cite{ref_58}, providing a DP guarantee for the classifier's output. 
Our results indicate that thanks to our formal verification analysis, \tool can provide a perfect \propi guarantee ($\varepsilon=0$) for label-only access with 1.4\% accuracy decrease on average. 
For more relaxed \propi guarantees (higher $\varepsilon$), \tool provides \propi label-only access with 1.2\% accuracy decrease on average. % and up to 1.5\%. 
In contrast, the DP baselines decrease the accuracy on average by 12.7\% to obtain an $\varepsilon$-DP guarantee, which in particular provides $\varepsilon$-\propi guarantee. 
%(except for $\varepsilon=0$, which is impossible for DP algorithms).

In summary, our main contributions are: (1)~formalizing \propi for label-only access to networks, enabling a minor accuracy decrease, (2)~defining the \propa that overapproximates inputs violating \propi, (3)~designing a system called \tool for computing the \propa, relying on several formal verification techniques to scale (constraint solving, hyper-networks, branch-and-bound, 
pruning by bounding the differences of matching neurons, and linear relaxation) (4)~an extensive evaluation showing that \tool provides ~\propi label-only access with less than 1.4\% accuracy decrease. 

 %and up to 45\%
% In some cases, the baselines provide the accuracy of a random classifier.
%c by other approaches require adding a substantial amount of random noise to achieve similar levels of privacy protection, resulting in classifiers with significantly degraded classification ability, akin to random classifiers. 
%Even for the most successful cases of the baselines, \tool obtains a higher accuracy, on average by 7.4\%. %Finally, our ablation study highlights the significance of each component of \tool.

\section{Preliminaries}
\label{sec:preliminary}
This section provides background on neural network classifiers and individual differential privacy.
  
\paragraph{Neural network classifiers}
%We focuse a $\text{ReLU}$ classifiers accepting a one-dimensional input vectors and classifies it into two classes.
%Given an input vector $[0,1]^{d}$ and a set of classes $C=\{0,1\}$, a classifier maps this to a score vector over the possible classes $N:[0,1]^{d}\to \mathbb{R}^2$.
A classifier maps an input into a score vector over a set of labels (also called classes) $C$, i.e., %For simplicity's sake, we focus on binary classifiers, where an input is mapped into two scores, denoted $C=\{0,1\}$, but our definitions extend to multi-label classifiers. 
it is a function: $N:[0,1]^d\to \mathbb{R}^{|C|}$. 
Given an input $x$, its classification is the label with the maximal score: $c= \text{argmax}(N(x))$. 
We focus on classifiers implemented as neural networks.
A neural network consists of an input layer followed by $L$ layers, where the last layer is the output layer. 
%The layers are functions, denoted $h_0, h_1, \dots, h_{L}$, each taking as input the output of the preceding layer. The network's function is the composition of the layers: $N(x)=h_{L}\circ h_{L-1}\cdots \circ h_0 =h_{L}(h_{L-1}(\cdots(h_0(x))))$.
%Layers consist of neurons, where neurons are connected to neurons (some or all) in the next layer. 
Layers consist of neurons, each connected to some or all neurons in the next layer. 
A neuron computes a function, which is typically a linear combination of its input neurons followed by a non-linear activation function. 
The output layer consists of $|C|$ neurons, each outputs the score of one of the labels.
We denote by $z_{m,k}$ the $k^\text{th}$ neuron in layer $m$, for $m\in \{0,\ldots,L\}$ (where $0$ denotes the input layer) and $k\in [k_m]=\{1,\ldots,k_m\}$. 
The input layer $z_0$ passes the input $x$ to the next layer (i.e., $z_{0,k}=x_k$ for all $k\in [d]$).
%The last layer outputs a vector, denoted $D(x)$, consisting of a score for each class in $C$. The classification of the network for input $x$ is the class with the highest score, $c= \text{argmax}(D(x))$.
 %The layers are functions, denoted $n_1, n_2, \dots, n_{L}$, each taking as input the output of the preceding layer. 
 %The network's function is the composition of the layers: $D(x)=n_{L}(n_{L-1}(\cdots(n_1(x))))$.
%A layer $m$ consists of $k_m$ neurons, denoted $z_{m,1},\ldots,z_{m,k_m}$. 
 The neurons compute a function determined by the layer type. %, e.g., fully-connected, convolutional. %The function of layer $m$ is defined by its neurons, i.e., it outputs the vector $(z_{m,1},\ldots,z_{m,k_m})^T$.
  Our definitions focus on fully-connected layers, but our implementation also supports convolutional layers and it is easily extensible to other layers, such as max-pooling layers~\citep{ref_41} and residual layers~\citep{ref_40}. 
  In a fully-connected layer, a neuron $z_{m,k}$ 
   gets as input the outputs of all neurons in the preceding layer. 
   It has a weight for each input ${w}_{m,k,k'}$ and a bias $b_{m,k}$. Its function is the weighted sum of its inputs and bias followed by an activation function. We focus on the ReLU activation function. 
   Formally, the function of a non-input neuron is $z_{m,k}=\text{ReLU}(\hat{z}_{m,k})=\max(0,\hat{z}_{m,k})$, where $\hat{z}_{m,k}$
   is the weighted sum:
 $\hat{z}_{m,k}=b_{m,k}+\sum_{k'=1}^{k_{m-1}}{w}_{m,k,k'}\cdot{z}_{m-1,k'}$. 
%
%\paragraph{Training}
The network's parameters -- the weights and biases -- are determined by a training algorithm. We focus on supervised learning, where the training algorithm~$\mathcal{T}$ is provided with an initial network $\widetilde{N}$ (e.g., with random parameter values), called the network architecture, and a dataset $D \subseteq [0,1]^d \times C$ of labeled data points $(x_D,y_D)$. In the following, we abbreviate a pair $(x_D,y_D)$ by $x_D$. 
%The training algorithm $\mathcal{T}$ minimizes a loss function $\mathcal{L}$ capturing the penalty for misclassification. The goal of the training is to find network parameters $w$ and $b$ (the weights and biases) that yield an acceptably small loss, hopefully the smallest loss. 
%In our work, we focus on the SGD optimizer. Using the SGD optimizer, the training algorithm updates the network parameters based on their gradients with respect to the average of the loss over the training set $\{s_1, \ldots, s_N\}$, so $L(w,b) = \frac{1}{N} \sum_i L(w,b, s_i)$, with each iteration called a training epoch. Specifically, at each epoch $i$, the parameters are updated as follows:
%$ w_i = w_{i-1} - \nabla_{w_{i-1}} \left( \frac{1}{N} \sum_i L(w_{i-1}, b_{i-1}, s_i) \right)$
%with $w_0 = \widetilde{w}$. 
%For neural networks, the loss function $L$ is usually non-convex and difficult to minimize. Therefore, in practice, the minimization is often done by the mini-batch stochastic gradient descent (SGD) algorithm. In this algorithm, at each step, a batch $B$ of random training samples is formed, and the gradient $\frac{1}{|B|} \sum_{x \in B} \nabla_w L(w, b, x)$ is computed as an estimation of the gradient $\nabla_w L(w, b)$. The parameters $w$ are then updated based on the gradient towards a local minimum. Namely the minim-batch algorithm splits the epoch into several batches. 
The exact computation of the training algorithm  $\mathcal{T}$  is irrelevant to our approach. In particular, it may run a long series of iterations, each consisting of a large number of computations (e.g., over the parameters' gradients).  
The important point in our context is that given a network architecture $\widetilde{N}$ and a random seed (in case $\mathcal{T}$ has randomized choices), the training algorithm is a deterministic function mapping a training set $D$ into a neural network.  
\paragraph{Differential privacy} 
Differential privacy (DP)~\cite{Dwork06} is a popular privacy property designed to protect the privacy of individual data points in a dataset, which is being accessed by one or more functions $f$. 
The original DP definition assumes that $f$ is invoked interactively by the user. 
%The query-response process is defined with respect to a function over a dataset $S$. 
To avoid privacy leakage, $f$ is protected by a mechanism $\mathcal{A}$, which is a randomized version of $f$ adding random noise to its computations.
The added noise ensures that the inclusion or exclusion of any single data point in the dataset does not significantly affect the function's output. Formally:

\begin{definition}[Differential Privacy]~\label{def:differential_privacy}
A randomized mechanism $\mathcal{A}$, over a domain of datasets $\mathcal{D}$, is $\varepsilon$-differentially private if for any two adjacent datasets $D,D'\in \mathcal{D}$ (differing by one data point), for all sets of observed outputs $\mathcal{O} \subseteq \text{Range}(\mathcal{A})$ we have:  $\Pr[\mathcal{A}(D) \in \mathcal{O}] \leq e^{\varepsilon} \cdot \Pr[\mathcal{A}(D') \in \mathcal{O}]$. 
\end{definition}
In this definition, $\varepsilon$ is a small non-negative number, bounding the privacy loss. %and $\delta$ bounds the probability of a privacy failure (ideally, it is close to 0). 
A common approach to create a randomized version of $f$ is by adding noise that depends on $f$'s \emph{global sensitivity}
%Given a query function $f$ running over a database $\mathcal{S} \in \mathcal{S}^d$ and outputting a query result $r \in \mathbb{R}^n$ (for example, $f$ which queries the sum of a database $\mathcal{S}$), the DP model aims to find a randomized mechanism $\mathcal{A}_f$ that computes a randomized version of the response $f(\mathcal{S})$ such that the differential privacy property holds. A common approach to obtain $\mathcal{A}_f$ is by using additive noise calibrated to the global sensitivity of the query function 
$S(f)$~\citep{Dwork06,ref_22,ref_58,ref_36,ref_37}. The global sensitivity is the maximum difference of $f$ over any two adjacent datasets in $\mathcal{D}$. Formally: 

\begin{definition}[Global Sensitivity]~\label{def:globall_sensitivity}
The global sensitivity of a function $f:\mathcal{D}\rightarrow \mathbb{R}^n$ is \\
$S(f)=\max_{D,D'\in \mathcal{D} s.t. |D-D'|= 1}{||f(D)-f(D')||}$.
%$ if for any two adjacent datasets $\mathcal{S}$ and $\mathcal{S}'$ differing by one data point: $||f(\mathcal{S})-f(\mathcal{S}')||\leq S(f)$. 
\end{definition}

Differential privacy and global sensitivity are defined with respect to any two adjacent datasets over the dataset domain $\mathcal{D}$. 
While such definitions are useful in many real-world scenarios, in the context of machine learning, it may be too restrictive. 
While there may be scenarios where network designers wish to design a DP training algorithm, protecting the privacy of any dataset, in other scenarios, network designers may only care about protecting the privacy of data points in a \emph{given} dataset. 
For example, a network designer that has collected private information of patients towards designing a network predicting whether a new person has a certain disease may not care whether their training algorithm does not leak private information of the MNIST image dataset~\citep{ref_41} (or any other dataset for this matter). However, even if they do not care about these datasets, ensuring DP may force them 
to add a significantly higher noise because these irrelevant datasets may increase the global sensitivity of the training algorithm. Consequently, the accuracy of the network that the designer cares about can be substantially reduced. In fact, as we demonstrate in~\Cref{sec:eval}, in some cases, the network's classification accuracy can be reduced to that of a random classifier.
In such scenarios, it may be better to relax DP and ensure the privacy of data points in a \emph{given} dataset. This property is called \emph{individual differential privacy}.

\paragraph{Individual differential privacy} 
Individual Differential Privacy (\propi)~\citep{ref_88} is a relaxation of DP, designed to protect the privacy of individual data points in a \emph{given} dataset. 
%an alternative formulation of differential privacy that aims to relax the strict (and often more-than-required to satisfy the intuitive privacy guarantee that DP seeks) DP requirement of indistinguishability of results between \emph{any pair} of adjacent datasets. 
Similarly to DP, a randomized mechanism satisfies \propi if its output is not significantly affected 
by the inclusion or exclusion of any single data point in the given dataset. Formally:
%Instead, IDP focuses only on the \emph{actual dataset} and its adjacent datasets. 
%IDP maintains the same intuitive privacy guarantees for individuals as standard DP, ensuring that the presence or absence of any individual in a dataset does not significantly affect the results of data analyses. 
%Specifically, it guarantees that the output of any query remains indistinguishable between the actual dataset and its adjacent datasets. 
\begin{definition}[Individual Differential Privacy]~\label{def:individual_differential_privacy}
Given a dataset $D\in \mathcal{D}$, a randomized mechanism $\mathcal{A}$, over a domain of datasets $\mathcal{D}$, is $\varepsilon$-individually differentially private if for any dataset $D'$ adjacent to $D$ (differing by one data point), for all sets of observed outputs $\mathcal{O} \subseteq \text{Range}(\mathcal{A})$ we have:  \\
 $e^{-\varepsilon} \cdot \Pr[\mathcal{A}(D') \in \mathcal{O}] \leq\Pr[\mathcal{A}(D) \in \mathcal{O}] \leq e^{\varepsilon} \cdot \Pr[\mathcal{A}(D') \in \mathcal{O}]$. 
\end{definition}
Similarly to DP, a randomized mechanism for $f$ can be obtained by adding noise that depends on $f$'s \emph{local sensitivity} 
$S_L(f)$. The local sensitivity is the maximum difference of $f$ with respect to a given dataset $D$ and any of its adjacent datasets. Formally: 
%IDP reduces data distortion by calibrating the noise required to obtain the $(\epsilon,\delta)$-IDP guarantee to the actual dataset's $L_1$ local sensitivity rather than the global sensitivity, thus significantly improving the accuracy of the results of the randomized mechanism.

\begin{definition}[Local Sensitivity]~\label{def:local_sensitivity}
Given a dataset $D\in \mathcal{D}$, 
the local sensitivity of a function $f:\mathcal{D}\rightarrow \mathbb{R}^n$ is 
$S_L(f)=\max_{D'\in \mathcal{D} s.t. |D-D'|= 1}{||f(D)-f(D')||}$.
%$S_L(f)$ is the local sensitivity of a function $f:\mathcal{S}^d \rightarrow \mathbb{R}^n$ if for any adjacent dataset $\mathcal{S'}$ to $\mathcal{S}$ differing by one data point: $||f(\mathcal{S})-f(\mathcal{S}')||\leq S_L(f)$. 
\end{definition}

%Note that, except for the fact that IDP considers the actual dataset and its adjacent datasets while DP considers any pair of adjacent datasets, the structure of the two properties is similar. Thanks to this, 
\propi has been shown to have several important properties that hold for DP, including parallel and sequential composition~\citep{ref_88}.
Additionally, several DP noise mechanisms naturally extend to \propi by replacing the global sensitivity with the local sensitivity (e.g., the Laplace noise mechanism~\citep{ref_88}). 

\section{Problem Definition: Guarding Label-Only Access to Networks}
\label{sec:IDP} 
%Several works show that neural network classifiers are vulnerable to privacy attacks, and in particular attacks that extract private information of the individuals participating in the classifier's training set~\citep{ref_26, ref_20, ref_28,ref_13}.
%Further, some of these attacks can expose individuals' information even if the network only provides a label-only access (i.e., it only returns the predicted label of any given input)~\citep{ref_12,ref_13,ref_14}.
%We focus on label-only classifiers, where users can query the network for any input and obtain the predicted label. 
In this section, we define the problem of protecting label-only access to neural network classifiers using individual differential privacy. 
We begin by defining the functions that may leak private information in a label-only setting. We then define the problem of creating a randomized mechanism for them that introduces a minor accuracy decrease. Lastly, we describe two naive approaches. %and their limitations. 

%describe how we employ the Individual Differential Privacy (IDP) framework, as described in~\citep{ref_88}, in our neural network classifier settings. IDP provides general framework for interactive privacy protection, where a user-queried function $f$ is performed on the actual dataset $\mathcal{S}$, and then a protected version of the results, $f(\mathcal{S})$, is returned to the user. 

\subsection{Label-Only Queries} 

In this section, we formalize the functions that access the dataset (in our setting, it is also called the training set), thus may leak private information, in our label-only access setting. 

Label-only access to a neural network $N$ maps an input $x\in [0,1]^d$ to a label $c\in C$, where  
$N$ is the network trained by a training algorithm $\mathcal{T}$ given a dataset $D$ and an architecture %(i.e., a neural network with random weights and biases) 
$\widetilde{N}$.
One might consider the training algorithm as the function that leaks private information, since it accesses the dataset to compute the network. 
However, this definition is too coarse for our label-only access setting, where 
users (including attackers) indirectly access the dataset by obtaining the network's predicted label of any input and do not have access to the network's internal parameters.
 Further, making the training algorithm private requires adding random noise to its computations, regardless of the inputs that are later used for querying the network.  
Our formalization removes this separation of the network's training and its querying. It defines the functions that access the dataset as an infinite set of functions, one for each possible input $x\in [0,1]^d$ (not only inputs in the dataset). This refined definition enables us later to add noise only to the functions that potentially leak private information. 
%We next define this set of functions.
Formally, given a training algorithm $\mathcal{T}$ and an architecture $\widetilde{N}$,
the set of \emph{label-only queries} is: $\mathcal{F}_{\mathcal{T},\widetilde{N}} = \{f_{x,\mathcal{T},\widetilde{N}} \mid x\in [0,1]^d\}$, where for every $x\in [0,1]^d$ the function $f_{x,\mathcal{T},\widetilde{N}}(D)$ maps a dataset $D$ to the label predicted for $x$ by the classifier trained by $\mathcal{T}$ on $\widetilde{N}$ and $D$. 
To illustrate, consider stochastic gradient descent as the training algorithm $\mathcal{T}_\text{SGD}$ and as a network architecture $\widetilde{N}_{3,5,2}$ a fully-connected network with three layers, consisting of three neurons, five neurons, and two neurons, respectively. 
Then, for example the function $f_{(1,0.2,0.1)}\in \mathcal{F}_{\mathcal{T}_\text{SGD},\widetilde{N}_{3,5,2}}$ maps a dataset 
$D$ to one of the two labels by (1)~training the network $N=\mathcal{T}_\text{SGD}(D,\widetilde{N}_{3,5,2})$, (2)~computing $N((1,0.2,0.1))$ and (3)~returning the label with the maximal score $c=\text{argmax}(N(1,0.2,0.1))$.

Our definition may raise two questions: (1)~is it equivalent to the common practice where a neural network is computed once and then queried? and (2)~is it useful in practice if it (supposedly) requires to retrain the network from scratch upon every new input? 
The answer to both questions is yes. First, it is equivalent because given a dataset $D$ and an  architecture $\widetilde{N}$,
  the training algorithm $\mathcal{T}$ is \emph{deterministic}. We note that if a training algorithm $\mathcal{T}$ makes random choices, we assume we are given its random seed, which makes $\mathcal{T}$ a deterministic function. Namely, every function $f_{x,\mathcal{T},\widetilde{N}}(D)$ computes the exact same network given $\mathcal{T}$, $D$, and $\widetilde{N}$ and thus our definition is equivalent to the common practice. 
  Second, in practice, we do not retrain the network from scratch for every input, we define these functions only for the sake of mathematically characterizing the functions that may leak private information, but eventually the network is trained once (as we explain later). 

\subsection{Problem Definition: \propi Label-Only Queries with a Minor Accuracy Decrease}
In this section, we present our problem of creating a randomized mechanism for our label-only queries, while minimally decreasing their accuracy. 

As explained in~\Cref{sec:preliminary}, creating a randomized mechanism for our label-only queries is possible by adding noise determined by the local sensitivity. We next define the local sensitivity of a label-only query. 
As defined in~\Cref{def:local_sensitivity}, given a function $f_{x,\mathcal{T},\widetilde{N}}$
%$x\in [0,1]^d$, $\mathcal{T}$, $\widetilde{N}$, 
and a dataset $D$,  
the local sensitivity is the maximum difference of $f_{x,\mathcal{T},\widetilde{N}}$ with respect to $D$ and any of its adjacent datasets. 
Since a label-only query returns a label, we define the difference over their one-hot encoding. That is, if $f_{x,\mathcal{T},\widetilde{N}}(D)=c$, then the one-hot encoding is a vector of dimension $|C|$ where the entry of $c$ is one and the other entries are zeros. By this definition, the local sensitivity is zero if for \emph{every} adjacent dataset $D'$, the same label is returned: $f_{x,\mathcal{T},\widetilde{N}}(D)=f_{x,\mathcal{T},\widetilde{N}}(D')$. 
In this case, $f_{x,\mathcal{T},\widetilde{N}}$ is 0-\propi.
It is not zero if there exists an adjacent dataset $D'$ that returns a different label: $f_{x,\mathcal{T},\widetilde{N}}(D)\neq f_{x,\mathcal{T},\widetilde{N}}(D')$.

To illustrate the label-only queries with sensitivity zero, consider \Cref{fig::intro}(a).
As described, in this example, we focus on a binary network classifier trained on a 2D synthetic training set of size 7,000. 
The figure shows the decision boundaries of this classifier along with the decision boundaries of the networks trained on its adjacent datasets. The figure shows that the decision boundaries are not identical but close. Thus, many inputs are classified the same by all classifiers (shown in light blue background). The respective label-only queries of these inputs have local sensitivity of zero. The other inputs are classified differently by at least one network trained on an adjacent dataset. Namely, the local sensitivity of their respective label-only queries is not zero. 

While we could design an \propi randomized version of the label-only queries by adding noise that assumes the maximum local sensitivity, it would add unnecessary noise to queries with local sensitivity zero and unnecessarily decrease their accuracy. 
For example, in \Cref{fig::intro}(a) the local sensitivity of most label-only queries is zero, and thus these queries are 0-\propi without any noise addition. By adding noise only to queries with non-zero local sensitivity, we can obtain an \propi label-only access to the network with overall a minor decrease in its accuracy.
%This allows maintaining the accuracy of the classification while safeguarding the privacy of the individuals in the training dataset. 
%Instead, we wish to add the minimal noise necessary to guarantee that all label-only queries are IDP. 
This is our problem:

%In order to achieve $(\epsilon,\delta)$-IDP protection over the query $f_{x'}$, its query response $f_{x'}(\mathcal{S})$ needs to be randomized using noise that is calibrated to the local sensitivity $S_{L}(f_{x'})$. To compute the local sensitivity of this query function, it is required to compute the maximal difference between the responses of running the function $\mathcal{T}_{\theta,\widetilde{D}} \circ h_0$ over the full training dataset $\mathcal{S}$ and all other datasets which are equal to the training dataset but missing a single data point $\mathcal{S} \setminus \{s_{\mathcal{T}}\}$. This is equivalent to finding the maximal difference between the outputs at input $x'$ of the classifier trained over the whole training set, $D(x')$, and all other classifiers trained over the whole training set except for a single entry, $D_{-s}(x')$. Next, we formalize.

\begin{definition}[Problem Definition: Minimal Noise \propi Label-Only Queries]~\label{def:NN_local_sensitivity}
Given a dataset $D$, a training algorithm $\mathcal{T}$, a network architecture $\widetilde{N}$, and a privacy budget $\varepsilon$, our goal is to compute for every label-only query $f_{x,\mathcal{T},\widetilde{N}}$, for $x\in [0,1]^d$, the minimal noise required to make it $\varepsilon$-\propi. 
%Given an input $x'\in[0,1]^d$, a classifier $D$ trained over the whole training set and a set of all classifiers trained over the whole training set except for a single entry, $D_{-s}(x')$, the local sensitivity is given by $max\;||D(x')-D_{-s}(x')||\leq S_{L}(f_{x'})$. 
\end{definition} 
Note that we have access to $D$, $\mathcal{T}$, and $\widetilde{N}$; only the users have label-only access to the trained network (as common in machine-learning-as-a-service platforms).
In the following, for brevity,  
%Since every input $x\in[0,1]^d$ corresponds to a label-only query $f_{x,\mathcal{T},\widetilde{N}}(D)$, we sometimes abbreviate and 
%when we write that an input $x$ satisfies/violates \propi we mean that its label-only query $f_{x,\mathcal{T},\widetilde{N}}(D)$ satisfies/violates \propi. Similarly, when we refer to the local sensitivity of an input $x$, we mean the local sensitivity of its label-only query $f_{x,\mathcal{T},\widetilde{N}}(D)$.
we say that an input $x$ satisfies/violates \propi if its label-only query $f_{x,\mathcal{T},\widetilde{N}}(D)$ satisfies/violates \propi. Similarly, the local sensitivity of $x$ is the local sensitivity of its label-only query $f_{x,\mathcal{T},\widetilde{N}}(D)$.
 We call an input $x$ whose local sensitivity is not zero a \emph{leaking input}, since its label-only query is not 0-\propi. %querying for the label of this input leaks information of individuals in the training set.  

%Depending on the sensitivity function, it is possible to calibrate the required noise level to meet the IDP privacy budget for any input $x\in[0,1]^d $. 
%For some inputs, the local sensitivity is zero, meaning that the network trained on the full training dataset and those missing a data point output the same result. In this case, no noise needs to be added as the user cannot detect any difference between the output results. This ensures that the intuitive definition of DP, captured by IDP, is maintained even without adding noise. 
%Only for inputs with non-zero sensitivity should noise be added to preserve privacy. 

\subsection{Naive Algorithms} \label{sec:naive}
In this section, we describe two naive algorithms that solve our problem but are inefficient: either in terms of accuracy or in terms of time and space. 

\paragraph{The naive noise algorithm} The first naive algorithm invokes the training algorithm $\mathcal{T}$ on the given dataset $D$ and the network architecture $\widetilde{N}$ to compute $N$. Its label-only access is defined as follows. Given an input $x\in [0,1]^d$,
 it computes $f_{x,\mathcal{T},\widetilde{N}}(D)$
 by passing $x$ through $N$. Then, it ensures $\varepsilon$-\propi by employing the \emph{exponential mechanism} with the maximum local sensitivity. %which selects the label to return 
 %to ensure $\varepsilon$-\propi.  
We next provide background on the exponential mechanism and explain how this algorithm uses it. %to guarantee $\varepsilon$-\propi.

The exponential mechanism is a popular approach to provide DP guarantees for non-numerical algorithms, such as classification algorithms~\citep{ref_91,Dwork06,ref_92}. In these scenarios, typically, the output range is discrete and denoted $\mathcal{R}$.  
%. is typically discrete and denoted by $\mathcal{R}$, where $r$ represents a possible discrete output. 
%Over the utility function $u$ and any two adjacent datasets $D, D' \in \mathcal{D}$, 
The exponential mechanism defines the \emph{global} sensitivity with respect to a given
\emph{utility} function $u:\mathcal{D} \times \mathcal{R} \rightarrow \mathbb{R}$, 
mapping a dataset and an output to a score.
The global sensitivity is: $\Delta u = \max_{r \in \mathcal{R}} \max_{D,D' s.t. ||D - D'|| \leq 1} |u(D,r) - u(D',r)|$.
To ensure $\varepsilon$-DP, the exponential mechanism returns an output $r \in \mathcal{R}$, 
with a probability proportional to $\exp(\varepsilon \cdot u(D,r) / (2\Delta u))$ (we refer to this process as adding noise to the output $r$). 

\sloppy
Given a query 
$f_{x,\mathcal{T},\widetilde{N}}: \mathcal{D} \rightarrow C$ (mapping a dataset to a class), 
we define a utility function $u_{x,\mathcal{T},\widetilde{N}} : \mathcal{D} \times C \rightarrow \{0,1\}$ (we sometimes abbreviate by $u$ to simplify notation).
The utility function $u$ maps a dataset $D$ and a class $c$ to $1$ if $f_{x,\mathcal{T},\widetilde{N}}(D)=c$ and to $0$ otherwise.
%Formally, $\forall D\in \mathcal{D}\forall c\in C$,  
%$u(D,c) = c=f_{x,\mathcal{T},\widetilde{N}}(D)? 1 : 0$. %if $c=f_{x,\mathcal{T},\widetilde{N}}(D)$ and $u(D,c) = 0$, otherwise.
For example, if $f_{x,\mathcal{T},\widetilde{N}}(D)=c_0$, then  $u_{x,\mathcal{T},\widetilde{N}}(D,c_0)=1$ and   $u_{x,\mathcal{T},\widetilde{N}}(D,c)=0$, for $c\in C\setminus\{c_0\}$.
%its corresponding value in the one-hot vector representation of the classification result $c$ from the query $f_{x,\mathcal{T},\widetilde{N}}(D)$. 
%Specifically, for a given classification result $c$, $u(D,c') = \delta_{c,c'}$, where $\delta_{c,c'}$ is the Kronecker delta. 
Given a dataset $D$, the local sensitivity of $f_{x,\mathcal{T},\widetilde{N}}$ is $\Delta u = \max_{c \in C} \max_{D' s.t. ||D - D'|| \leq 1} |u(D,c) - u(D',c)|$. 
Note that $\Delta u\in\{0,1\}$. 
If our algorithms identify $\Delta u=0$, they do not invoke the exponential mechanism.
That is, they always invoke it with $\Delta u=1$. Thus, our exponential mechanism
%In the zero case the query returns the same label for $D$ and any adjacent dataset.
%This sensitivity can take on only two possible values: 'zero' if the classification results for the query over the entire training set and any training set differing by a single input are identical, and 'one' otherwise. 
%When the sensitivity is zero, the privacy property $\propi$ holds, and there is no need to apply any noise mechanism. 
returns a class $c \in C$ with a probability proportional to $\exp(\varepsilon\cdot u(D,c) / 2)$.  %thereby ensuring $\varepsilon$-differential privacy for the query function. In our specific case, $\Delta u = 1$, so the probability is $\exp(\varepsilon u(D,c') / 2)$. Next we formalize (Proof is in~\Cref{sec:exponential_mechanism}) 
Our next lemma states that this mechanism guarantees $\varepsilon$-\propi:

\begin{restatable}[]{lemma}{ftb}
\label{lemma::exponential_mechanism}
Given a dataset $D$, a query $f_{x,\mathcal{T},\widetilde{N}}$, and a privacy budget $\varepsilon$, the exponential mechanism with our utility function $u_{x,\mathcal{T},\widetilde{N}}$ 
%and a sensitivity $\Delta u = 1$ that outputs a classification $c' \in [C]$ with probability proportional to $\exp(\varepsilon u(D,c') / 2)$, ensures 
is $\varepsilon$-\propi.
\end{restatable}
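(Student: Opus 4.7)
The plan is to adapt the standard exponential mechanism analysis \citep{Dwork06} to the \propi setting, exploiting the fact that \propi only requires the probability bound to hold for datasets adjacent to the specific given dataset $D$, so that local sensitivity (rather than global sensitivity) suffices. Since the utility function $u_{x,\mathcal{T},\widetilde{N}}$ is $\{0,1\}$-valued and we only invoke the mechanism when the local sensitivity is $\Delta u = 1$, the target is to show that for every class $c\in C$ and every $D'$ adjacent to $D$,
\[
e^{-\varepsilon}\cdot \Pr[\mathcal{A}(D')=c] \;\le\; \Pr[\mathcal{A}(D)=c] \;\le\; e^{\varepsilon}\cdot \Pr[\mathcal{A}(D')=c].
\]
The extension from singletons $\{c\}$ to arbitrary output sets $\mathcal{O}\subseteq C$ is then immediate by summing over elements of $\mathcal{O}$.

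First I would write out the sampling probability explicitly as $\Pr[\mathcal{A}(D)=c]=\exp(\varepsilon u(D,c)/2)\,/\,Z(D)$, with normalizer $Z(D)=\sum_{c'\in C}\exp(\varepsilon u(D,c')/2)$, and form the ratio
\[
\frac{\Pr[\mathcal{A}(D)=c]}{\Pr[\mathcal{A}(D')=c]} \;=\; \exp\!\bigl(\tfrac{\varepsilon}{2}(u(D,c)-u(D',c))\bigr)\cdot \frac{Z(D')}{Z(D)}.
\]
The first factor is bounded by $\exp(\varepsilon\Delta u/2)\le \exp(\varepsilon/2)$ by the definition of local sensitivity with respect to $D$ (this is the step where $\propi$, as opposed to $\mathrm{DP}$, matters: we only need the bound to hold for $D'$ adjacent to $D$, and $\Delta u$ here is the \emph{local} sensitivity of $u$ at $D$).

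For the ratio of normalizers, I would argue term-by-term: for every $c'\in C$, $|u(D,c')-u(D',c')|\le \Delta u\le 1$, so $\exp(\varepsilon u(D',c')/2)\le \exp(\varepsilon/2)\cdot \exp(\varepsilon u(D,c')/2)$. Summing yields $Z(D')\le \exp(\varepsilon/2)\cdot Z(D)$. Multiplying the two bounds gives $\Pr[\mathcal{A}(D)=c]\le e^{\varepsilon}\cdot \Pr[\mathcal{A}(D')=c]$. Swapping the roles of $D$ and $D'$ (note: the local sensitivity bound at $D$ controls both directions because $|u(D,\cdot)-u(D',\cdot)|$ is symmetric) yields the matching lower bound, and summing over $c\in\mathcal{O}$ completes the verification of \Cref{def:individual_differential_privacy}.

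I do not expect a real obstacle here: the argument is essentially the textbook exponential-mechanism proof. The only subtle point worth stating carefully is why the \emph{local} sensitivity suffices, which is precisely because \propi fixes the reference dataset $D$, so that all ratios we must bound involve some $D'$ adjacent to this particular $D$; that is exactly what $\Delta u=\max_{D':|D-D'|\le 1}\max_{c}|u(D,c)-u(D',c)|$ controls.
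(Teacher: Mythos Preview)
Your proposal is correct and follows essentially the same approach as the paper's proof: both form the probability ratio, split it into the utility-difference factor and the normalizer ratio, bound each by $e^{\varepsilon/2}$ using $|u(D,c)-u(D',c)|\le 1$, and note that the other inequality is symmetric. The paper phrases the normalizer bound via a ``quotient inequality'' ($\alpha_i/\beta_i\le\gamma$ implies $\sum\alpha_i/\sum\beta_i\le\gamma$) whereas you sum the term-by-term bounds directly, but these are the same argument.
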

%\Cref{sec:proofs} 
The proof \ifthenelse{\EXTENDEDVER<0}{(\Cref{sec:proofs})}{\cite[Appendix A]{ref_105}} is similar to the proof showing that this mechanism is DP (with respect to the global sensitivity).
  This lemma implies that the naive noise algorithm solves \Cref{def:NN_local_sensitivity}.
However, it introduces a very high noise since for any input it invokes the exponential mechanism with the maximum local sensitivity. As we show in~\Cref{sec:eval}, its accuracy decrease is even higher than DP training algorithms.
In fact, in this case, the maximum local sensitivity is equal to the maximum global sensitivity, thus the naive noise algorithm is DP.

\paragraph{The naive \propi algorithm} This naive algorithm improves the accuracy of the previous algorithm by identifying the local sensitivity of each query, instead of bounding it by the maximum local sensitivity.  
It begins by %Given a dataset $D$, a training algorithm $\mathcal{T}$, a network architecture $\widetilde{N}$, and a privacy budget $\varepsilon$, the naive algorithm 
training the set of classifiers for $D$ and all its adjacent datasets.
That is, it computes $\mathcal{N}=\{N\}\cup \{N_{-x_D}\mid x_D\in D \}$, where $N=\mathcal{T}(D,\widetilde{N}) $ and $N_{-x_D}=\mathcal{T}(D\setminus\{x_D\},\widetilde{N}) $.
Its label-only access is defined as follows. Given an input $x\in [0,1]^d$, to compute $f_{x,\mathcal{T},\widetilde{N}}(D)$ it passes $x$ through all networks in $\mathcal{N}$. If all networks return the same label, it returns the label as is. 
Otherwise, it employs the exponential mechanism, exactly as the naive noise algorithm does. %which selects the label to return in a way that ensures $\varepsilon$-\propi. %adds noise to $N$'s predicted label and returns the noised label. 
%We next provide background on the exponential mechanism and explain how the naive algorithm uses it to guarantee $\varepsilon$-\propi.  
Namely, it employs the exponential mechanism only for inputs whose local sensitivity is not zero and thus  
obtains $\varepsilon$-\propi with minimum accuracy decrease.
%since it identifies correctly the local sensitivity of every label-only query and employs the exponential mechanism only if necessary. 
However, as we show in~\Cref{sec:eval}, its inference time is high, 
because every input passes through $|D|+1$ classifiers.
Also it poses a significant space overhead, because it requires to store in the memory all $|D|+1$ classifiers throughout its execution.

%our goal is to compute for every label-only query $f_{x,\mathcal{T},\widetilde{N}}$, for $x\in [0,1]^d$, the minimal noise required to make it $\varepsilon$-IDP. 
%Our extension of IDP to neural network classifiers can be naively implemented by running $|\mathcal{S}| + 1$ classification tasks for each user query, and then comparing the outputs of all classifiers to determine the appropriate noise to add before returning the result to the user. Although this approach is straightforward, it is impractical due to the significant overhead it introduces in terms of time complexity and computational power needed to run the input of all the classifiers, and memory requirements needed to store all classifiers. To address these challenges, we propose the idea of \propl (\propa), which we describe next.

\section{Insight: The Individual Differential Privacy Deterministic Bound}
\label{sec:formalization}
In this section, we present our main insight:
identifying inputs with local sensitivity zero using the  
\emph{network's \propl} (\propa). 
The \propa overapproximates the leaking inputs (whose local sensitivity is not zero) using the network's classification confidence.
Every input that the network classifies with confidence over this bound satisfies 0-\propi. %This enables \tool to repair by adding noise only to inputs whose confidence is below this bound. %and thereby keep the accuracy decrease small.
%The key idea is to deterministically identify all inputs that can potentially violate the differential privacy property. 
%Once identified, these inputs can be treated differently by the classifier's designer to improve the overall classifier's privacy preservation. 
We next provide the definitions, illustrate the \propa, and discuss the challenges in computing it. %begin by describing the 

\paragraph{Leaking inputs}
%To satisfy DP, it is not required to noise the output of inputs that do not leak privacy.  
%Thus, by dynamically determining whether a given input to the network leaks private information, we can keep the added noise to minimum and not lose accuracy because of adding unnecessary noise to inputs that either way do not leak private information.
The set of leaking inputs can be defined by the decision boundaries of all classifiers in $\{N\}\cup \{N_{-x_D}\mid x_D \in D\}$, where $N$ is the classifier trained on the dataset $D$ and $N_{-x_D}$ is the classifier trained on $D\setminus\{x_D\}$.  
%Our main goal is to identify the privacy leaking inputs. %, which are the ones lead to different classification results between $D$ and $D'_{i \in [|\mathcal{S}|]}$. 
%To achieve this, we focus on comparing the decision boundaries between classifiers. 
%Intuitively, if the decision boundaries of all classifies are identical, then no input leaks private information. This follows since all inputs are classified the same, and thus by the DP definition, the training (or repair) algorithm is DP.
 %In practice, there are differences in the decision boundaries of these classifiers. 
 %Intuitively, if an input is on the one side of a decision boundary of $D$ (i.e., classified as some class $c$) and on the other side of a decision boundary of a classifier $D_{-s}$ (i.e., classifies as another class $c'$), for some $s\in \mathcal{S}$, it is a leaking input. %is the space of inputs that leak private information.
The decision boundaries of a classifier partition the input space into subspaces of inputs classified to the same class. %Given the decision boundaries, 
The leaking inputs are the inputs that for $N$ are contained in a subspace labeled as some class and 
for some $N_{-x_D}$, where $x_D \in D$, are contained in a subspace labeled as another class.
%Namely, we can identify the set of privacy leaking inputs by computing the classifiers' decision boundaries. 
%Computing the decision boundaries is highly complex. 
 %differences in the decision boundaries indicate the existence of inputs leading to differential privacy violation. %, typically these inputs can be detected around the decision boundaries of the classifiers. 
We formalize this definition using \emph{classification confidences}. %, which we define for every possible class. 
%
%\paragraph{Classification confidence} 
Given a classifier $N$, an input $x$, and a label $c$,
the classification confidence is the difference between the score of $c$ and the highest score of the other classes: 
%We define the classification confidence for a given classifier $D$, a class $c$ and an input $x$. The classification confidence is the difference of the score of $c$ and the maximal score of any other label $c'$: 
%
  %To obtain the decision boundaries of the classifiers, we begin by representing the network's classification result through the concept of classification sensitivity. Next, we formalize.
%
%\begin{definition}[Classification Confidence]
%Given a classifier $D$, an input $x$, and a label $c$, the classification confidence $\mathcal{C}$ of $D$ for $x$ and $c$ is: 
$\mathcal{C}_{N}^c(x) = N(x)_c - \max_{c' \neq c} N(x)_{c'}.$
%\end{definition}
%
%The classification sensitivity is a term measuring the difference between the classifier's confidence in label $c$ and its maximum confidence in any other label. 
If $\mathcal{C}_N^c(x)$ is positive, then $N$ classifies $x$ as $c$.
Otherwise, $N$ does not classify $x$ as $c$. 
%If it is zero, $x$ is on the decision boundary (such inputs exist due to the continuity of neural networks). %Inputs on the decision boundary of $D$ satisfy $S_{D}^c(x)=0$. 
%Such inputs exist due to the continuity of neural network classifiers. %comprising continues activation functions.     
Given this notation, the set of leaking inputs of class $c$ contains all inputs classified by $N$ as $c$ 
and by one of the other classifiers not as $c$:
$\{x\in [0,1]^d \mid \mathcal{C}_{N}^c(x) > 0 \land \bigvee_{x_D \in D} \mathcal{C}_{N_{-x_D}}^c(x) \leq 0\}$. 
Computing this set requires computing the decision boundaries, which is highly complex. 
Instead, we overapproximate this set. %using \emph{classification confidences}.

\paragraph{The \propa}
%Since computing the exact space of privacy leaking inputs is as complex as computing the exact decision boundaries of all classifiers, we  rely on the classification confidence to overapproximate this space of inputs. 
 We overapproximate the set of leaking inputs by the maximal classification confidence $\beta^*$ of any leaking input. That is, any input whose classification confidence is higher than $\beta^*$ satisfies 0-\propi \emph{without added noise}. Our overapproximation is sound: it does not miss any leaking input. However, it may be imprecise: there may be inputs that are not leaking whose classification confidence is smaller or equal to $\beta^*$. Importantly, adding noise based on our overapproximation enables \tool to satisfy \propi: every leaking input will be noised. Due to the 
  overapproximation, \tool may add noise to inputs that are not leaking, however, as we show in~\Cref{sec:eval}, this approach enables \tool to obtain label-only access to a network with a small accuracy decrease. 
    We call $\beta^*$ the \emph{\propl}, since inputs whose confidence is above $\beta^*$  deterministically satisfy \propi, without adding random noise.
We next formally define it.

%To bound all such inputs, we start by considering inputs from the decision boundaries of the classifier $D$, where the classification difference between the classifier $D$ and the classifiers set $D(x)_{i\in[|\mathcal{S}|]}$ may begin. 
%We then expand to include more inputs, until no classification difference is remained.  
%To obtain this, we propose the bound $\beta$, a positive real number added as a bias to the sensitivity function, such that $S^{\beta}_{D}(x) = \max(0, S_{D}(x) - \beta)$. 

\begin{definition}[Individual Differential Privacy Deterministic Bound (\propa)]~\label{def:main_problem}
Given a dataset~$D$, a training algorithm $\mathcal{T}$, and a network architecture $\widetilde{N}$, %and the classifiers $D_{-s}$, for every $s\in \mathcal{S}$, of the same architecture as $D$ and trained using $\mathcal{A}$ over $\mathcal{S} \setminus \{s\}$, 
the \emph{\propa} is: %$\beta^*\in \mathbb{R}$ is the maximal classification confidence satisfying:
$$%\begin{align*}
\beta^*=\text{argmax}_\beta \exists x \in [0,1]^d\ \exists c\in C \ %\phantom{aaaaaaaaa}\\
%\phantom{aaaaaaaaa}\hspace{1cm}
\left( 
\mathcal{C}_{N}^c(x) \geq \beta \land \bigvee_{ x_D \in D}\ \mathcal{C}_{N_{-x_D}}^c(x) \leq 0\right )
%\end{align*}
$$
%S_{D}^{\beta}(x,c)>0 \Rightarrow  S_{D'_i}(x,c)>0$.
where $N=\mathcal{T}(D,\widetilde{N})$ and $N_{-x_D}=\mathcal{T}(D\setminus \{x_D\},\widetilde{N})$, for $x_D\in D$.
\end{definition}

We note the following.
First, the \propa exists for every network, since there is a maximum confidence for every label~$c$. In the worst case scenario, this bound is equal to the maximum confidence over all labels, in which case the set of leaking inputs is overapproximated by all inputs. Consequently, every input will be noised by \tool.
In the (highly unlikely) best case scenario, this bound is zero, indicating that all classifiers classify the same every input. %In this case, the above optimization problem is infeasible and thus we define $\beta^*\equiv 0$. 
%By definition, $\beta^*$ overapproximates the set of leaking inputs since $\beta^*$ is the maximum confidence over all leaking inputs. 
Second, every higher classification confidence $\beta>\beta^*$ also overapproximates the set of leaking inputs. However, it adds a higher than necessary overapproximation error, which would lead \tool to add unnecessary noise.

To reduce the overapproximation error, we define the \propa for every class $c\in C$:
\begin{align}\label{dpdbc}
\beta^*_c=\text{argmax}_\beta \exists x \in [0,1]^d
\left( 
\mathcal{C}_{N}^c(x) \geq \beta \land \bigvee_{ x_D \in D}\ \mathcal{C}_{N_{-x_D}}^c(x) \leq 0\right )
\end{align}
Computing the \propa for every class $c$ enables \tool to lower the decrease in the network's accuracy: a joint bound for all classes $\beta^*=\max_c \beta^*_c$ would lead \tool to add unnecessary noise to inputs that the network classifies as $c$ with confidence 
in $(\beta_c^*,\beta^*]$.

\paragraph{Illustration}
%We can use the \propa to noise only the inputs whose confidence is below this bound.
We illustrate the \propa on the classifier considered in \Cref{fig::intro}(a),
trained on a 2D synthetic training set comprising 7,000 data points. A labeled data point $(x_D,y_D)$ is $x_D=(x_1,x_2)\in [0,1]^2$ and $y_D\in \{0,1\}$. 
Its \propa-s are $\beta_0^*=6.6$ and $\beta_1^*=15$ and thus $\beta^*=15$.
Our idea is to make label-only access to a network $N$ \propi by adding noise to every input $x$ whose confidence is at most $\beta^*_c$, where $c$ is the class $N$ predicts for $x$.
To visualize the set of inputs overapproximated by the \propa (which will be noised by \tool), we consider an extended definition of the classification confidence: $\mathcal{C}^{c,\beta}_{N}(x) = \max(0, \mathcal{C}_{N}^c(x) - \beta)$. That is, classification confidence up to $\beta$ is set to zero so inputs classified with such confidence are on the extended decision boundary. 
\Cref{fig::formalization_beta} visualizes the extended classification confidence for $\beta\in\{0, \beta_0^*, \beta_1^*, 30\}$. For each $\beta$, it shows the extended classification confidence as a function of the input $(x_1,x_2)$, along with the decision boundaries of the 7,000 classifiers in $\{N_{-x_D}\mid x_D \in D \}$ (in colored lines). For $\beta=0$, the decision boundary of $N$ does not cover all the other classifiers' decision boundaries. Namely, there are leaking inputs not on the extended boundary and thus $\beta=0$ is an underapproximation of the set of leaking inputs. Adding noise only to these inputs will not make the label-only access to the network \propi. For $\beta=\beta_0^*$, all the leaking inputs classified as $0$ are on the extended decision boundary, but there are leaking inputs classified as $1$ not on it. For $\beta_1^*$, the extended decision boundary covers all decision boundaries (i.e., it covers all classification differences) and 
thus $\beta^*_1$ overapproximates the set of leaking inputs.
It is also the minimal bound covering all decision boundaries. Adding noise to these inputs will make the label-only access to the network \propi (in fact, for inputs that the network classifies as $0$, adding noise is required only if their confidence is at most $\beta_0^*$). For $\beta=30$, all decision boundaries are covered by the extended decision boundary and it also provides an overapproximation of the leaking inputs. While adding noise to all these inputs will make the label-only access to the network \propi, it results in an unnecessary decrease in accuracy.
 \begin{figure}[t]
    \centering
  \includegraphics[width=1\linewidth, trim=0 295 0 0, clip,page=4]{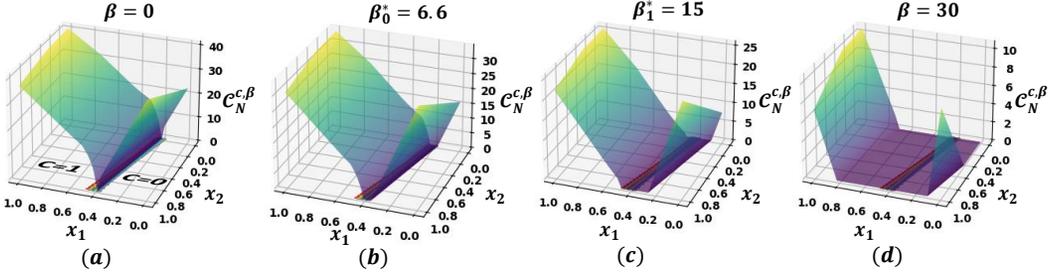}
    \caption{The conceptually extended decision boundaries as a function of the input $(x_1,x_2)\in[0,1]^2$, for different confidences, including the \propa.}
    \label{fig::formalization_beta}
\end{figure}  
  %
%
%\paragraph{Motivation for our deterministic differentially-private bound}
%Differential privacy is defined in a protoblastic manner bounding the likelihood of observing differences in the outputs of a classifier when trained on two datasets, $\mathcal{S}$ and $\mathcal{S'}$, that differ by a single element. This bounding is defined by two parameters: the privacy loss parameter $\epsilon$, and a probability margin error $\delta$. 
Conceptually, our \propa %enables us a new approach for making a network DP. 
partitions the input space into two parts: a subspace of inputs that deterministically satisfy \propi (if the attacker only queries them, the label-only access is $0$-\propi) and a subspace 
%This ensures absolute privacy within this region, as no differential change in output can be observed.  
that overapproximates the leaking inputs. % of the inputs  whose privacy leakage has to be mitigated. %, in contrast, is designed to contain all potential output differences, providing a controlled environment where privacy breaches can be controlled and mitigated by the classifier's designer.  
%The network designer can opt for different noise functions to ensure \propi in this subspace (demonstrated in~\Cref{sec:eval}). %Note, that our approach is orthogonal to the common approach of integrating random noise into the data processing pipeline to preserve privacy. However, it is able to  guide the classifier designer where to add or to not add that noise. Namely, our approach opens the door for other types of privacy preservation schemes (as we explain in section~\ref{sec:eval}). 
To ensure \propi in this subspace, we employ the exponential mechanism.
\sloppy
\paragraph{Challenges} Computing the \propa is very challenging. First, it is a global property, requiring to analyze the network's computation for \emph{every} input (not confined to a particular dataset). 
%requires computing the network classifier's outputs across a large input domain. 
This analysis has shown to be highly complex~\citep{ref_51,ref_52}, as it involves precise modeling of a network classifier (a highly non-convex function) over a very large space of inputs. 
Second, \propa is defined over a very large number of classifiers, $|D|+1$, where $D$ is the dataset, which tends to be large (several thousand). This amplifies the complexity of the previous challenge. % due to the significant increase in the number of comparisons required. 
Third, \propa is the \emph{maximal} bound $\beta$ satisfying 
$\exists x \in [0,1]^d\ \exists c\in C \ \left( 
\mathcal{C}_{N}^c(x) \geq \beta \land \bigvee x_D \in D.\ \mathcal{C}_{N_{-x_D}}^c(x) \leq 0\right )$, where $\beta$ is a real number, which 
increases the problem's complexity. %In the following section, we introduce our ideas to address these challenges.

\section{Overview on Computing the \propa}
\label{sec:overview}
In this section, we present our ideas to efficiently compute the \propa. %using formal verification methods (\Cref{sec:our_approach} provides details). 
%\tool relies on the following ideas.
First, we formalize \propa as a constrained problem (MILP), which can be solved by existing solvers %(~\Cref{sec:overview_opt}). 
%This formalization enables searching within the continuous bound domain and the input domain across all $|\mathcal{S}|+1$ classifiers to find the minimal bound where the DP-DB property holds. 
but its complexity is very high. %: it is exponential in the multiplication of the network size and the training set size.
To cope, we abstract the analyzed networks using a \emph{hyper-network}. To mitigate the abstraction's overapproximation error, we introduce a branch-and-bound technique that refines a hyper-network into multiple hyper-networks, each abstracts a disjoint set of networks. %The refinement is invoked upon detecting an overapproximation error. 
%is We first explain the complexity of solving our optimization problem and then introduce three ideas aimed at simplifying solving it.  
%First, we propose a Hyper branch-and-bound approach. 
%Rather than solving the optimization problem for all the classifiers simultaneously, our approach divides them into closely related subgroups. 
%Each of these groups can be combined into a hyper-network (defined later), allowing for creating an optimization problem that computes a bound for this subgroup. 
%This process continues until the DP-DB is found over all classifiers. 
To further reduce the complexity, we bound the differences of matching neurons in a network and a hyper-network and %computing that can be encoded into the optimization problem to accelerate its solution. 
%These dependencies arise from the similarity between classifiers, as they are results of executing the same algorithm while only excluding a single input. 
add them as linear constraints or, if they are small, employ linear relaxation.
%a similarity encoding, leveraging the similarity between the classifiers. %$D$ and the classifiers subgroups. 
%This is intended to simplify the overall complexity of finding the minimal bound at the cost of overapproximation error.
%We next describe our ideas.

\subsection{Encoding \propa as a MILP}
\label{sec:overview_opt}
We express the problem of computing the \propa, which is a constrained optimization, as a mixed-integer linear program (MILP).    
MILP has been employed by many verifiers for solving constrained optimizations, e.g.,  %proven successful in addressing a variety of search challenges in different fields. 
%These include finding adversarial examples, which focus on a single input and a perturbation type~\cite{ref_67,ref_68,ref_69,ref_70}; 
local robustness~\cite{ref_86,ref_49,ref_42}, global robustness properties~\cite{ref_5,ref_6,ref_7}, and privacy in local neighborhoods~\cite{ref_8}.
Compared to these verifiers, computing \propa is much more complex: it is a global property (pertaining to any input) \emph{and} over a very large number of networks (the size of the training set plus one), which is the reason we rely on additional ideas to scale the computation.
%This increased complexity arises from our need to compare all possible outputs across a very large number of classifiers. 
An advantage of existing MILP optimizers is that they are anytime algorithms. That is, the optimizer can return at any point an interval bounding the value of the optimal solution (given enough time, this interval contains only the optimal value).

%To rely on MILP solvers, we express \propa is a MILP. 
Recall that for $c\in C$, the \propa is the maximal $\beta_c$ satisfying $\mathcal{C}_{N}^c(x) \geq \beta_c \land \bigvee_{ x_D \in D}\ \mathcal{C}_{N_{-x_D}}^c(x) \leq 0$ (\Cref{dpdbc}). 
 A straightforward MILP leverages prior work for encoding the classification confidence~\cite{ref_42} and
encodes the disjunction (i.e., $\bigvee$) %which is not supported by standard optimizers, %
using the Big M method~\cite{ref_73,ref_75}.
However, this MILP has an exponential complexity in $|D|$ and the multiplication of
the number of non-input neurons in $N$ and $|D|+1$. 
This is because a MILP's complexity is exponential in the number of boolean variables and this encoding introduces $|D|$ boolean variables for the disjunction and a unique boolean variable for every non-input neuron in every network.

A naive approach to cope with this complexity is to express the \propa as $|D|$ separate MILPs $\{P_{x_D} \mid x_D \in D \}$, where $P_{x_D}$ is
$\beta_{c,\{x_D\}}^*=\text{argmax}_{\beta} \exists x 
\left(\mathcal{C}_{N}^c(x) \geq \beta \land \mathcal{C}_{N_{-{x_D}}}^c(x) \leq 0\right )$.
Each MILP is submitted to the solver and the \propa is the maximum: $\beta_c^* = \max_{x_D\in D}\beta^*_{c,\{x_D\}}$. 
The complexity of this naive approach is $|D|$ times the complexity of each MILP $P_{x_D}$, which is exponential in the multiplication of the number of non-input neurons in $N$ and two.
%involves splitting the single optimization problem (Problem~\ref{problem1}) into $|\mathcal{S}|$ separate optimization problems, where in each, only one Boolean variable is active and the others remain inactive. 
%Consequently, only two classifiers are encoded: the original classifier $D$ and the classifier $D'_{i}$ corresponding to the active Boolean. 
%
%Each problem computes a minimal bound, $\beta^i_{\text{MIN}}$. 
%After solving all these optimization problems, the minimal DP-DB bound can be computed by taking the maximum value, $\beta_{\text{MIN}} = \max_i(\beta_i)$. 
However, the naive approach is impractical because it requires solving a large number of MILPs. Additionally, since it requires the bound of all $|D|$ problems, %it cannot provide an anytime solution. 
obtaining an anytime solution requires obtaining an anytime solution to all $|D|$ problems, which is computationally expensive. %This is because an anytime solution requires solving all the problems in $\{P_s \mid s\in \mathcal{S}\}$, which is highly computation intensive.%(unlike \tool), this approach cannot provide an anytime solution. %, and requires . , meaning a sound bound can only be derived after all the problems have been solved. 
%To address these challenges, we propose a more efficient approach that we call Hyper Branch-and-Bound. This method leverages a Branch-and-Bound (BaB) strategy combined with a concept we call a hyper-network.

\begin{comment}
Our minimal DP-DB bound is defined as the smallest bound above which no input causes the classifier $D$ to classify differently compared to any of the $|\mathcal{S}|$ classifiers $D'_{i \in [|\mathcal{S}|]}$. 
We compute this bound by addressing the dual problem identifying the maximal bound $\beta^*$ where at least one input causes classification differences, and then incrementing this by a very small number $\Delta$. This $\Delta$ represents a negligible quantity, reflecting the floating point precision. Namely, if $\beta^*$ is the maximal bound with classification differences, then $\beta^* + \Delta$ is the minimal bound without classification differences. We next formalize:

\begin{problem}[Minimal DP-DB bound]
\begin{equation}\label{problem}
  \begin{gathered}
     \max \beta^*\\%\\
     \text{subject to}\hspace{0.25cm}\hspace{6.1cm} \\
     \exists x.\; \text{ such that } 
     S_{D}^{\beta^*}(x,c)>\Delta\;\land
     \bigvee_{i\in[|\mathcal{S}|]}S_{D'_i}(x,c)\leq0
    \end{gathered}
\end{equation}
\end{problem}
\end{comment}

\subsection{Hyper-Networks}\label{sec:overview_hyper}
To define a MILP with a lower complexity, we rely on \emph{hyper-networks}.
% is to employ a novel branch-and-bound, defined over a \emph{set of classifiers}. 
%To this end, we propose to abstract all classifiers $D_s$ using a \emph{hyper-network}. In this section, we define hyper-networks.
%The key idea is to iteratively subdividing the set of $\mathcal{S}$ classifiers into subgroups.
%For each subgroup, we solve Problem~\ref{problem1} and then subdivide again until the optimal solution is reached. 
%Each subgroup of networks comprises what we call a hyper-network.   
%
A hyper-network abstracts a set of networks by associating the network parameters' \emph{intervals}, defined by the minimum and maximum values over all networks. Formally:
%The hyper-network combines the parameters (weights and biases) of its networks subgroup into a single entity defined by the parameter ranges (minimum and maximum values) of the networks subgroup. 
%We next formalize the definition of a hyper-network:

\begin{definition}[A Hyper-Network]\label{over:Hyper}
Given a set of networks $\mathcal{N}=\{N_n \mid n\in [K]\}$ of the same architecture, each with weights $\mathcal{W}_{n} = \{w^n_{1,1,1},\ldots,w^n_{L,k_L,k_{L-1}}\}$ and biases $\mathcal{B}_{n}=\{b^n_{1,1},\ldots,b^n_{L,k_L}\}$, a \emph{hyper-network} $N^\#$ is a network of the same architecture such that $w_{m,k,k'}^\# = [\min_{n\in[K]}(w^n_{m,k,k'}), \max_{n\in[K]}(w^n_{m,k,k'})]$ and $b_{m,k}^\#=[\min_{n\in[K]}(b^n_{m,k}),\max_{n\in[K]}(b^n_{m,k})]$.
\end{definition}

\Cref{fig::hyper_bab} shows an example of four networks $N_{-1},\ldots,N_{-4}$ and their hyper-network $N^\#_{1,2,3,4}$. For example, since the values of $z_{1,1}$'s bias in the networks are $0.1$, $0$, $0.1$, $0.3$, its interval in the hyper-network is $[0,0.3]$.
A hyper-network introduces an overapproximation error. This is because it abstracts every network whose parameters are contained in their respective hyper-network's intervals. Formally, a hyper-network $N^\#$ abstracts every network $N'$ whose weights satisfy $w_{m,k,k'}\in w_{m,k,k'}^\#$ and biases satisfy $b_{m,k}\in b_{m,k}^\#$. For example, the hyper-network $N^\#_{1,2,3,4}$ in~\Cref{fig::hyper_bab} abstracts the network that is identical to $N_{-1}$ except that the bias of $z_{1,1}$ is $0.2$, even though this network is not one of the four networks used to define this hyper-network. 
 We note that hyper-networks have been proposed by~\citet{ref_8}, however, their focus is different: they predict a hyper-network from a subset of networks.
 
 Given a subset of the dataset $S\subseteq D$, we define the set of networks of $S$ as $\mathcal{N}_S=\{N_{-x_D}\mid x_D \in S \}$ and denote the hyper-network abstracting the set of networks $\mathcal{N}_S$ by $N^\#_S$.
The \propa of $S$ is defined as the following constrained optimization over its hyper-network $N^\#_{S}$:
\begin{align}\label{dphyp}
\beta^*_{c,S}=\text{argmax}_\beta \exists x 
\left( 
\mathcal{C}_{N}^c(x) \geq \beta \land \mathcal{C}_{{N}_{S}^\#}^{c}(x) \leq 0\right )
\end{align}
%propose to replace the disjunction in \Cref{problem1} with a constraint over $D^\#_{\mathcal{S}}$:  
%$\mathcal{C}_{D}^{c}(x) \geq \beta_c \;\land\; \mathcal{C}_{{D}_\mathcal{S}^\#}^{c}(x) \leq 0 $. 
Similarly to~\Cref{sec:overview_opt}, we can encode this problem as a MILP, where the neurons' weighted sums depend on the hyper-network's intervals.
By solving this MILP for $S=D$,
we can overapproximate the true \propa (\Cref{dpdbc}) by a greater or equal bound.
This follows since $\beta^*_c= \max_{x_D \in D }\beta^*_{c,\{x_D\}}$ and $\forall x_D \in D.\ \beta_{c,\{x_D\}}^*\leq \beta^*_{c,D}$. 
This MILP's complexity is exponential in $2\cdot |N|$, where $|N|$ is the number of non-input neurons.
This complexity is significantly lower than the complexity of the approaches of \Cref{sec:overview_opt}. 
%This significantly reduces the problem's complexity, since the number of boolean variables is only twice the number of non-input neurons. %number of optimization problems that need to be solved merging them into only a single problem. 
%Additionally, it reduces the solution complexity by omitting the Booleans that arise from the join operator and by encoding several classifiers together. 
Also, by expressing the problem as a single MILP (unlike the naive approach), we can benefit from the solver's anytime solution.
However, the overapproximation error is high: %of the true \propa (\Cref{dpdbc}), i.e., this bound is greater or equal to the true bound. 
%In practice,  
$\beta^*_{c,D}$ is significantly larger than the true \propa, because it considers \emph{every} network abstracted by the hyper-network, even networks that are not used to define the hyper-network.
Having a significantly larger bound than the true \propa would lead \tool to add much more noise than necessary to make the label-only access to the network \propi, and thus significantly decrease its accuracy. We next explain how to compute the true \propa with hyper-networks.

 %resulting in a single higher than  the solution obtained using this approach is not tight. 
%This limitation stems from the hyper-network representing a set of distinct networks by intervals, considering combinations of networks that were not part of the original set. 
%Consequently, this leads to an overapproximation error, which increases the \propa value. 
\begin{figure}[t]
    \centering
  \includegraphics[width=1\linewidth, trim=0 135 0 0, clip,page=6]{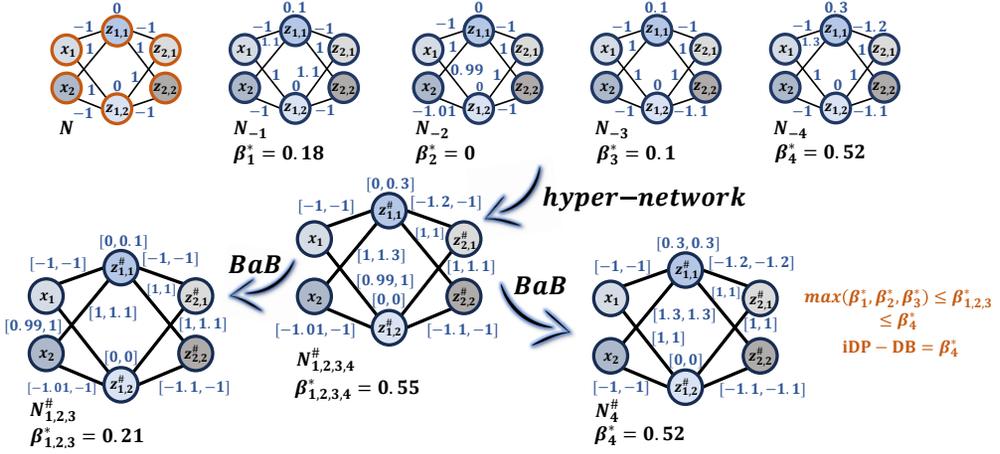}
    \caption{An example of a hyper-network and our branch-and-bound.}
    \label{fig::hyper_bab}
\end{figure}

\subsection{A Branch-and-Bound for Hyper-Networks}\label{sec:overview_bab}
In this section, we present a novel branch-and-bound technique that enables us to compute the true \propa using hyper-networks.

At a high-level, a branch-and-bound (BaB) technique is defined over a verification analysis of abstract objects and it eliminates the analysis' overapproximation error. Given an abstract object, it runs the analysis, which returns a value. If this value is smaller or equal to a certain bound (the object is \emph{bounded}), it terminates. If this value is greater than the bound, the object is refined to several abstract objects (the object is \emph{branched}) and this operation continues for each abstract object. BaB terminates when there are no more objects to branch. Its main advantages are: (1)~it attempts to reduce the overall execution time of the analysis by analyzing abstract objects instead of analyzing independently a large (or even infinite) set of concrete objects and (2)~it does not lose precision despite of analyzing abstract objects, since if the analysis returns a value above the bound, the abstract object is refined.
Thanks to these advantages, several BaB techniques have been proposed for local robustness verification~\cite{ref76,ref77,ref78,ref79,ref80,ref81}. %for reducing the analysis complexity while keeping it precise. %by focusing on input and activation functions splitting. 
However, determining local robustness is simpler than computing the \propa, since it requires analyzing the predicted labels of a single network for a set of inputs. In contrast, computing the \propa requires analyzing the predicted labels of a \emph{large set of networks} for \emph{any input}. %approach splits the classifiers into subgroups based on the similarity of their parameters (weights and biases). 

We propose a new BaB technique to compute the precise \propa. Its verification analysis takes as input a hyper-network abstracting the set of networks of a set $S$ and it computes the \propa of $S$ (\Cref{dphyp}).
%Thus, the question is how to adapt branch-and-bound to enable an efficient computation of the true \propa from a hyper-network, despite its inherent overapproximation loss. To answer this question, 
%We next introduce our BaB's branching and our BaB's bounding.
%
For branching, our BaB refines a hyper-network $N^\#_S$ into $K$ hyper-networks $\{N_{S_1}^\#,\ldots,N_{S_K}^\#\}$, such that the sets $S_1, \ldots,S_K$ partition $S$.
%(i.e., $\bigcup_{i\in [K]} S_i=S$ and $S_i \cap S_j=\emptyset$, for all $i\neq j$)
Our BaB defines the partitioning by clustering the networks in $\mathcal{N}_S$ based on their parameters' similarity. 
The motivation for this clustering is that classifiers whose parameters have closer values are likely to have closer \propa. Consequently, their hyper-network's \propa is likely to have a lower overapproximation error, which increases the likelihood to succeed bounding it and avoid further branching.  
 %We introduce our similarity metric and clustering in~\Cref{sec:our_approach}.

For bounding, we rely on the following observation. Given a set $S$, if its \propa $\beta^*_{c,S}$ is \emph{smaller or equal to} $\beta^*_{c,\{\hat{x}_D\}}$ for some $\hat{x}_D \in D$, then $N^\#_S$ need not be branched. This is because the true \propa is $\beta_c^* = \max_{x_D \in D}\beta^*_{c,\{x_D\}}$ and $\forall x_D'\in S.\ \beta^*_{c,\{x_D'\}}\leq \beta^*_{c,S}$. By transitivity,  $\forall x_D'\in S.\ \beta^*_{c,\{x_D'\}}\leq \beta^*_{c,\{\hat{x}_D\}}\leq \beta^*_{c}$. Namely, our BaB bounds hyper-networks by the \propa of hyper-networks that abstract a single network (in which case there is no overapproximation error). 

%This method allows minimizing the over-approximation error resulted from the hyper-network which can significantly increase when classifiers are far apart. 
%By iteratively applying our Hyper Branch-and-Bound method, we can converge to the optimal solution while effectively bounding the other possible solutions. This is achieved efficiently by focusing on subdividing the hyper-networks with the maximal \propa bound into smaller subgroups. When the Branch-and-Bound approach creates a hyper-network composed of a single network—where the bound is larger than those of the other hyper networks it can halt and return the optimal solution. 

Our BaB has several advantages. First, it computes the precise \propa. % (if branch is invoked whenever a hyper-network is not bounded). 
Second, it can provide an anytime solution: if it early stops, it returns an overapproximation of the \propa. Third, it relies on MILPs over hyper-networks, which reduces the number of boolean variables compared to the straightforward MILP encoding. Fourth, it dynamically identifies a minimal number of MILPs for computing the \propa. Fifth, it orders the analysis of the hyper-networks to increase the chances of bounding them (described in~\Cref{sec:ourapp_sys}).
%(1) Any-time, allowing for progressive refinement and early stopping, 
%(2) Efficient in reducing the number of Boolean variables required by Problem~\ref{problem1}, 
%and (3) Effective in decreasing the number of optimization problems necessary to achieve the optimal solution. 
%Next, we illustrate our Hyper Branch-and-Bound approach with an example.

%The underlying logic is that classifiers with closer parameters are likely to have similar \propa bounds. This method allows minimizing the over-approximation error resulted from the hyper-network which can significantly increase when classifiers are far apart. 
%By iteratively applying our Hyper Branch-and-Bound method, we can converge to the optimal solution while effectively bounding the other possible solutions. This is achieved efficiently by focusing on subdividing the hyper-networks with the maximal \propa bound into smaller subgroups. When the Branch-and-Bound approach creates a hyper-network composed of a single network—where the bound is larger than those of the other hyper networks it can halt and return the optimal solution. 
%Note that our Hyper Branch-and-Bound method is:
%(1) Any-time, allowing for progressive refinement and early stopping, 
%(2) Efficient in reducing the number of Boolean variables required by Problem~\ref{problem1}, 
%and (3) Effective in decreasing the number of optimization problems necessary to achieve the optimal solution. 
%Next, we illustrate our Hyper Branch-and-Bound approach with an example.

%\paragraph{Example.}  
\Cref{fig::hyper_bab} exemplifies our BaB for computing the \propa of $c=0$ for a classifier $N$ (top %highlighted by orange at the upper 
left) %comprising two inputs, $x_{1}, x_{2} \in [0,1]$, two outputs $z_{2,1}, z_{2,2}$, and one hidden layer. 
%The weights of the network are depicted by the edges, and the biases by the neurons.
 and four networks $\{N_{-1}, N_{-2}, N_{-3}, N_{-4}\}$ (top). %We focus on the class $c=0$ to simplify notations, we omit it from the bounds.  %that are closely related to $D$ with identical architecture and parameter structures. 
%\Dana{can we remove the following sentence? where do we take into account c=0?} We focus on $c=0$, where the goal is to compute the maximal \propa at which the network $D$ classifies to $c=0$, and at least one of the other classifiers does not classify to $c=0$. 
To simplify notation, we omit the subscript $c=0$ from the \propa.
The naive approach (\Cref{sec:overview_opt}) solves four optimization problems $P_{x_D}$ for $x_D \in [4]$ and returns their maximum (obtained for $P_4$ in this example): $\beta^* = \beta^*_4=0.52$. 
In contrast, our BaB obtains the same bound by solving only three MILPs of the same complexity as $P_{x_D}$. Our BaB begins by computing the \propa of $\{1,2,3,4\}$, which is $\beta^*_{1,\ldots,4}=0.55$ (bottom center).
Although we know upfront that this \propa is not tight, our BaB computes it to provide an anytime solution. %Its Solving this optimization problem provides a sound bound $\beta = 0.55$, which can be computed in an any-time manner due to the optimization formalism. 
Since this hyper-network is not bounded by any \propa of a single network, it is branched.  
To this end, our BaB clusters the networks based on their similarity. This results in the partitioning $\mathcal{N}_{1,2,3}$ and $\mathcal{N}_{4}$. Accordingly, two hyper-networks are constructed (bottom left and right). 
For each, our BaB solves the respective MILP, returning $\beta^*_{1,\ldots,3}=0.21$ and $\beta^*_{4} =0.52$. Since 
 $\beta^*_{1,\ldots,3}\leq\beta^*_4$, it bounds the hyper-network ${N}^\#_{1,2,3}$. Similarly, it bounds ${N}^\#_4$ by $\beta^*_{4} $.  
Then, our BaB terminates and returns $0.52$.

\subsection{Matching Dependencies and Relax-If-Similar}\label{sec:overview_match}
%So far, we described how to compute the \propa by submitting to the optimizer problems over a network and a hyper-network, instead of $|\mathcal{S}|+1$ networks, in order to reduce the problem's complexity. However, the complexity of this problem is still exponential in $2\cdot |D|$, where $D$ is the number of non-input neurons. 
In this section, we describe two techniques to further reduce the complexity of the MILP of~\Cref{dphyp}. 
Both techniques rely on computing bounds for the differences of matching neurons in the network and the hyper-network.
The first technique, \emph{matching dependencies}, encodes these differences as linear constraints, for every pair of matching neurons, to prune the search space. The second technique, \emph{relax-if-similar}, overapproximates neurons in the hyper-network whose difference is very small by linear relaxation. It reduces the complexity's exponent by one for every overapproximated neuron. While overapproximation reduces the precision, it is employed when the difference is small and combined with the matching dependencies, the precision loss is small. 
  
%Recall that utilizing our hyper branch-and-bound approach, we transform solving Problem~\ref{problem1} into an iterative process. 
%At each iteration, we are required to solve the following optimization problem for a given hyper-network $D^\#$:
%$max\:\beta \; \text{ s.t. } S_{D}^{c}(x) > \beta \;\land\; S_{{D}_\mathcal{S}^\#}^{c}(x) \leq 0$. 
%To solve this optimization problem, \tool employs the widely used MILP (Mixed-Integer Linear Programming) approach for neural network encoding, as described in~\cite{ref_42}. 

These techniques rely on bounding the differences of matching neurons. A pair of matching neurons consists of a neuron $z_{m,k}$ in the network $N$ and its corresponding neuron in the hyper-network $N^\#$, denoted $z_{m,k}^\#$, whose output is a real-valued interval (since the weights and biases of a hyper-network are intervals).
We bound the difference of $z^\#_{m,k}$ and $z_{m,k}$ in an interval $[\Delta^l_{m,k},\Delta^u_{m,k}]\in \mathbb{R}^2$ overapproximating the expression $z^\#_{m,k}-[z_{m,k},z_{m,k}]$.
The difference intervals enable significant pruning to our search space, since in our setting the outputs of matching neurons are highly dependent. This is because the network and hyper-network accept the same input and since their respective weights and biases are very close, because $N$ and the networks used to define $N^\#$ are trained by the same training algorithm and their training sets only slightly differ. 
To compute the difference intervals, we rewrite the interval of every weight (and bias) in the hyper-network $[\underline{w^\#}_{m,k,k'},\overline{w^\#}_{m,k,k'}]$ in terms of the weight in $N$ plus a difference: $[w_{m,k,k'}- (w_{m,k,k'}-\underline{w^\#}_{m,k,k'}), w_{m,k,k'}+{(\overline{w^\#}_{m,k,k'}-w_{m,k,k'})}]$. 
%Further, \propa is defined over networks accepting the \emph{same} input. 
Then, we employ bound propagation %over the weighted sums and ReLU computations of every neuron 
(formalized in~\Cref{sec:our_approach_milp}) to overapproximate $z^\#_{m,k}-[z_{m,k},z_{m,k}]$, for all $m$ and $k$. 

The matching dependencies technique adds to %the MILP of~
\Cref{dphyp} the difference intervals.
For every non-input neuron $z_{m,k}$ and its difference interval $[\Delta^l_{m,k},\Delta^u_{m,k}]$, it adds the constraint: 
%The key idea is to explicitly encode closeness dependencies as linear constraints connecting the variables of the original network $D$ and the hyper-network $D^\#$. 
$z_{m,k} + \Delta^l_{m,k} \leq z^\#_{m,k} \leq z_{m,k} + \Delta^u_{m,k}$.
While these constraints can be added to any pair of networks, they are more effective for pruning the search space when $\Delta^u_{m,k}-\Delta^l_{m,k}$ is small, which is the case in our setting.
 
The relax-if-similar technique overapproximates the computation of neurons in the hyper-network if their difference interval is small. 
We remind that our MILP encoding of~\Cref{dphyp} leverages prior work for encoding the classification confidence~\cite{ref_42}.
This encoding introduces a unique boolean variable for every non-input neuron in the network and the hyper-network.
As described, the complexity of a MILP is exponential in the number of boolean variables. 
 To reduce the exponential complexity, prior work~\cite{ref_49,ref_7,ref_5,ref_6,ref_50} eliminates boolean variables by: 
 (1)~computing tight lower and upper bounds to identify neurons whose output is non-positive or non-negative, in which case their ReLU is stable and their boolean variable can be removed, and/or 
(2)~overapproximating the ReLU computations using linear constraints without boolean variables. 
Although in general overapproximation leads to precision loss, we observe that if the difference interval of a neuron $z_{m,k}^\#$ is small, we can overapproximate its computation without losing too much precision. This follows because its matching neuron $z_{m,k}$ is precisely encoded (with a boolean variable) and because the matching dependency of $z_{m,k}$ forces the value of $z^\#_{m,k}$ to remain close to $z_{m,k}$. 
Based on this insight, our relax-if-similar technique 
eliminates the boolean variable of $z_{m,k}^\#$ and replaces its ReLU constraints by linear relaxation constraints.
%Linear relaxation of ReLU has been proposed by prior work to scale the verification at the cost of loss in precision. However, our 
%linear relaxation is employed differently: \tool overapproximates only neurons in the hyper-network (and not the neurons in the network). As described, these neurons are bounded by the matching dependencies. This reduces the overapproximation error. Further, \tool employs similarity encoding only for neurons whose $\Delta^l_{m,k},\Delta^u_{m,k}$ are very small, and consequently, the overapproximation error is bounded. 
%To address this, we rely on closeness dependencies to propose an encoding scheme where we mutually encode pairs of neurons with closely matched values, using only a single Boolean. 
%Formally, we keep the precise encoding of the ReLU computation for neurons The key idea is to precisely encode the neuron $z_{m,k}$ using its Boolean variable $a_{m,k}$, as described in~\cite{ref_42}, and then 
These constraints capture the minimal triangle bounding
the piecewise linear function of ReLU 
using three linear constraints~\cite{Ehlers17}.

\Cref{fig::closness_mutual_encoding} exemplifies these techniques on the network $N$ and the hyper-network $N^\#_{1,2,3}$ shown in~\Cref{fig::hyper_bab}. 
We have $\hat{z}_{1,1} = -x_1 + x_2$ and $\hat{z}^\#_{1,1} = -x_1 + [1,1.1]\cdot x_2+[0,0.1]$. We can write  $\hat{z}^\#_{1,1}$ as a function of $\hat{z}_{1,1}$ and obtain: $ \hat{z}^\#_{1,1} = z_{1,1} + [0,0.1]\cdot x_2+[0,0.1]$. 
These values pass through ReLU. Since the input $x_2$ ranges over $[0,1]$, we get that $ z^\#_{1,1}$ is between $z_{1,1}$ and ${z}_{1,1}+0.2$.
Thus, the matching dependency of $z^\#_{1,1}$ is:
%For these networks, we encode closeness dependencies between corresponding pairs of neurons: $z_{1,1}$ and $z^\#_{1,1}$, $z_{1,2}$ and $z^\#_{1,2}$. 
%The closeness dependency for the first pair at $(1,1)$ is expressed as: 
$ z_{1,1} \leq z^\#_{1,1} \leq z_{1,1} + 0.2$. 
Similarly, the matching dependency of $z^\#_{1,2}$ is: $ z_{1,2} - 0.02 \leq z^\#_{1,2} \leq z_{1,2} $. 
If we precisely encoded all neurons using a boolean variable for each non-input neuron, the computed \propa would be $\beta^*_{1,2,3}=0.21$. 
If we employed linear relaxation for $z^\#_{1,1}$ and $z^\#_{1,2}$, it would eliminate two boolean variables (thereby reducing the problem's complexity) but would result in an overapproximating bound $0.24$.
To balance, relax-if-similar employs linear relaxation only to similar neurons.
In this example, $z^\#_{1,2}$ is similar to $z_{1,2}$ (since $\Delta_{1,2}^l=-0.02$ and $\Delta_{1,2}^u=0$) while $z^\#_{1,1}$ is not similar to $z_{1,1}$  ($\Delta_{1,1}^l=0$ and $\Delta_{1,1}^u=0.2$). Namely, relax-if-similar eliminates one boolean variable (of $z^\#_{1,2}$) \emph{and} enables to compute the precise bound of $\beta^*_{1,2,3}=0.21$. 
% By applying mutual encoding (ME) across all neurons—replacing the precise encoding of the neurons in the hyper-network with triangle constraints and dependencies, we reduce the total number of Boolean variables to two, resulting in a computed bound of $\beta=0.24$.
%However, applying mutual encoding only to the neurons at position (1,2) reduces the total number of Boolean variables to three, with a computed bound of $\beta=0.21$. 
%This reduction is possible because the differences between the two neurons at (1,2) are negligible (up to 0.02), allowing \tool to decrease the number of Boolean variables with only a minor loss in the precision of the computed bound. 
%To maintain precision, \tool identifies the most closely related neurons and selectively applies mutual encoding only to them.  

\begin{figure}[t]
    \centering
  \includegraphics[width=1\linewidth, trim=0 348 0 0, clip,page=7]{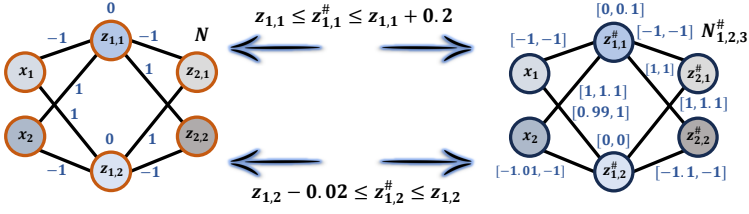}
    \caption{An example of the matching dependencies and relax-if-similar.}
    \label{fig::closness_mutual_encoding}
\end{figure}

\section{Our System: \tool} 
\label{sec:our_approach}
In this section, we describe our system \tool, %, our system for repairing a network to be \propi.
consisting of: (1) \boundtool, computing the \propa of every class using the verification techniques described in~\Cref{sec:overview}, %based on the ideas described in~\Cref{sec:overview},
and (2)~\reptool, creating \propi label-only access to the network, given the \propa of every class.

\subsection{A System for Computing the \propa}\label{sec:ourapp_sys}
In this section, we describe \boundtool, our system for computing the \propa. 
 \Cref{alg:bab} provides its pseudo-code and \Cref{fig::system_description} illustrates it. 
Its inputs are a training set $D$, a training algorithm $\mathcal{T}$, a network architecture $\widetilde{N}$, and a class $c$. 
%a set of classifiers $\{D_{-s}\}$ where $s \in \mathcal{S}$, and a specific class $c$. 
It returns the \propa of $c$ (\Cref{dpdbc}). 
%$\beta^*_c=\text{argmax}_{\beta_c} \exists x \left(\mathcal{C}_{D}^c(x) \geq \beta_c \land \bigvee_{ s \in \mathcal{S}}\ \mathcal{C}_{D_{-s}}^c(x) \leq 0\right )$. 
%Computing \propa for every class $c$ enables \tool to increase the repair accuracy, since a joint bound for all classes $\beta^*=\max_c \beta^*_c$ would lead to adding a higher than necessary noise to all inputs that the network classifies as $c$ such that $\beta_c^*<\beta^*$.
%\tool runs \boundtool for every class $c$ and repairs the predicted labels if $S^c_D(x)\leq \beta^*_c$, where $c$ is the predicted class.  
 %For the given class $c$, \tool computes the maximal \propa, denoted $\beta_c$, at which it repairs the outputs of the classifier for any input classified as $c$. 
%To ensure the classifier is repaired for any possible input, \tool is executed for all classes $c \in [C]$. 

\begin{algorithm}[t]
\caption{\boundtool($D$, $\mathcal{T}$, $\widetilde{N}$, $c$)}
\label{alg:bab}
\DontPrintSemicolon
\KwIn{A dataset $D$, a training algorithm $\mathcal{T}$, a network architecture $\widetilde{N}$, and a class $c$.}
\KwOut{The \propa bound $\beta_c$.}
%\Parameter{$\beta_\text{anytime}$ = $\infty$ \tcp*{The \propa anytime bound.}}
%\Parameter{Subgroups list $\mathcal{D}=\emptyset$, optimal subgroup $\mathcal{C}=\emptyset$.}
$N$ = $\mathcal{T}(D ,\widetilde{N})$\tcp*{The network given the full dataset}\label{algbab_ln:initializationb}
$N_{D}$ = $\{(x_D,\mathcal{T}(D \setminus\{x_D\},\widetilde{N}))\mid x_D \in D\}$\tcp*{The networks given adjacent datasets}
%$\mathcal{C}.\text{classifiers} = \{D_{-s\in \mathcal{S}}\}$\; 
\label{algbab_ln:initialization}
$\beta_{c}$ = $\infty$ \tcp*{The \propa anytime bound} \label{ln:binit}
$Q$ = $[]$ \tcp*{A priority queue}\label{ln:qinit}
%$\mathcal{C}.\beta = \text{MAX\_FLOAT},\;\mathcal{C}.S = \text{False}$\;
$L$ = $\{D\}$ \tcp*{A list of sets to compute their {\propa}}\label{ln:linit}
%$\beta_{c,\text{anytime}}$ = $\infty$ \tcp*{The \propa anytime bound}
\While{True}{
    \For {$S\in L$}{
        $N^\#_S$ = defineHyperNetwork($\{(x_D,N_{-x_D})\in N_{D} \mid x_D \in S\}$)\;\label{ln:hyp}
        $e$ = encodeMILP($N$, $N^\#_S$, $c$)\;\label{ln:milp}
        $I$ = computeDifferenceIntervals($N$, $N^\#_S$)\;\label{ln:diff}
        $e$ = addMatchingDependencies($e$, $I$)\;\label{ln:mat}
        $e$ = relaxIfSimilar($e$,{ $\{[\Delta^l_{m,k},\Delta^u_{m,k}]\in I\mid \Delta^u_{m,k}-\Delta^l_{m,k}\leq \tau\}$})\;\label{ln:rel}
        $\beta_{c,S}$ = MILPSolver($e$)\;\label{algbab_ln:milp_optimization}
        push($Q$, ($S$, $\beta_{c,S}$))\;\label{algbab_ln:push_subgroup}
    }
    ($S$, $\beta_{c,S}$) = pop($Q$)\;\label{algbab_ln:pop_subgroup}
    $\beta_{c}=\beta_{c,S}$\;\label{ln:upb}
     \lIf{$|S|==1$}{break}\label{algbab_ln:break}
    %$\mathcal{C}.\beta = \text{Bound\_by\_MILP\_optimization}(\mathcal{C}.\text{classifiers},D,c)$\;\label{algbab_ln:milp_optimization}%(~\Cref{sec:our_approach_milp})
    %$\mathcal{D}.\textbf{Push}(\mathcal{C}\:|\:\mathcal{C}.S=\text{True})$\; \label{algbab_ln:push_subgroup}
    %$\mathcal{C} = \mathcal{D}.\textbf{Pop}( \text{argmax}_i \mathcal{D}_i.\beta  )$\; \label{algbab_ln:pop_subgroup}
    %\lIf{$|\mathcal{C}.\text{classifiers}| == 1 \; \land\; \mathcal{I}.S == \text{True}$} {\textbf{Break}} \label{algbab_ln:break}
    %\lElseIf{$\mathcal{C}.S == \text{False}$} {\textbf{Continue}} \label{algbab_ln:cont}
    %$\beta_{c,\text{anytime}}$ = $\beta_{c,S}$\;
    $L$ = partition($k$-\text{means\_elbow}, $\{(x_D,N_{-x_D})\in N_{D} \mid x_D  \in S\}$)\; \label{algbab_ln:kmeans}
    %\mathcal{C}.\text{classifiers}.\{w_{m,k,k'}\}\cup \{b_{m,k\}})$
    %\ForEach {$\mathcal{G}_i \in [\mathcal{G}]_K$}{     \label{algbab_ln:split}
    %   \If{$\mathcal{G}_i == \text{argmin}_{\mathcal{G}_j} |\mathcal{G}_{j\in K}|$}{ \label{algbab_ln:cont_with_minimal_1}
    %       $\mathcal{C}.classifiers = \mathcal{G}_i, \; \mathcal{C}.S = \text{False} $\label{algbab_ln:cont_with_minimal_2}
    %    }   
    %    \Else{
    %        $\mathcal{D}.\textbf{Push}(\mathcal{C^*}\:|\:\mathcal{C^*}.\text{classifiers} = \mathcal{G}_i,\:\mathcal{C^*}.\beta = \mathcal{C}.\beta,\:\mathcal{C^*}.S = \text{False})$  \label{algbab_ln:push_others}
    %    }  
    %}
   
  }
 \Return{$\beta_{c}$}\label{ln:ret}
%\Return{$\beta_c = \mathcal{C}.\beta$}  \label{algbab_ln:return}
\end{algorithm}

\boundtool first trains all networks for every possibility to omit up to one data point (\Cref{algbab_ln:initializationb}--\Cref{algbab_ln:initialization}). %: $\{D_{-s}\mid s \in \mathcal{S}\}$.
Then, it performs initializations. First, it initializes the anytime overapproximating bound of the \propa to $\infty$ (\Cref{ln:binit}). 
Next, it initializes a priority queue $Q$ consisting of the sets to branch or bound to an empty queue (\Cref{ln:qinit}). 
Lastly, it initializes a list $L$ consisting of the sets whose \propa is next computed to a list containing the dataset $D$ (\Cref{ln:linit}).
It then runs our BaB (described in~\Cref{sec:overview_bab}). 
Our BaB begins by running the verification analysis for all sets in $L$. 
For every set $S$ in $L$, the analysis constructs the hyper-network $N^\#_{S}$ over the networks in $S$ (\Cref{ln:hyp}). 
%This hyper-network enables \tool to capture the classification decisions across the entire network set $\{D_{-s\in \mathcal{S}}\}$. 
Then, it encodes \Cref{dphyp} as a MILP (\Cref{ln:milp}), which we define in~\Cref{sec:our_approach_milp}. 
% To compare the classification decisions of $D$ and $D^\#$, \tool formalizes a Mixed Integer Linear Programming (MILP) problem. 
%Within this MILP formalization, \tool encodes the any-input domain along with the classification confidences of both the original network $D$ and the hyper-network $D^\#$. 
It then computes the difference intervals (\Cref{ln:diff}), described in~\Cref{sec:overview_match} and defined in~\Cref{sec:our_approach_milp}. 
Accordingly, it adds the matching dependencies (\Cref{ln:mat}) and relaxes neurons whose difference interval is small (\Cref{ln:rel}). %\tool simplifies solving the MILP problem by encoding closeness dependencies that arise from the similarity between the networks, taking advantage of the fact that they analyze outputs under the same input constraint $x$. 
%Additionally, \tool employs a mutual encoding technique to reduce the number of Boolean variables in the encoding. 
It then submits the MILP to a MILP solver (\Cref{algbab_ln:milp_optimization}), which returns a bound $\beta_{c,S}$. 
The set $S$ and its bound $\beta_{c,S}$ are pushed to the priority queue $Q$ (\Cref{algbab_ln:push_subgroup}). 
We next describe the role of $Q$. 

As described, after computing $\beta_{c,S}$, the problem of $S$ (\Cref{dphyp}) is either branched or bounded.  
%Unlike common BaB where the bound is known upfront (e.g., in local robustness the bound is often zero~\cite{ref77,ref78,ref79}), 
$S$ is bounded if our BaB computes $\beta_{c,S'}$ where $\beta_{c,S}\leq \beta_{c,S'}$ and $|S'|=1$. 
%value discovered during the execution, when computing a bound of a hyper-network abstracting a single network. 
To reduce the number of branched problems, %(which governs the complexity of our analysis),
 \boundtool \emph{lazily} branches. In each iteration, it branches the problem $S$ that must be branched, which is the one with the maximal bound $\beta_{c,S}$.  
%As described, the first optimization problem must be branched. 
To identify this problem, \boundtool prioritizes in $Q$ the pairs by their bound, such that the next pair that is popped is the one with the maximal bound. 
If \boundtool pops from $Q$ a pair $(S,\beta_{c,S})$ whose $S$ is a singleton $\{x_D\}$, it bounds this problem along with the problems of \emph{all} pairs in the queue (\Cref{algbab_ln:break}). At this point, all problems are handled and \boundtool returns $\beta_{c,S}$ (\Cref{ln:ret}). 

If \boundtool pops from $Q$ a pair $(S,\beta_{c,S})$ such that $|S|>1$, 
it partitions $S$ into several sets and stores them in $L$ to analyze them in the next iteration (\Cref{algbab_ln:kmeans}). % and initiates another optimization iteration to further refine $\beta_c$.  The process terminates and returns the tightest bound when the item with the maximal bound contains only a single network. 
%It then computes the bound for each of the sets $L$, by constructing their hyper-network and solving their MILP, as described. They are then added along with their bounds to the priority queue, and 
%Then, a new iteration begins. 
\boundtool partitions based on the closeness of the networks' weights and biases.
%\tool utilizes a clustering algorithm that depends on the parameters (weights and biases) of a set of $M$ classifiers $\{D'_{i\in[M]}\}$ to cluster them into $K$ groups based on their parameter $\{w^i_{m,k,k'}\}\cup \{b^i_{m,k\}}$ similarity, measured by the $L_2$ metric (the mathematical formalization is provided in~\Cref{sec:appex_clustering}). 
Such partitioning is effective for two reasons. 
First, the closer the parameters are, the closer the networks' functions and their \propa. Consequently, the hyper-network's \propa is expected to be tighter (i.e., smaller), which increases the chances that it will be bounded and not branched.
%Computing their bounds collectively not only accelerates the process of obtaining the tightest bound but also allows for the pruning of a large number of classifiers from the search tree at once. 
Second, it reduces the overapproximation error that stems from the abstraction of a hyper-network. 
To partition, \tool employs the $k$-means clustering~\cite{ref82} using the $L_2$ norm computed over all the networks' parameters. 
It dynamically determines the optimal number of clusters $k$ using the elbow method. 
%\Cref{sec:appex_clustering} provides the formal closeness definition, based on the $L_2$ norm. 
%The k-means algorithm partitions the input data into $K$ clusters by assigning each data point to the cluster with the nearest mean, which serves as the cluster's centroid. This process iteratively adjusts the positions of the centroids until the positions stabilize, minimizing the within-cluster sum of squares. 

\boundtool is an anytime algorithm: at any point, it can return an overapproximation of the \propa. Initially, the anytime bound is $\infty$ (\Cref{ln:binit}) and after a pair is popped from the priority queue, the anytime bound is the bound of the last popped pair (\Cref{ln:upb}). 

 \begin{figure*}[t]
    \centering
  \includegraphics[width=1\linewidth, trim=0 152 0 0, clip,page=3]{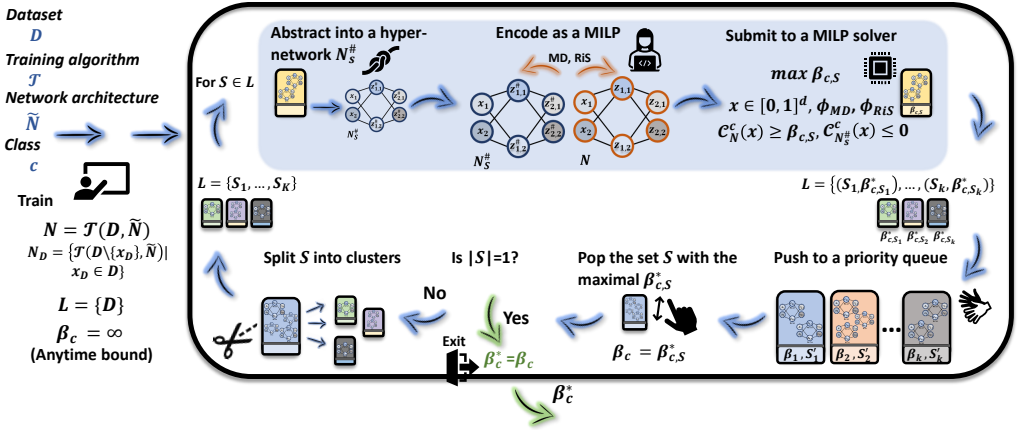}
    \caption{ \tool's component for computing the \propa.}
    \label{fig::system_description}
\end{figure*}
\paragraph{Example}
Consider the example of computing the \propa of $c=0$ for the classifier $N$ shown in~\Cref{fig::hyper_bab} and $N_D=\{N_{-1},N_{-2},N_{-3},N_{-4}\}$. Initially, $\beta_0=\infty$, $Q=[]$, and $L=\{[4]\}$.
In the first iteration of~\Cref{alg:bab}, the hyper-network of $S=[4]$ is constructed (\Cref{fig::hyper_bab}, bottom center) 
and its bound $\beta^*_{1,2,3,4}=0.55$ is computed. Then, the pair $([4],0.55)$ is pushed into the queue: $Q=[([4],0.55)]$. 
Next, this pair is popped from $Q$ and $\beta_0$ is reduced to $0.55$. Since $[4]$ is not a singleton, it is partitioned by clustering. This results in $L=\{\{1,2,3\},\{4\}\}$. Then, another iteration of~\Cref{alg:bab} begins. This iteration first constructs the hyper-network for $S=\{1,2,3\}$ (\Cref{fig::hyper_bab}, bottom left), computes its bound $\beta^*_{1,2,3}=0.21$, and updates the queue: $Q=[(\{1,2,3\},0.21)]$. Then, this iteration constructs the hyper-network for $S=\{4\}$ (\Cref{fig::hyper_bab}, bottom right), computes its bound $\beta^*_{4}=0.52$, and updates the queue: $Q=[(\{4\},0.52),(\{1,2,3\},0.21)]$.
Note that since $0.52>0.21$, the new pair is pushed to the top of the queue. Next, the pair $(\{4\},0.52)$ is popped from $Q$
and $\beta_0$ is reduced to $0.52$. Since $\{4\}$ is a singleton, it bounds \emph{all} pairs in $Q$ and \Cref{alg:bab} returns $\beta_0=0.52$. 

\subsection{The MILP Encoding} 
\label{sec:our_approach_milp} 
In this section, we describe our MILP encoding for computing the \propa of a network and a hyper-network (\Cref{dphyp}). %for a set of classifiers $S=\{D_{1},\ldots,D_{M}\} \subseteq \{D_{-s} \mid s \in \mathcal{S}\}$. 
%\tool first constructs a hyper-network $D_M^\#$ using the networks $\{D_{1},\ldots,D_{M}\}$, a crucial step for efficiently encoding the problem into MILP (see~\Cref{over:Hyper}).   
We adapt the MILP encoding from~\citet{ref_42}, originally designed to analyze a network's local robustness, to our property defined over any input and over a network and a hyper-network. 
\
%However, \tool adapts this MILP formalism to compute \propa, a property that compares the classification decisions between a classifier and a group of classifiers (represented by a hyper-network) to bound their classification differences. 
We begin with background and then present our adaptations. 

\paragraph{Background}
The MILP verifier of~\citet{ref_42} takes as inputs a classifier $N$, an input $x$ classified as $c$, and a neighborhood $\mathcal{I}(x)$, defined by intervals for each input entry (i.e., $x' \in \mathcal{I}(x)$ if and only if $\forall k.\ x'_k \in [l_k, u_k]$). 
The verifier determines whether $N$ is locally robust at $\mathcal{I}(x)$, i.e., whether it classifies all inputs in $\mathcal{I}(x)$ as $c$. 
For the input neurons, 
the verifier assigns each a real-valued variable $z_{0,k}$, constrained by its interval: $\forall k.\ l_k \leq z_{0,k} \leq  u_k$. 
For the non-input neurons, it assigns two variables: $\hat{z}_{m,k}$ for the weighted sum (as defined in~\Cref{sec:preliminary}) %transformations, defined by $\hat{z}_{m,k}=b_{m,k}+\sum_{k'=1}^{k_{m-1}}{w}_{m,k,k'}\cdot{z}_{m-1,k'}$, 
and ${z}_{m,k}$ for the ReLU application. It also introduces %A ReLU is encoded by a boolean variable $a_{m,k}$ and continues variable $\hat{z}_{m,k}$ bounded by 
concrete bounds $l_{m,k}, u_{m,k}\in \mathbb{R}$ for $\hat{z}_{m,k}$.
The verifier encodes $\hat{z}_{m,k}$ as is using one equality constraint, and it encodes ReLU using four constraints, defined over $\hat{z}_{m,k}$ and over a boolean variable $a_{m,k}$ indicating whether the ReLU is in its inactive state (i.e., outputs zero) or active state (i.e., outputs $\hat{z}_{m,k}$).
  %by the constraints: ${z}_{m,k}\geq0$, ${z}_{m,k}\geq \hat{z}_{m,k}$, ${z}_{m,k} \leq u_{m,k}\cdot a_{m,k}$, and ${z}_{m,k} \leq \hat{z}_{m,k}-l_{m,k}(1-a_{m,k})$. 
%The concrete bounds are computed by solving the optimization problems $l_{m,k}=\min z_{m,k}$ and $u_{m,k}=\max z_{m,k}$ where the constraints are the encodings of all previous layers. 
%We denote the set of all neurons' constraints by $\mathcal{N}$. 
To determine local robustness, the verifier adds the constraint %for local robustness violations by determining if there exists an input $x \in \mathcal{R}$ where 
$z_{L,c} - \max_{c'\neq c } z_{L,c'} \leq 0$. If this MILP has no solution, $N$ is locally robust at $\mathcal{I}(x)$; otherwise, it is not. %it determines that $D$ is locally robust at $\mathcal{R}(x)$. 

\paragraph{Our MILP encoding} 
We next present our MILP encoding for 
the \propa of a set $S$ (\Cref{main_problem}), which relies on several adaptations. %\tool adapts the encoding from \cite{ref_42} and introduces several extensions:
%(1) \tool encodes the entire input domain $x \in [0,1]^d$ to support the any-input settings, 
%denoted by $\phi_{in}$\Dana{what is $\varphi_{in}$?}.
First, since \propa is defined over a network and a hyper-network, it has two copies of the variables: for $N$ and for the hyper-network $N^\#_S$. For $N$, it has the same constraints for the non-input neurons (Eq. (\ref{eq3:row4}) and (\ref{eq3:row6})).
For $N^\#_S$, it has the same constraints for the ReLU computations (Eq. (\ref{eq3:row7})). 
For the weighted sums, since the hyper-network's parameters are intervals and not real numbers,
it replaces the equality constraint by two inequality constraints (Eq. (\ref{eq3:row5})). The correctness of these constraints follows from interval arithmetic and since the input entries and the ReLU's outputs are non-negative.
Second, to encode that the network and hyper-network accept the same input, it introduces a variable $x$ and defines $z_{0,k}=x_k$ and 
$z_{0,k}^\#=x_k$, for all $k$ (Eq. (\ref{eq3:row2})).
Third, since \propa is a global property, there is no neighborhood and each input entry $x_k$ is bounded in the domain $[0,1]$ (Eq. (\ref{eq3:row2})). %,
 %(2) \tool encodes two classifiers, each of the accept the same input $x$.  
%The first captures the functionality of the classifier $D$, with its constraint set denoted by $\mathcal{N}$ and parameters denoted by $z_{m,k}$, $\hat{z}_{m,k}$, and $a_{m,k}$.
%The second captures the functionality of the hyper-network $D_M^\#$, with its constraint set denoted by $\mathcal{N}^\#$ and parameters denoted by $z^\#_{m,k}$, $\hat{z}^\#_{m,k}$, and $a^\#_{m,k}$.
 %
%\begin{lemma}
%Given a ReLU hyper-network $D_M^\#$ composed from the intervals $w_{m,k,k'}^\# = [\min_{n\in[N]}(w^n_{m,k,k'}), \max_{n\in[N]}(w^n_{m,k,k'})]$ and $b_{m,k}^\#=[\min_{n\in[N]}(b^n_{m,k}),\max_{n\in[N]}(b^n_{m,k})]$, the MILP encoding of the affine layer $m$ is $\hat{z^\#}_{m,k}\geq \min_{n\in[N]}(b^n_{m,k})+\sum_{k'=1}^{k_{m-1}}\min_{n\in[N]}(w^n_{m,k,k'})\cdot{z^\#}_{m-1,k'}$, and ${z^\#}_{m,k}$ and $\hat{z^\#}_{m,k}\leq \max_{n\in[N]}(b^n_{m,k})+\sum_{k'=1}^{k_{m-1}}\max_{n\in[N]}(w^n_{m,k,k'})\cdot{z^\#}_{m-1,k'}$.  
%\end{lemma}
Fourth, it introduces a bound $\beta_{c,S}$ and adds the objective $\max \beta_{c,S}$ (Eq. (\ref{eq3:row1})) as well as the constraints $z_{L,c} -  z_{L,c'} \geq \beta_{c,S}$ for all $c'\neq c$, 
for encoding $\mathcal{C}_{N}^c(x) \geq \beta_{c,S}$,
and 
$z^\#_{L,c} - \max_{c'\neq c} z^\#_{L,c'} \leq 0$, for encoding $\mathcal{C}_{{N}_{S}^\#}^{c}(x) \leq 0$ (Eq. (\ref{eq3:row3})). 
It expresses $\max_{c'\neq c} z^\#_{L,c'}$ by the Big M method~\cite{ref_73}, which relies on a large constant $M$ and boolean variables $a_{c'}$ for every $c'\neq c$. 
Fifth, it adds the matching dependencies $\phi_{MD}$ and linear relaxation constraints $\phi_{RiS}$ (Eq. (\ref{eq3:row2})), for neurons in the hyper-network whose difference interval is small (described shortly). %, which we next describe. 
%\Cref{sec:appex_full_encoding} shows our encoding.
% The full encoding is provided in~\Cref{sec:appex_full_encoding}. 
%Next, we provide the MILP encoding of the \propa of a set $S$. 
%Given a classifier $N$, a hyper-network $N^\#_S$ defined by the network parameters intervals $w_{m,k,k'}^\#=[\underline{w^\#}_{m,k,k'},\overline{w^\#}_{m,k,k'}]$ and $b_{m,k}^\#=[\underline{b^\#}_{m,k},\overline{b^\#}_{m,k}]$, and a class $c$, our encoding is:
Overall, given a classifier $N$, a hyper-network $N^\#_S$ with parameters $w_{m,k,k'}^\#=[\underline{w^\#}_{m,k,k'},\overline{w^\#}_{m,k,k'}]$ and $b_{m,k}^\#=[\underline{b^\#}_{m,k},\overline{b^\#}_{m,k}]$, and a class $c$, our encoding is given in~\Cref{main_problem} below:
%Given a classifier $D$, a hyper-network $D^\#_S$ defined by the network parameters intervals $w_{m,k,k'}^\#=[\underline{w^\#}_{m,k,k'},\overline{w^\#}_{m,k,k'}]$ and $b_{m,k}^\#=[\underline{b^\#}_{m,k},\overline{b^\#}_{m,k}]$, and a class $c$, our encoding is: 
%

\begin{subequations}\label{main_problem}
\begin{equation}\label{eq3:row1}
  \begin{gathered}
     \max \beta_{c,S} \hspace{0.5cm}\text{subject to}
  \end{gathered}
\end{equation}
\begin{equation}\label{eq3:row2}
      \phi_{MD}; \hspace{0.27cm} \phi_{RiS}; \hspace{0.27cm} x\in[0,1]^d;\hspace{0.27cm}
      \forall k:\hspace{0.2cm}\;z_{0,k}=x_{k};\hspace{0.3cm}z^\#_{0,k}=x_{k}
\end{equation}
\begin{equation}\label{eq3:row3}
      \forall {c'\neq c}.\ z_{L,c}-z_{L,c'}\geq\beta_{c,S};\hspace{0.27cm}\;\;\;
      \forall {c' \neq c}.\  z^\#_{L,c}-z^\#_{L,c'} \leq M\cdot(1-\alpha_{c'});\;\;\sum_{c' \neq c} \alpha_{c'} \geq 1
\end{equation}
\begin{equation}\label{eq3:row4}
     \forall m>0, \forall k:\hspace{0.5cm}\;\hat{z}_{m,k}=b_{m,k}+\sum_{k'=1}^{k_{m-1}}{w}_{m,k,k'}\cdot{z}_{m-1,k'} \hspace{0.5cm}
\end{equation}
\begin{equation}\label{eq3:row5}
     \hat{z}^\#_{m,k}\geq \underline{b^\#}_{m,k} +\sum_{k'=1}^{k_{m-1}}\underline{w^\#}_{m,k,k'}\cdot{z^\#}_{m-1,k'};\hspace{0.5cm}
     \hat{z}^\#_{m,k}\leq \overline{b^\#}_{m,k}+\sum_{k'=1}^{k_{m-1}}\overline{w^\#}_{m,k,k'}\cdot{z^\#}_{m-1,k'}
\end{equation}
\begin{equation}\label{eq3:row6}
      {z}_{m,k}\geq0; \hspace{0.5cm} {z}_{m,k}\geq \hat{z}_{m,k}; \hspace{0.5cm} {z}_{m,k} \leq u_{m,k}\cdot a_{m,k}; \hspace{0.5cm}
      {z}_{m,k} \leq \hat{z}_{m,k}-l_{m,k}(1-a_{m,k})
\end{equation}
\begin{equation}\label{eq3:row7}
     {z}^\#_{m,k}\geq0;\hspace{0.5cm} {z}^\#_{m,k}\geq \hat{z}^\#_{m,k};\hspace{0.5cm}
    {z}^\#_{m,k} \leq u^\#_{m,k}\cdot a^\#_{m,k};\hspace{0.5cm}
    {z}^\#_{m,k} \leq \hat{z}^\#_{m,k}-l^\#_{m,k}(1-a^\#_{m,k})
\end{equation}
\end{subequations}

\paragraph{Difference intervals} The matching dependencies and relax-if-similar techniques rely on the difference intervals.  
%These constraints bound the differences between the continuous optimization variables of $\mathcal{N}$ and $\mathcal{N}^\#$, effectively guiding the optimizer into computing the optimal solution efficiently. 
%Additionally, these constraints allows reducing the number of boolean variables in $\mathcal{N}^\#$, which in turn decreases the overall complexity of solving the \propa MILP problem. 
A difference interval $I_{z_{m,k}} = [\Delta^l_{z_{m,k}}, \Delta^u_{z_{m,k}}] \in \mathbb{R}^2$ is defined for every matching neurons and is computed by bound propagation and interval arithmetic.
To formally define it, %for every $m\in [L],k\in[k_m],k'\in [k_{m-1}]$, 
we define for each weight $w_{m,k,k'}$ an interval capturing the difference: $I_{w_{m,k,k'}}=w^\#_{m,k,k'}-[w_{m,k,k'},w_{m,k,k'}]$, where $w_{m,k,k'}$ is the real-valued weight in $N$ and $w^\#_{m,k,k'}$ is the weight interval in $N^\#_S$. Similarly, $I_{b_{m,k}}=b^\#_{m,k}-[b_{m,k},b_{m,k}]$. 
We define the difference intervals inductively on $m$.
%For each layer, \tool computes these intervals by propagating the closeness intervals from previous layer through the affine and ReLU functions. Accordingly, it adds the 
%\begin{itemize}[nosep,nolistsep]
%\item 
For the input layer, $\forall k\in[d].\ I_{z_{0,k}}=[0,0]$. 
%\item 
For the other layers, for all $m>0$ and $k\in[k_m]$: 
\begin{align*}
I_{\hat{z}_{m,k}}=  
\left( {b_{m,k}^\#}+\sum_{k'=1}^{k_{m-1}}{w^\#_{m,k,k'}}\cdot  z^\#_{m-1,k'}\right) - \hat{z}_{m,k} =\phantom{aaaaaaa}\\
b_{m,k}+I_{b_{m,k}}+\sum_{k'=1}^{k_{m-1}}(w_{m,k,k'}+I_{w_{m,k,k'}})\cdot  (z_{m-1,k'}+I_{z_{m-1,k'}}) - \hat{z}_{m,k}\\
=I_{b_{m,k}}+\sum_{k'=1}^{k_{m-1}} w_{m,k,k'} \cdot I_{z_{m-1,k'}} + I_{w_{m,k,k'}}\cdot  (z_{m-1,k'}+I_{z_{m-1,k'}}) 
\end{align*}
Here, $+$ and $\cdot$ are the standard addition and multiplication of two intervals (when writing a variable $z$, we mean the interval $[z,z]$).  
     For the ReLU computation, we bound the difference as follows:
%     To illustrate, we start from the output of the previous layer: $z^\#_{m-1,k'} = z_{m-1,k'} + I_{z_{m-1,k}}$. This output is then propagated through the affine layer represented by $w_{m,k,k'} + I_{w_{m,k,k'}}$, resulting in $\hat{z}^\#_{m,k} = \hat{z}_{m,k} + I_{\hat{z}_{m,k}}$. \\
 %$I_{z_{m,k}}=[\max(0,\Delta^l_{\hat{z}_{m,k}}),\max(0,\Delta^u_{\hat{z}_{m,k}}))]$, since ReLU$(x)=\max(0,x)$. 
%
\begin{align*}
\text{ReLU}(\hat{z}^\#_{m,k}) - \text{ReLU}(\hat{z}_{m,k})=
\text{ReLU}(\hat{z}_{m,k} + I_{\hat{z}_{m,k}})- \text{ReLU}(\hat{z}_{m,k})\leq \\
\text{ReLU}(\hat{z}_{m,k}) + \text{ReLU}(\Delta^u_{\hat{z}_{m,k}})- \text{ReLU}(\hat{z}_{m,k})=\max(0,\Delta^u_{\hat{z}_{m,k}})
\end{align*}
Similarly, 
\begin{align*}
\text{ReLU}(\hat{z}^\#_{m,k}) - \text{ReLU}(\hat{z}_{m,k})=
\text{ReLU}(\hat{z}_{m,k} + I_{\hat{z}_{m,k}})- \text{ReLU}(\hat{z}_{m,k})\geq 
%\text{ReLU}(\hat{z}_{m,k} + \Delta^l_{\hat{z}_{m,k}})- \text{ReLU}(\hat{z}_{m,k})=\\
 \text{ReLU}(\hat{z}_{m,k} - (-\Delta^l_{\hat{z}_{m,k}}) )- \\ \text{ReLU}(\hat{z}_{m,k}) \geq
 \text{ReLU}(\hat{z}_{m,k}) - \text{ReLU}(-\Delta^l_{\hat{z}_{m,k}})- \text{ReLU}(\hat{z}_{m,k})=
  -\max(0,-\Delta^l_{\hat{z}_{m,k}})
\end{align*}
The inequality transitions follow from the triangle inequalities for ReLU.
Overall,
$I_{z_{m,k}}=[-\max(0,-\Delta^l_{\hat{z}_{m,k}}),\max(0,\Delta^u_{\hat{z}_{m,k}})]$. 
It is possible to compute a tighter difference interval~\cite{ref_1,ref_2}, but with a longer computation, and in practice it is not required in our setting. %, our difference intervals are effective. %in practice.

\paragraph{Matching dependencies}
 The matching dependencies are linear constraints encoding the difference intervals to prune the MILP's search space. 
%These constraints bound the differences between the continuous optimization variables of $\mathcal{N}$ and $\mathcal{N}^\#$, effectively guiding the optimizer into computing the optimal solution efficiently. 
%Additionally, these constraints allows reducing the number of boolean variables in $\mathcal{N}^\#$, which in turn decreases the overall complexity of solving the \propa MILP problem. 
Given the intervals, the combined constraint is: $$\phi_{MD}=\bigwedge_{m\in[L],k\in [k_m]} \left ( z^\#_{m,k} \geq z_{m,k}+\Delta^l_{z_{m,k}}\wedge  z^\#_{m,k} \leq z_{m,k}+\Delta^u_{z_{m,k}}\right )$$

\paragraph{Relax-if-similar} To reduce the MILP's complexity, relax-if-similar overapproximates neurons in the hyper-network whose difference interval is small (its size is below a small threshold $\tau$). The overapproximation replaces their ReLU constraints defined over a boolean variable by linear constraints overapproximating the ReLU computation as proposed by~\citet{Ehlers17}.
%To keep the overapproximation error small, \boundtool employs this step only for neurons whose difference interval is small (up to 
%a small threshold $\tau$). %, which identifies neurons that are sufficiently close and substitutes the precise encoding in $\mathcal{N}^\#$ with linear constraints. Namely, it introduces a small hyperparamter variable $\tau$ 
Namely, it replaces the constraints of $z^\#_{m,k}$ for these neurons (Eq. (\ref{eq3:row7})) by:
\begin{align*}
\phi_{RiS}=\bigwedge_{\Delta^u_{z_{m,k}}-\Delta^l_{z_{m,k}}\leq \tau }
 \left(z^\#_{m,k} \geq 0 \wedge z^\#_{m,k} \geq \hat{z}^\#_{m,k}\wedge z^\#_{m,k} \leq \frac{u^\#_{m,k}}{u^\#_{m,k} - l^\#_{m,k}} (\hat{z}^\#_{m,k} - l^\#_{m,k}) \right) 
\end{align*}
 %satisfying $|\Delta^u_{z_{m,k}}-\Delta^l_{z_{m,k}}|\leq \tau$.
%(1) $ z^\#_{m,k} \geq 0, $ 
%(2) $ z^\#_{m,k} \geq \hat{z}^\#_{m,k}, $ 
%(3) $ z^\#_{m,k} \leq \frac{u^\#_{m,k}}{u^\#_{m,k} - l^\#_{m,k}} (\hat{z}^\#_{m,k} - l^\#_{m,k})$,  

\begin{comment}
\begin{algorithm}[t]
\caption{MD and SE constraints}
\label{alg:ME_SE}
\DontPrintSemicolon
\KwIn{The classifier $D$, the hyper-network $D^\#$.}
\KwOut{$\phi_{MD}$, $\phi_{SE}$}
\end{algorithm}
\end{comment} 
\begin{algorithm}[t]
\caption{\reptool($D$, $\mathcal{T}$, $\widetilde{N}$, $\varepsilon$)}
\label{alg:repair}
\SetKwProg{Fn}{Function}
\DontPrintSemicolon
\Fn{}{
\DontPrintSemicolon
%$D$ = $D^I$; $\mathcal{T}=\mathcal{T}^I$; $\widetilde{N}=\widetilde{N}^I$; $\varepsilon=\varepsilon^I$\;
$N$ = $\mathcal{T}(D,\widetilde{N})$\;\label{ln:reptrain}
$\beta\-Set$ =$\{\boundtool(D,\mathcal{T},\widetilde{N},c) \mid c\in C\}$\;\label{ln:repbounds}
$M$ = $[]$\tcp*{A dictionary of inputs for which the exponential mechanism ran} \label{ln:repM}
}
\Fn{Access($x$)}{
\DontPrintSemicolon
\KwIn{The input to the network $x$.}
\KwOut{A label, such that the label-only query satisfies $\varepsilon$-iDP.}
\lIf{$M[x] \neq \bot$}{\Return{$M[x]$}}\label{ln:repif}
$scores$ = $N(x)$\;\label{ln:repscore}
 $c$ = $\text{argmax}_{c\in C} scores$\;  \label{ln:repclass}
 \lIf{$scores[c]-max_{c'\neq c}scores[c']> \beta\-Set[c]$}{\Return{c}}\label{ln:repscorbound}
 $M[x]$ = ExponentialMechanism($u_{x,\mathcal{T},\widetilde{N}},D,c,\varepsilon)$\;\label{ln:repexp}
 \Return{$M[x]$}\label{ln:repret}
  }
 \end{algorithm}
\subsection{Our \propi Label-Only Access to the Network Classifier} 
In this section, we describe \reptool, our system for creating label-only access to a neural network classifier that is individually differentially private (\propi).
%\reptool is parameterized by a noise function $\sigma:\mathbb{R}^{|C|}\rightarrow \mathbb{R}^{|C|}$ and is general to any noise function. 
\reptool (\Cref{alg:repair}) takes as inputs a dataset $D$, a training algorithm $\mathcal{T}$, a network architecture $\widetilde{N}$, and a privacy budget~$\varepsilon$.
It first trains the network $N$ and runs \boundtool for every class $c$ to obtain all \propa (\Cref{ln:reptrain}--\Cref{ln:repbounds}). 
It then initializes a dictionary $M$ of inputs and their labels, for inputs whose labels are selected by the exponential mechanism (\Cref{ln:repM}). This is used in case the attacker repeatedly submits the same input. 
Then, given an input $x$, the \propi access is defined as follows.  
If $M$ contains $x$, it returns the label in $M[x]$ (\Cref{ln:repif}).
Otherwise, it passes $x$ through $N$ to compute the output vector~$N(x)$ (\Cref{ln:repscore}). It then identifies the predicted label $c$, which is the label with the maximal score (\Cref{ln:repclass}). 
It checks whether its classification confidence is greater than $\beta_{c}$, and if so, it returns~$c$ (\Cref{ln:repscorbound}). 
Otherwise, it runs the exponential mechanism as described in~\Cref{sec:naive}. %to generate selection probabilities for each class, and then randomly returns a class $\tilde{c}$ according to these probabilities. 
 %and returns the class with the maximal noised score $\tilde{c}$. 
 It stores the resulting label in $M[x]$ and returns it (\Cref{ln:repexp}--\Cref{ln:repret}). %$M[x]=\tilde{c}$.
We assume \reptool's memory is as large as the attacker's memory. Namely, after $M$ reaches the memory limit, \reptool overrides the oldest entry, which the attacker also forgets.
 %Finally, we note that an attacker may try to mitigate the noise by repeatedly submitting identical inputs. In this scenario, the defender (\tool) remembers these inputs and consistently returns the same noisy response. This assumption is valid  since an attacker’s memory is finite and thus can be exceeded.% by the defender's. 
%\tool is general to any type and any amount of noise. It also provides similar differential privacy guarantees derived directly from the noise itself. We next formalize:
%We note that our implementation trains every network once, and upon each call to \boundtool.

\begin{comment}
\begin{theorem}
Given a dataset $D$, a training algorithm $\mathcal{T}$, a network architecture $\widetilde{N}$, and a privacy budget $\varepsilon$,
  for every $x\in [0,1]^d$, \reptool provides an $\varepsilon$-\propi version of the query $f_{x,\mathcal{T},\widetilde{N}}(D)$. 
\end{theorem}
\end{comment}

\begin{restatable}[]{theorem}{ftc}
Given a dataset $D$, a training algorithm $\mathcal{T}$, a network architecture $\widetilde{N}$, and a privacy budget $\varepsilon$,
\reptool provides an $\varepsilon$-\propi version of every label-only query $f_{x,\mathcal{T},\widetilde{N}}(D)$, for $x \in [0,1]^d$, without losing accuracy for queries whose $x$ has a confidence larger than the \propa.
\end{restatable}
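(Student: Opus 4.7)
The plan is to split the analysis into a privacy part and an accuracy part, and then handle privacy by case analysis on whether the input's classification confidence exceeds the relevant \propa. Fix an arbitrary $x\in[0,1]^d$ and let $c=\text{argmax}(N(x))$ be the label predicted by $N=\mathcal{T}(D,\widetilde{N})$. The output of \reptool on $x$ is one of three things: (i) the cached value $M[x]$, (ii) the deterministic label $c$ when the confidence exceeds $\beta_c$, or (iii) the label returned by the exponential mechanism invoked with utility $u_{x,\mathcal{T},\widetilde{N}}$ and budget $\varepsilon$. I would show that each branch is $\varepsilon$-\propi, and that branch (ii) preserves accuracy.

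For the accuracy claim, note that branch (ii) is triggered exactly when $\mathcal{C}_N^c(x) > \beta_c$, in which case \reptool returns $c=\text{argmax}(N(x))$ verbatim. Hence for every such $x$ the returned label is identical to the label the plain classifier $N$ would produce, which is the stated no-loss property.

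For the privacy claim, I would handle the deterministic branch first. By the definition of the \propa (\Cref{def:main_problem} and \Cref{dpdbc}), $\beta_c$ is the maximal confidence with which an input classified by $N$ as $c$ may still be a leaking input. Hence if $\mathcal{C}_N^c(x)>\beta_c$, then for every adjacent dataset $D'=D\setminus\{x_D\}$ (with $x_D\in D$) we have $f_{x,\mathcal{T},\widetilde{N}}(D')=f_{x,\mathcal{T},\widetilde{N}}(D)=c$, so the local sensitivity of $f_{x,\mathcal{T},\widetilde{N}}$ at $D$ is zero. A deterministic mechanism whose output is constant over all adjacent datasets trivially satisfies Definition~\ref{def:individual_differential_privacy} with any $\varepsilon\geq 0$, so branch (ii) is $\varepsilon$-\propi. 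Branch (iii) is $\varepsilon$-\propi directly by \Cref{lemma::exponential_mechanism}. Branch (i) is post-processing of the output of (ii) or (iii) that was stored on the first invocation with this $x$; because \propi (like DP) is closed under post-processing and the cached value carries the same privacy guarantee it was issued with, repeated reads from $M$ do not degrade the bound.

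The main obstacle I foresee is the memorization/caching step: one must argue carefully that returning the same label on every repeated query of $x$ really is just deterministic post-processing of the single randomized draw made the first time $x$ was queried, and therefore does not consume additional privacy budget no matter how many times the adversary asks. Once that is stated cleanly and combined with the deterministic case via \Cref{def:main_problem} and the randomized case via \Cref{lemma::exponential_mechanism}, the two claims of the theorem follow.
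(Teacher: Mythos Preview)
Your proposal is correct and follows essentially the same approach as the paper's proof sketch: split on whether the confidence exceeds the \propa, use the definition of the \propa (\Cref{def:main_problem}) to conclude $0$-\propi (hence $\varepsilon$-\propi) in the high-confidence branch, and invoke \Cref{lemma::exponential_mechanism} in the low-confidence branch. Your treatment is somewhat more detailed than the paper's, in particular by explicitly addressing the caching branch via post-processing; the paper's sketch does not discuss caching at all.
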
 

%\Cref{sec:proofs} provides a proof sketch.
 A proof sketch is provided in 
 \ifthenelse{\EXTENDEDVER<0}{\Cref{sec:proofs}.
}{\citet[Appendix A]{ref_105}.}

\begin{comment}
\paragraph{Guarantees}
Given a noise function that guarantees $(\varepsilon,\delta)$-DP, our repair is $(\varepsilon,\delta)$-DP. This follows because 
our repair partitions the input space based on the relation of their classification confidence in the predicted class to the respective \propa.
By definition, querying the output of any input whose confidence in the predicted class is above the respective \propa is $(0,0)$-DP. 
By the noise function's guarantee, querying the output of any input whose confidence in the predicted class is at most the respective \propa is $(\varepsilon,\delta)$-DP.
By parallel composition~\cite{McSherry10}, the DP guarantee of our repair is determined by the worst DP bound, thus it guarantees $(\varepsilon,\delta)$-DP.
%\begin{lemma}~\label{lem3}
%If a noise that is $\epsilon$-differentially private is added only to the outputs associated with the input regions where $\forall c \in %C, S_{D}^{c}(x) \leq \beta_c$, then the classifier $D$ is $\epsilon$-differentially private.
%\end{lemma}
\end{comment}

%\input{privacy_gurantees}
\section{Evaluation}
\label{sec:eval}

In this section, we evaluate \tool's effectiveness in providing \propi label-only access to a neural network classifier. %computing the minimal deterministic differentially-private bound. 
We run all the experiments on a dual AMD EPYC 7713 64-Core Processor@2GHz server with 2TB RAM and eight A100 GPUs running an Ubuntu 20.04.1 OS. 
We begin with the experiment setup: implementation, datasets, and networks. %start by describing the implementation details, the datasets and the models. Following this we describe our baselines. 
We then compare \tool to baselines. Finally, we provide an ablation study, showing the effectiveness of \tool's components. %, including our branch and bound technique, our closeness-based dependencies and our mutual MIP encoding. 
\paragraph{Implementation} 
We implemented \tool in Julia 1.8.3 as a module extending MIPVerify~\cite{ref_42}.
To solve MILPs, \tool uses Gurobi 11.0.1~\cite{gurobi}.  
We run \tool with a timeout of eight hours. The timeout of a single MILP (for solving~\Cref{main_problem} for a given set~$S$) is 40 minutes. \tool is parallelizable: it uses 32 workers that pop from the priority queue $Q$ sets $S$ to branch or bound. For the relax-if-similar technique, it uses $\tau=0.01$. 

\paragraph{Networks}
We evaluate \tool over four datasets (described in  
 \ifthenelse{\EXTENDEDVER<0}{\Cref{sec:appeval}}{\citet[Appendix B]{ref_105}}):
Cryptojacking~\cite{ref_43} (Crypto), Twitter Spam Accounts~\cite{ref_44} (Twitter), Adult Census~\cite{ref_45} (Adult), and Default of Credit Card Clients~\cite{ref_47} (Credit).
We consider three fully-connected neural networks with architectures of 2$\times$50, 2$\times$100, and 4$\times$30, where the first number is the number of intermediate layers and the second number is the number of neurons in each of these layers. 
Additionally, we evaluate \tool on a convolutional neural network (CNN) architecture that has two convolutional layers followed by a fully-connected layer. 
The activation function in all networks is ReLU. 
%We summarize the networks parameters in~\Cref{sec:appex_models}.\Dana{is the table necessary? the number of parameters is exacty the multiplication. I suggest removing this sentence and the table}      
We train the networks using SGD, over 50 epochs, the batch size is 1024 (except for the Crypto dataset, where the batch size is 100), and the initial learning rate is $\eta = 0.1$. 
Although the network sizes may seem small compared to the common network sizes evaluated by local robustness verifiers (which are typically evaluated over image classifiers that have larger input dimensionality), our network sizes are consistent with those used in previous work evaluating the datasets we consider~\cite{ref_8,ref_54,ref_55,ref_56}. We remind that \tool analyzes a much more challenging property than local robustness or fairness: \propa is a global property, over any input, and over a very large number of classifiers (the size of the dataset plus one).

\ifthenelse{\WITHNAIVE>0}{

\begin{table}[t]
\small
\begin{center}
\caption{The accuracy of \tool ($Acc_{w/o-p}$ is the model's accuracy without privacy protection).}
\begin{tabular}{lll cccccccc}
\toprule
Dataset & Model & $Acc_{w/o-p}$ & \multicolumn{8}{c}{\tool}  \\
\cmidrule(lr){4-11} 
        &       &         & $\beta_0$ & $T_0$ &$\beta_1$& $T_1$ & $Acc$ & $Acc$ & $Acc$         & $AT$\\  
        &       &         &            &[h] &   &      $[h]$  & $[\%]$    & $[\%]$     & $[\%]$     & $[ms]$               \\  
        &       &         &            & &   &       &  $\varepsilon=0$       & $\varepsilon=0.2$ & $\varepsilon$=1&     \\  

\midrule
Crypto   & 2$\times$50    &  99.7   &    1.78&  0.1   & 2.95  & 0.07           & 97.3    &   97.3 &   98.0          &   0.1 \\ 
Crypto   & 2$\times$100    &  99.7   &    2.02& 1.1   & 2.86 & 1.0            & 97.1    &   97.5 &   97.8           &    0.2\\   
Crypto   & 4$\times$30    &  99.8   &    4.23 & 0.1 & 6.12 & 0.1              & 94.7    &   95.0 &   95.5           &    0.2\\     
\midrule
Twitter   & 2$\times$50   &  89.7   &    0.51 & 1.6 & 0.54  & 3.7             & 89.3    &   89.3 &   89.4           &     0.1\\  
Twitter   & 2$\times$100   &  89.6   &   0.86 & 4.9 & 0.96  & 5.5            & 88.1    &   88.2 &   88.5            &     0.2\\ 
\midrule
Adult   & 2$\times$50   &  83.1   &    0.55  & 2.9& 0.63  & 3.0               & 82.4    &   82.4 &   82.5           &     0.1\\     

Adult   & 2$\times$100  &  83.3   &    0.93 &  4.3& 0.91 & 4.1               & 80.7    &   80.8 &   81.0            &     0.1\\ 

Adult   & $conv$ &  79.2   &    0.29    & 0.2& 0.52 & 0.1                  & 80.1    &   80.1 &   80.0            &   0.2\\ 
\midrule
Credit   & 2$\times$50    &  81.9   &    0.33 &  5.8  & 0.34  & 6.5         & 81.7    &   81.8 &   81.8             &    0.1\\ 
Credit   & 2$\times$100   &  81.8   &    0.42  & 4.4  & 0.44  & 3.1          & 81.4    &   81.4 &   81.5            &  0.2\\ 
Credit   & $conv$  &  80.8   &    0.75&  0.3  & 0.57   & 0.2                & 79.9    &   80.1 &   80.3             &     0.3\\     
\bottomrule        
\end{tabular}
    \label{tab:results}
\quad
\end{center}
\end{table}

\paragraph{Baselines} 
We compare \tool to four baselines. First, the naive algorithms described in~\Cref{sec:naive}: 
Naive-Noise, invoking the exponential mechanism for every input, and
Naive-\propi, which passes every input through $|D|+1$ networks and invokes the exponential mechanism if there is a disagreement between them. Second,   
we compare to two DP training algorithms: 
DP-SGD~\cite{ref_22}, using its PyTorch implementation\footnote{\url{https://github.com/ChrisWaites/pyvacy.git}}, and ALIBI~\citep{ref_58}, using the authors' code\footnote{\url{https://github.com/facebookresearch/label\_dp\_antipodes.git}}. 
DP-SGD provides a DP guarantee to all the network's parameters and ALIBI provides a DP guarantee only for the classifier's output. 
Both of them inject Gaussian noise into the network's training process to obtain a user-specified $\varepsilon$-DP guarantee. 
In our experiments, 
we employ a binary search to determine the appropriate noise level for achieving the desired $\varepsilon$.
The binary search increases the computation time by a few milliseconds and thus poses a minor increase to their overall computation time, which is multiple seconds. 
These two baselines provide a stronger privacy guarantee (DP) than \propi, but any $\varepsilon$-DP algorithm is also $\varepsilon$-\propi~\cite{ref_88}. We compare to them because they represent the current approach for providing privacy guarantees to neural networks. %Additionally, they allow us to obtain $\varepsilon$-\propi by proving their $\varepsilon$-DP guarantees.
We note that we do not compare to other works guaranteeing other variants of DP~\cite{ref_94,ref_95,ref_97,ref_98,ref_99,ref_8,ref_100,ref_101}, since their guarantee is incomparable to \propi.

\begin{table}[t]
\small
\begin{center}
\caption{The accuracy of Naive-Noise and Naive-\propi ($Acc_{w/o-p}$ is the accuracy without privacy protection).}
\begin{tabular}{lll cccc cccc}
\toprule
Dataset & Model & $Acc_{w/o-p}$  & \multicolumn{4}{c}{Naive-Noise} & \multicolumn{4}{c}{Naive-\propi} \\
\cmidrule(lr){4-7} \cmidrule(lr){8-11}  
        &       &          &  $Acc$           &  $Acc$              & $Acc$            & $AT$              &  $Acc$            &  $Acc$           & $Acc$          & $AT$   \\  
        &       &          &  $[\%]$          & $[\%]$              & $[\%]$           & $[ms]$            &  $[\%]$           & $[\%]$           & $[\%]$                & $[s]$\\  
        &       &          &  $\varepsilon=0$ & $\varepsilon=0.2$   &  $\varepsilon=1$ &                   &$\varepsilon=0$    & $\varepsilon=0.2$ &  $\varepsilon=1$                   & \\  

\midrule
Crypto   & 2$\times$50    &  99.7   &   49.9   & 52.3   & 62.1     &  0.1 &   99.4    &  99.4   & 99.5     &  3          \\ 
Crypto   & 2$\times$100   &  99.7   &   50.1   &  52.7   & 62.1       & 0.2 &   99.3   &  99.2   & 99.3       & 4      \\   
Crypto   & 4$\times$30    &  99.8   &    50.0 &   52.4 &   62.0       & 0.2 &    99.7 &   99.7 &   99.7       & 4        \\     
\midrule
Twitter   & 2$\times$50   &  89.7   &     49.9   &   51.9  & 59.6    &  0.2 &     89.5   &   89.5  &  89.6    &  43        \\  
Twitter   & 2$\times$100  &  89.6       &   50.1   &  52.2   &  59.9      &   0.3 &   89.3   &  89.4   &  89.4      &   52       \\ 
\midrule
Adult   & 2$\times$50     &  83.1       &   49.9  &  51.7   & 58.2        &    0.1  &   83.1  &  82.9   & 83.1        &    32   \\     

Adult   & 2$\times$100    &  83.3        &   49.9  & 51.7    & 58.9        &   0.2  &   83.2  & 83.1    & 83.3        &   34     \\ 

Adult   & $conv$          &  79.2        &   49.9  &  51.4   & 57.2        &  0.2   &   79.2  &  79.5   & 79.5        &  40    \\ 
\midrule
Credit   & 2$\times$50    &  81.9          &  50.1    &   51.6  & 57.7       & 0.1    &  81.8    &   81.9  & 81.9       & 19    \\ 
Credit   & 2$\times$100   &  81.8         &   50.1   &   51.6  &  57.9      &  0.2  &   81.7   &   81.7  &  81.8      &  21   \\ 
Credit   & $conv$         &  80.8          &    49.9  &   51.6  & 57.5       & 0.3  &    80.9  &   80.7  & 80.7       & 26   \\   
\bottomrule        
\end{tabular}
    \label{tab:results3}
\quad
\end{center}
\end{table} 

\paragraph{\tool's performance}
We begin by evaluating the performance of \tool over fully-connected and convolutional networks for all our datasets, compared to all four baselines.
For \tool, \Cref{tab:results} reports the computed \propa for each label $\beta_0$ and $\beta_1$, their computation time in hours ($T_0$ and $T_1$), 
the test accuracy $Acc$ for three values of $\varepsilon$ ($0$, $0.2$, and $1$), and the access time in milliseconds ($AT$), i.e., the time to return a label for a given input. 
For Naive-Noise and Naive-\propi, \Cref{tab:results3} reports the test accuracy $Acc$ and the access time in milliseconds and seconds, respectively. 
For the DP training algorithms, \Cref{tab:results2} reports the test accuracy $Acc$ and the training time in seconds ($CT$). 
We run the DP training algorithms with the same values of $\varepsilon$, except that we replace the value $0$ with $0.02$, since DP cannot be obtained for $\varepsilon=0$. In fact, $0.02$ is the lowest value that our baselines can provide, and it requires adding a very high noise (the standard deviation of the noise distribution is $10,000$). 
We note that we focus on these values of $\varepsilon$ since commonly a value of $\varepsilon$ is considered to provide a strong DP guarantee when $\varepsilon<1$ and a moderate guarantee when $\varepsilon \in [1,3]$. 
Commonly, DP training algorithms, including our baselines, focus on guarantees for $\varepsilon\geq 1$.
We note that when providing the baselines $\varepsilon$ smaller than $0.02$, it results in a very large noise, rendering the computation practically infeasible (due to computer arithmetic issues). 
%\Cref{tab:results} shows the results of \tool, \Cref{tab:results3} shows the results of naive approaches, and \Cref{tab:results2} shows the results of the two DP baselines. 
The results show that, for $\varepsilon=0$, \tool provides a $0$-\propi guarantee with only 1.4\% accuracy decrease. As expected, for larger values of $\varepsilon$, \tool lowers the accuracy decrease: it provides a $0.2$-\propi guarantee with a 1.3\% accuracy decrease and a $1$-\propi guarantee with a 1.1\% accuracy decrease.
To compute the \propa, \tool runs for 4.8 hours on average (this computation is run once for every network). %using the parameters of the entire set of classifiers (with the size of the training set plus one) for a one-time analysis. 
%Afterward, \tool keeps only the classifier trained over the full training set. 
Its \propi-access time is at most one millisecond and on average it is 0.16 milliseconds.
%Consequently, the time required to provide a \propi-access to the classifier for any possible input is less than a millisecond and 0.16 milliseconds (in average). 
This time is comparable to the access time of standard networks (without \propi guarantees), with a negligible overhead. % for computing the confidences, comparing them to the bounds, and applying the probability normalization required for the exponential mechanism.
For Naive-Noise, the accuracy decrease is very high because it invokes the exponential mechanism for every input (unlike \tool and Naive-\propi). Its accuracy decrease is 
  38.1\%, 36.1\%, and 28.7\% for $\varepsilon=0$, $\varepsilon=0.2$, and $\varepsilon=1$, respectively. Like \tool, its access time is negligible.  
Naive-\propi obtains the minimal accuracy decrease, since it precisely identifies the set of leaking inputs (unlike \tool that overapproximates them with the \propa). Its accuracy decrease is 
  0.12\%, 0.1\%, and 0.05\% for the privacy budgets $\varepsilon=0$, $\varepsilon=0.2$, and $\varepsilon=1$, respectively. 
  However, it has to store throughout its execution all $|D|+1$ classifiers and its access time is high: 25 seconds on average (since it passes every input through all classifiers). This severely undermines its utility in real-time applications of neural networks, providing an interactive communication. 
%To obtain an $\varepsilon$-\propi guarantee using DP-SGD based approaches, we can compute their $\varepsilon$-DP guarantee (as described in Proposition 2 of \cite{ref_88}).
%The DP-SGD baselines can not provide a $0$-DB guarantee. 
The accuracy decrease of DP-SGD, for $\varepsilon\in\{0.02,0.2,1\}$, is on average 42.3\%, 14.2\%, and 14.3\%, respectively (29.8x, 10.7x, and 12.9x higher than \tool). 
The accuracy decrease of ALIBI for the same DP guarantees is on average 28.6\%, 19.3\%, and 2.7\% (20.1x, 14.5x, and 2.4x higher than \tool). 
Even worse, on some networks (Crypto 4$\times$30, Twitter 2$\times$50, and Twitter 2$\times$100), the baselines train networks whose accuracies are 50\% (like the accuracy of a random classifier).  
If we ignore these three networks, DP-SGD's accuracy decrease is 40.13\%, 3.99\%, and 3.9\%, which is still significantly higher than \tool (by 28.3x, 3.0x, and 3.5x), while ALIBI's decrease is 23.4\%, 10.8\%, and 2.7\% (higher by 16.4x, 8.2x, and 2.4x). 
The baselines are faster than \tool's analysis time (for computing the \propa): %is longer than the baselines. On average, it takes 4.8 hours to compute the \propa, required for the repair (training all networks takes half an hour on average when parallelizing over eight GPUs). 
DP-SGD completes within 19.9 seconds and ALIBI within 98.2 seconds. 
Although they are significantly faster, their training algorithms are coupled to a specific privacy budget $\varepsilon$: if a user wishes to update the privacy guarantee, they need to retrain the classifier. In contrast, \tool's \propi-access can easily configure the $\varepsilon$ (the \propa are computed once for a classifier).
Further, as we show, computing the \propa allows us to achieve \propi guarantees with a small accuracy decrease. %, unlike the baselines, which result in a high decrease.
}{
\paragraph{Baselines} 
We compare \tool to two baselines. 
DP-SGD~\cite{ref_22}, using its PyTorch implementation\footnote{https://github.com/ChrisWaites/pyvacy.git}, and ALIBI~\citep{ref_58}, using the authors' code\footnote{https://github.com/facebookresearch/label\_dp\_antipodes.git}. 
DP-SGD provides a DP guarantee to all the network's parameters, whereas ALIBI provides a DP guarantee only for the classifier's predicted labels. 
Both baselines propose a training algorithm that involves injecting Gaussian noise into the network's training process to obtain a user-specified $\varepsilon$-DP guarantee. 
In our experiments, 
we employ a binary search to determine the appropriate noise level for achieving the desired $\varepsilon$.
The binary search increases the computation time by a few milliseconds and thus poses a minor increase to their overall computation time, which is about few seconds. 
We note that although the DP-SGD baseline approaches provide a stronger DP privacy guarantee than \propi (in fact, DP overapproximates \propi as shown in~\cite{ref_88}), we still compare to them because they represent the main line of research providing privacy guarantees in neural networks. Additionally, they allow us to obtain $\varepsilon$-\propi by proving their $\varepsilon$-DP guarantees.

\begin{table}[t]
\small
\begin{center}
\caption{The accuracy of \tool ($Acc_{w/o-p}$ is the model's accuracy without privacy protection).}
\begin{tabular}{lll ccccccc}
\toprule
Dataset & Model & $Acc_{w/o-p}$ & \multicolumn{7}{c}{\tool}  \\
\cmidrule(lr){4-10} 
        &       &         & $\beta_0$ & $T_0$ &$\beta_1$& $T_1$ & $Acc$ & $Acc$ & $Acc$         \\  
        &       &         &            &[h] &   &      $[h]$  & $[\%]$    & $[\%]$     & $[\%]$               \\  
        &       &         &            & &   &       &  $\varepsilon=0$       & $\varepsilon=0.2$ & $\varepsilon$=1     \\  

\midrule
Crypto   & 2$\times$50    &  99.7   &    1.78&  0.1   & 2.95  & 0.07           & 97.3    &   97.3 &   98.0            \\ 
Crypto   & 2$\times$100    &  99.7   &    2.02& 1.1   & 2.86 & 1.0            & 97.1    &   97.5 &   97.8             \\   
Crypto   & 4$\times$30    &  99.8   &    4.23 & 0.1 & 6.12 & 0.1              & 94.7    &   95.0 &   95.5            \\     
\midrule
Twitter   & 2$\times$50   &  89.7   &    0.51 & 1.6 & 0.54  & 3.7             & 89.3    &   89.3 &   89.4                \\  
Twitter   & 2$\times$100   &  89.6   &   0.86 & 4.9 & 0.96  & 5.5            & 88.1    &   88.2 &   88.5            \\ 
\midrule
Adult   & 2$\times$50   &  83.1   &    0.55  & 2.9& 0.63  & 3.0               & 82.4    &   82.4 &   82.5           \\     

Adult   & 2$\times$100  &  83.3   &    0.93 &  4.3& 0.91 & 4.1               & 80.7    &   80.8 &   81.0                 \\ 

Adult   & $conv$ &  79.2   &    0.29    & 0.2& 0.52 & 0.11                  & 80.1    &   80.1 &   80.0               \\ 
\midrule
Credit   & 2$\times$50    &  81.9   &    0.33 &  5.8  & 0.34  & 6.5         & 81.7    &   81.8 &   81.8                 \\ 
Credit   & 2$\times$100   &  81.8   &    0.42  & 4.4  & 0.44  & 3.1          & 81.4    &   81.4 &   81.5             \\ 
Credit   & $conv$  &  80.8   &    0.75&  0.3  & 0.57   & 0.2                & 79.9    &   80.1 &   80.3                  \\     
\bottomrule        
\end{tabular}
    \label{tab:results}
\quad
\end{center}
\end{table} 

\paragraph{\tool's performance}
We begin by evaluating the performance of \tool over fully connected and convolutional networks, using the datasets: Crypto, Twitter, Adult, and Credit. 
For each case, we report the computed \propa for each label $\beta_0$ and $\beta_1$, along with the times $T_0$ and $T_1$ required to compute them in hours ($CT$). 
Additionally, we provide the test accuracy $Acc$ for three values of $\varepsilon$: $\varepsilon=0$, $\varepsilon=0.2$, and $\varepsilon=1$. 
For the DP-SGD based baselines, we report the test accuracy and the training time in seconds. 
We run the DP-SGD based approach with different values of privacy budget $\varepsilon \in \{0.02,0.2,1\}$.
These values demonstrate strong privacy guarantees (for $\varepsilon<1$) and moderate guarantees $(\varepsilon \in [1,3]$). 
Previous  DP-SGD based work, including our baselines, often focus on guarantees where $\varepsilon\geq 1$.
We note that when providing the baselines $\varepsilon$ smaller than $0.02$, it results in a very large noise, rendering the computation practically infeasible (due to computer arithmetic issues). 
\Cref{tab:results} shows the results of \tool, while \Cref{tab:results2} shows the results of the two DP-SGD based baselines. 
The results indicate that \tool provides a $0$-\propi guarantee with only 1.4\% accuracy decrease. As expected, for larger values of $\varepsilon$, \tool lowers the accuracy decrease. \tool provides a $0.2$-\propi guarantee with a 1.3\% accuracy decrease and a $1$-\propi guarantee with a 1.1\% accuracy decrease.
To obtain an $\varepsilon$-\propi guarantee using DP-SGD based approaches, we can compute their $\varepsilon$-DP guarantee (as described in Proposition 2 of \cite{ref_88}).
The DP-SGD baselines can not provide a $0$-DB guarantee. 
The closest guarantee they obtain is a $0.02$-DP guarantee, which requires adding a very high noise (the standard deviation of the noise distribution is $10,000$). 
The accuracy decrease of DP-SGD, for $\varepsilon\in\{0.02,0.2,1\}$, is on average 42.3\%, 14.2\%, and 14.3\%, respectively (29.8x, 10.7x, and 12.9x higher than \tool). 
The accuracy decrease of ALIBI for the same DP guarantees is on average 28.6\%, 19.3\%, and 2.7\% (20.1x, 14.5x, and 2.4x higher than \tool). 
Even worse, on some networks (Crypto 30$\times$4, Twitter 50$\times$2, and Twitter 100$\times$2), the baselines train networks whose accuracy is 50\% (like the accuracy of a random classifier).  
If we ignore these three networks, DP-SGD's accuracy decrease is 40.13\%, 3.99\%, and 3.9\%, which is still significantly higher than \tool (by 28.3x, 3.0x, and 3.5x), while ALIBI's decrease is 23.4\%, 10.8\%, and 2.7\% (higher by 16.4x, 8.2x, and 2.4x). 
As expected, \tool's analysis time is longer than the baselines. On average, it takes 4.8 hours to compute the \propa, required for the repair (training all networks takes half an hour on average when parallelizing over eight GPUs). 
In contrast, DP-SGD takes only 19.9 seconds and ALIBI takes 98.2 seconds. 
Although the baselines are significantly faster, their training algorithms are coupled to specific privacy budget, and if a user wishes to update the privacy guarantee, they need to retrain the classifier. In contrast, \tool's private access can be easily configured with any $\varepsilon$ guarantees (the \propa are computed once for a classifier).
Further, as we show, computing the \propa enables us to obtain a small decrease to the network's accuracy, unlike the baselines (for the same ~\propi guarantees).

}

\begin{table}[t]
\small
\begin{center}
\caption{The accuracy of DP-SGD and ALIBI ($Acc_{w/o-p}$ is the model's accuracy without privacy protection).}
\begin{tabular}{lll ccccc cccc}
\toprule
Dataset & Model & $Acc_{w/o-p}$   & \multicolumn{4}{c}{DP-SGD} & \multicolumn{4}{c}{ ALIBI}\\
 \cmidrule(lr){4-7} \cmidrule(lr){8-11}  
        &       &         &  $Acc$              &  $Acc$              & $Acc$           & $CT$   &  $Acc$           & $Acc$            & $Acc$         & $CT$ \\  
        &       &         &  $[\%]$            & $[\%]$              & $[\%]$          & $[s]$ &  $[\%]$          &  $[\%]$          & $[\%]$          & $[s]$ \\  
        &       &         & $\varepsilon=0.02$ &  $\varepsilon=0.2$  & $\varepsilon=1$   & &  $\varepsilon=0.02$ &  $\varepsilon=0.2$  & $\varepsilon=1$  & \\  

\midrule
Crypto   & 2$\times$50    &  99.7   &     49.8  &  96.6  &  96.6  &  8.2  &   48.5 &   48.5  &   92.8    & 19.8       \\ 
Crypto   & 2$\times$100   &  99.7   &    49.3  &  96.5  &  96.6  &  7.9  &   80.8 &   96.7  &   96.9    & 20.8       \\   
Crypto   & 4$\times$30    &  99.8   &      34.5  &  49.3  &  49.3  &  11.4  &   49.3 &   49.3  &   95.6    & 22.8       \\     
\midrule
Twitter   & 2$\times$50   &  89.7   &      51.9  &  51.4  &  51.5  &  25.1  &   51.4 &   51.4  &   87.6    & 169.2       \\  
Twitter   & 2$\times$100  &  89.6   &      47.5  &  53.2  &  52.4  &  28.8  &   51.4 &   52.6  &   87.6    & 159.6       \\ 
\midrule
Adult   & 2$\times$50     &  83.1   &        23.6  &  76.3  &  76.3  &  24.4  &   67.4 &   77.4  &   82.2    & 149.4       \\     

Adult   & 2$\times$100    &  83.3   &        76.3  &  76.4  &  76.4  &  20.8   &   61.5 &   75.3  &   81.9    & 103.6       \\ 

Adult   & $conv$          &  79.2   &        46.2  &  76.4  &  76.4  &  24.9   &   23.6 &   74.6  &   76.4    & 139.8       \\ 
\midrule
Credit   & 2$\times$50    &  81.9   &        26.2  &  78.4  &  78.5  &  22.4  &   78.4 &   78.4  &   78.6    & 96.4       \\ 
Credit   & 2$\times$100   &  81.8   &        71.8  &  78.5  &  78.5  &  21.1  &   63.7 &   73.1  &   79.9    & 81.8       \\ 
Credit   & $conv$         &  80.8   &        25.2  &  78.4  &  78.5  &  24.7  &   78.4 &   78.4  &   78.4    & 116.8       \\     
\bottomrule        
\end{tabular}
    \label{tab:results2}
\quad
\end{center}
\end{table}

\paragraph{Illustration of the \propa}
We next illustrate the reason that the \propa enables \tool to provide \propi-access to a network classifier with a minor accuracy decrease.
We consider the 2$\times$50 network for Twitter and let \tool compute the \propa of each class $\beta_0$ and $\beta_1$.
 \Cref{fig::eval1} plots the classification confidence of each input in the test set, where inputs labeled as $0$ are shown on the left  plot and inputs labeled as $1$ are shown on the right plot. The plots show in a green dashed line the \propa, in blue points the test points whose confidence is greater than the \propa and in red points the test points whose confidence is smaller or equal to the \propa. The plots 
 show that the confidence of the vast majority of points in the test set is over the \propa. Recall that for these points, \tool does not employ the exponential mechanism and thereby it obtains a minor accuracy decrease.
%In this figure, we show the minimal differentially private bounds of the 50x2 network trained on the Twitter dataset (represented by the dashed-green lines). 
%We compute the classification sensitivity for each input in the test set. 
%The input samples whose sensitivity exceeds the bounds (represented by blue points) do not leak privacy, and therefore, we do not add noise to them. 
%Conversely, the input samples with sensitivity below the bounds (represented by red points) are considered to be privacy-leaking, and thus, we add noise to these inputs. This added noise consequently impacts the overall test accuracy of the network classifier. 

\begin{table}[t]
\small
\begin{center}
\caption{Ablation study over \tool's components. 
 Hyp stands for Hyper-network, MD for Matching Dependencies, RiS for Relax-if-Similar, and BaB for Branch-and-Bound.}
\begin{tabular}{lllcccccccccc}
    \toprule
Dataset & Model & Variant & Hyp & MD & RiS & BaB & $\beta_{0}$ & $T_0$ & $\beta_{1}$ & $T_1$ & $Acc_{\varepsilon=0}$ & $Acc_{\varepsilon=1}$ \\
        &       &         &       &    &     &     &             & $[h]$ &             & $[h]$ & [\%]    & [\%] \\
\midrule
 Twitter& 2$\times$50   & Naive     &  \xmark & \xmark  & \xmark  & \xmark                                 & -       & 8.00        & -         & 8.00        & -         & -                      \\    
        & $Acc_{w/o-p}$  & H     &  \cmark & \xmark  & \xmark  & \xmark                                 & 1.62      & 8.00        & 1.49        & 8.00        & 85.4        & 85.9                      \\
         & 89.7\%       & HM    &  \cmark & \cmark  & \xmark  & \xmark                                 & 1.22      & 0.18     & 1.20        & 0.38     & 87.5       & 87.8                      \\
        &        & HMRiS   &  \cmark & \cmark  & \cmark  & \xmark                                 &1.25       & 0.17    & 1.26         & 0.11     & 87.3       & 87.7                      \\
         &        & HMB &  \cmark & \cmark  & \xmark  & \cmark                                 & 0.46      & 4.56    & 0.43         & 5.26     & 89.3        & 89.4                      \\

        &        & \tool &  \cmark & \cmark  & \cmark  & \cmark                                 & 0.51       & 1.16    & 0.46         & 2.81     & 89.3        & 89.4                      \\
\midrule
 Adult  & conv & Naive     &  \xmark & \xmark  & \xmark  & \xmark                                 & -      & 8.00        & -         & 8.00         & -         & -                      \\
        & $Acc_{w/o-p}$      & H     &  \cmark & \xmark  & \xmark  & \xmark                                 & 0.96      & 8.00        & 0.58       & 7.50        & 74.8     & 75.3                      \\
          & 79.2\%       & HM    &  \cmark & \cmark  & \xmark  & \xmark                              & 0.86     & 0.04      & 0.58       & 0.04       & 75.4     & 75.9                      \\
        &      & HMRiS    &  \cmark & \cmark  & \cmark  & \xmark                                 & 0.87      & 0.03      & 0.58       & 0.04       & 75.3      & 75.9                      \\
         &      & HMB   &  \cmark & \cmark  & \xmark  & \cmark                                 & 0.33     & 1.65      & 0.24        &4.05       & 80.3      & 80.1                      \\
        &      & \tool   &  \cmark & \cmark  & \cmark  & \cmark                                & 0.40      & 0.76      & 0.36        & 0.81        & 80.1     & 80.1                      \\
\bottomrule
\end{tabular}
    \label{tab:ablation}
\quad
\end{center}
\end{table}

 \begin{figure*}[t]
    \centering
  \includegraphics[width=1\linewidth, trim=0 385 0 0, clip,page=5]{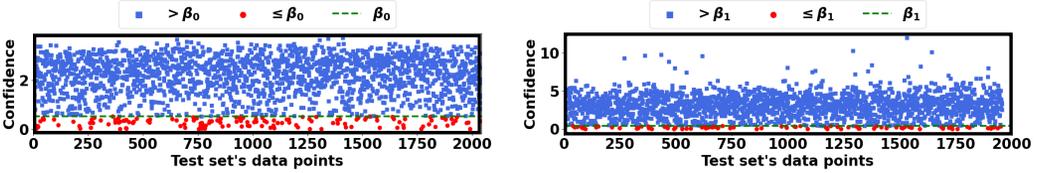}
    \caption{The \propa and the classification confidence of the test set's data points, for Twitter 2$\times$50. }
    \label{fig::eval1}
\end{figure*}

\paragraph{Ablation tests}  
Next, we evaluate the effectiveness of \tool's components. We focus on the Twitter 2$\times$50 network and on the Adult convolutional network. 
We consider five variants of \tool:
\begin{itemize}[nosep,nolistsep] 
\item Naive: The naive approach described in~\Cref{sec:overview_opt}, which computes the \propa of every $x_D \in D$ by encoding the problem $P_{x_D}$ as a MILP (like \tool, it has 32 parallel workers). %(without the BaB, the matching dependencies, or the relax-if-similar). 
%It also does not encode the closeness dependencies or apply mutual encoding, 
\item Hyper-network (H): This variant constructs a hyper-network and solves the problem of~\Cref{dphyp} for $S=D$ by encoding it as a MILP (as described in~\Cref{sec:overview_hyper}). %without encoding matching dependencies or relax-if-similar and does not employ our branch-and-bound.  
\item +Matching dependencies (HM): This variant is H with the matching dependencies. %but does not employ mutual encoding or execute the branch and bound splitting, 
\item +Relax-if-similar (HMRiS): This variant is HM with the relax-if-similar technique. 
\item +Branch-and-bound (HMB): This variant is HM with our branch-and-bound.

\end{itemize}
We run every approach with an eight hour timeout to compute the \propa of each class
 $\beta_0$ and $\beta_1$. %,  
%with a timeout of eight hours and 32 workers. 
We measure the computation time $T_0$ and $T_1$ and the accuracy decrease for $\varepsilon=0$ and $\varepsilon=1$. 
\Cref{tab:ablation} summarizes the results. 
The results show that Naive does not compute an \propa within the timeout (marked by -). 
In fact, it iterates over fewer than 5\% of the networks. %Therefore, we denote its bound as NaN.  
Unlike \tool's anytime algorithm, Naive must solve all problems in $\{P_{x_D} \mid x_D \in D \}$ to provide a sound \propa.
The hyper-network variant terminates but provides very loose \propa, approximately 2.57x larger than those obtained by \tool and it further requires 7.2x more time than \tool. 
This looseness is caused by the overapproximation of the hyper-network. 
The computation time is very long because there are no matching dependencies, making the optimization very slow, often reaching the timeout. 
As expected, the hyper-network and matching dependencies variant converges faster to the \propa: it is faster by 99.9x than the hyper-network variant.   
However, its \propa remain high, by 2.16x than those computed by \tool (because it analyzes a single hyper-network, which introduces an overapproximation error). 
The variant that also employs relax-if-similar reduces the analysis time by 1.64x compared to the previous variant, with a minor increase to the bound (1.02x). 
The variant that adds branch-and-bound computes \propa that are 2.66x tighter than the previous variant and 1.2x tighter than \tool. 
However, its computation time is 3.2x longer than \tool. 
This is expected, since the only difference between this variant and \tool is the relax-if-similar technique, which accelerates the computation at the cost of overapproximation. 
When integrating each variant (except Naive, which did not terminate) with the two privacy budgets $\varepsilon=0$ and $\varepsilon=1$, the accuracy decrease is 
4.4\%, 3.0\%, 3.2\%, -0.4\% and -0.3\%, for $\varepsilon=0$, and  
 3.9\%, 2.6\%, 2.7\%, -0.3\% and -0.3\%, for $\varepsilon=1$. 
A negative decrease means that \tool improves the accuracy.

\section{Related Work}
\label{sec:related_work}
%In this section, we discuss the closest related work to ours.

\begin{comment}
\paragraph{Privacy in neural networks}
Several works propose privacy attacks in neural networks, shown to be effective even in settings where only labels are accessible~\citep{ref_12,ref_13,ref_14}. %,ref_15}. 
Other works propose approaches to reduce privacy leakage. 
Some works suggest networks with specialized architectures designed for privacy~\cite{ref_16,ref_17}. 
Others integrate regularization terms during training to limit the privacy leakage~\cite{ref_18, ref_19, ref_20}. 
%However, these approaches do not have formal privacy guarantees. 
Others provide probabilistic guarantees by relying on federated distributed training~\cite{ref_21,ref_34,ref_57} or training differentially private networks~\cite{ref_22,ref_36,ref_37,ref_58,ref_59,ref_60,ref_61,ref_62}.  
A recent work introduces the property local differential classification privacy and provides for it probabilistic guarantees~\cite{ref_8}. 
%Unlike these works, \tool provides a deterministic guarantee over classifier privacy.

\end{comment}

\paragraph{Privacy of neural networks}
%Several works propose privacy attacks in neural networks, even when access is limited to labels-only~\citep{ref_12,ref_13,ref_14}. %,ref_15}. 
Several works propose approaches to reduce privacy leakage of neural networks. 
Some propose specialized architectures to maintain privacy~\cite{ref_16,ref_17}. 
Others integrate regularization during training to limit leakage~\cite{ref_18, ref_19}. 
Others rely on federated distributed learning~\cite{ref_21,ref_34,ref_57}. 
Another approach is to train differentially private (DP) networks~\cite{ref_22,ref_36,ref_37,ref_58,ref_59,ref_60}. 
Many works considered variants of DP to ensure privacy. For example, in individualized privacy assignment, different privacy budgets can be allocated to different users~\cite{ref_94,ref_95}. 
In local differential privacy, a data owner adds randomization to their data before it leaves their devices~\cite{ref_60,ref_97}. Renyi differential privacy relaxes differential privacy based on the Renyi divergence~\cite{ref_98,ref_99}. Local differential classification privacy proposes a new privacy property inspired by local robustness~\cite{ref_8}. Homomorphic-encryption approaches encrypt data before obtaining DP guarantees~\cite{ref_100,ref_101}. 
%In this work, we focus on individual differential privacy (\propi), which is another (incomparable) relaxation of DP. 
 
 %Some works focus on providing a utility-preserving formulation of the DP property by proposing the \propi property~\citep{ref_88,ref_90}. In our work, we employ the \propi property in the setting of label-only access to classifiers, aiming to provide classifiers with a small accuracy decrease while ensuring the dataset's individuals' privacy.

\begin{comment}
\paragraph{Multiple network analysis} 
\tool verifies the differential privacy of a network by analyzing a large set of closely related networks. 
Several works analyze numerous networks. Some works compute the absolute output differences of two networks~\cite{ref_1,ref_2}. Others study incremental verification, analyzing the differences of a network and its slightly modified version~\cite{ref_3}. Others propose proof transfer for similar networks for verifying local robustness~\cite{ref_4}. % of a given network compared to various approximation templates, with an emphasis on a proof transfer between networks. 
Some global robustness verifiers analyze the global robustness of networks by analyzing two copies of a network: one receiving an input and another one receiving its perturbed version~\cite{ref_5,ref_6,ref_7}. The work of~\citet{ref_8} proves local differential classification privacy by predicting a hyper-network that abstracts all networks that differ in a single data point (like our setting), with a high probability. %In contrast, our work computes a deterministic bound at which the network achieves differential privacy.
\end{comment}
\paragraph{Multiple network analysis} 
\tool analyzes a large set of similar networks. 
There is prior work on analysis of several networks. 
Some works compute the output differences of two networks~\cite{ref_1,ref_2}
or the output differences of a set of inputs~\cite{ref_102}. 
%Others analyze inference differences within the same network for different inputs~\cite{ref_102}. 
Others study incremental verification, analyzing the differences of a network and its slightly modified version~\cite{ref_3}. 
Others propose proof transfer for similar networks for verifying local robustness~\cite{ref_4}. 
Some global robustness verifiers compare the outputs of a network given an input and given its perturbed version~\cite{ref_6,ref_7,ref_5}. 
%Another work proves local differential classification privacy by predicting a hyper-network, with high probability~\cite{ref_8} .%which abstracts all networks differing by a single data point, with high probability. 
%Another work proves privacy guarantee by predicting a hyper-network, with high probability~\cite{ref_8} .%which abstracts all networks differing by a single data point, with high probability. 
\begin{comment}
\paragraph{Clustering in neural network verification} 
\tool expedites its analysis by clustering close networks and computing their hyper-networks. Several neural network verifiers employ clustering. Some verifiers accelerate local robustness analysis by grouping neurons into subgroups~\cite{ref_9, ref_10}. A different work divides a dataset's inputs into subgroups, computes a centroid point for each, and computes a global robustness bound for each centroid point~\cite{ref_11}.
\end{comment}

\paragraph{Clustering in neural network verification} 
\tool expedites its analysis by clustering close networks and computing their hyper-networks. %Several neural network verifiers use clustering. 
Some verifiers accelerate local robustness analysis by grouping neurons into subgroups~\cite{ref_9, ref_10}, while others compute centroids to partition inputs and establish global robustness bounds~\cite{ref_11}. 
\section{Conclusion}
\label{sec:conclusions_and_discussion}
We present \tool, a system that creates \propi label-only access for a neural network classifier with a minor accuracy decrease.   
%making neural network classifiers differentially private (DP) against an adversary with a label-only access. % for neural network classifiers with minimal decrease on accuracy, under the assumption of adversary black-box access. 
\tool computes the \propa, which overapproximates the set of inputs that violate \propi %bounding all inputs where classifiers, %trained on the complete dataset and its variants missing a single sample, differ in classification. 
%and repairs by adding noise 
and adds noise only to the labels of these inputs. %optimizing the privacy-accuracy trade-off. 
To compute the \propa, \tool 
relies on several techniques: constraint solving, hyper-networks abstracting a large set of networks, and a novel branch-and-bound technique. %and on a MILP-based verifier that computes the \propa of the classifier and a hyper-network, abstracting a set of networks. %each trained on the same training set except for a unique data point.
%encodes the problem as MILP over the classifier and every variant trained on the full dataset except for a single data point. To scale, \tool abstracts a set of networks using a hyper-network. To remove the precision loss, it employs 
%a novel branch-and-bound. %To reduce the number of bounds, it abstracts close networks, which are identified by clustering.
%(2) Hyper-Networks for simultaneous bound computation across clusters, 
To further scale, it prunes the search space by
computing matching dependencies %bounding the differences of matching neurons in the network and the hyper-network, and when the differences are very small, it encodes neurons in the hyper-network using 
and employing linear relaxation. %to reduce the MILP's complexity. %to decrease the number of Boolean variables needed, albeit adding a slight overapproximation. 
Our experimental evaluation shows that our verification analysis enables \tool to provide a $0$-\propi guarantee with an accuracy decrease of 1.4\%. For more relaxed \propi guarantees, \tool can reduce the accuracy decrease to 1.2\%, whereas existing DP approaches lead to an accuracy decrease of 12.7\% to provide the same \propi guarantees. %a privacy protection that is not amenable by the baseline methods, and within a given privacy loss budget, our approach provides classifiers with x\% higher accuracy.  
\section*{Acknowledgements} We thank the anonymous reviewers for their feedback. This research was supported by the Israel Science Foundation (grant No. 2605/20). 

\section*{Data-Availability Statement}
Our code is available at \url{https://github.com/ananmkabaha/LUCID.git}. 

\bibliography{bib}
\newpage
\appendix

\section{Proofs}\label{sec:proofs}
\ftb*
\begin{proof}
Generally, given a dataset $D$, a set of outputs $\mathcal{R}$ and a utility function $u:\mathcal{D} \times \mathcal{R} \rightarrow \mathbb{R}$,
the exponential mechanism $\mathcal{A}$ samples and returns $r\in \mathcal{R}$ with probability $\frac{\text{exp}(\varepsilon u(D,r)/(2\Delta u))}{\sum_{r'\in \mathcal{R}} \text{exp}(\varepsilon u(D,r')/(2\Delta u))}$.
In our setting, the set of outputs is the label set $\mathcal{R}=C$.
To prove \propi, we show that for any dataset $D'$ adjacent to $D$ (i.e., differing by one data point) and for every possible observed output $\mathcal{O} \subseteq \text{Range}(\mathcal{A})$, the following holds:
$e^{-\varepsilon} \cdot \Pr[\mathcal{A}(D') \in \mathcal{O}] \leq\Pr[\mathcal{A}(D) \in \mathcal{O}] \leq e^{\varepsilon} \cdot \Pr[\mathcal{A}(D') \in \mathcal{O}]$.
In our setting, the observed outputs $\mathcal{O} \subseteq \text{Range}(\mathcal{A})$ are sets of a single class $\{c\}\subseteq C$.
 Thus, we prove that for any dataset $D'$ adjacent to $D$ and for every $c \in C$, the following holds:  
 $$e^{-\varepsilon} \cdot \Pr[\mathcal{A}(D',u_{x,\mathcal{T},\widetilde{N}},C) =c] \leq\Pr[\mathcal{A}(D,u_{x,\mathcal{T},\widetilde{N}},C) =c] \leq e^{\varepsilon} \cdot \Pr[\mathcal{A}(D',u_{x,\mathcal{T},\widetilde{N}},C) =c]$$ 
We prove $\Pr[\mathcal{A}(D,u_{x,\mathcal{T},\widetilde{N}},C) =c] \leq e^{\varepsilon} \cdot \Pr[\mathcal{A}(D',u_{x,\mathcal{T},\widetilde{N}},C) =c]$ (the proof for the other inequality is similar).
The proof is very similar to the proof of~\cite[Theorem 3.10]{ref_87}.
To simplify notation, we write $u\triangleq u_{x,\mathcal{T},\widetilde{N}}$. 
We remind that $\mathcal{A}$ is invoked with $\Delta u=1$.

%The exponential mechanism $\mathcal{A}$ outputs a classification $c'\in C$ on the training set $D$ and any other adjacent training set $D'$.

\begin{multline*}
\frac{\Pr[\mathcal{A}(D,u, C) = c]}{\Pr[\mathcal{A}(D',u,C) = c]}=\frac{\left( \frac{\text{exp}(\varepsilon u(D,c)/2)}{\sum_{c'\in C} \text{exp}(\varepsilon u(D,c')/2)} \right) }{\left( \frac{\text{exp}(\varepsilon u(D',c)/2)}{\sum_{c'\in C} \text{exp}(\varepsilon u(D',c')/2)} \right) }=\\ 
\text{exp}\left(\frac{\varepsilon(u(D,c)-u(D',c))}{2}\right)\cdot\left(\frac{\sum_{c'\in C} \text{exp}(\varepsilon u(D',c')/2)}{\sum_{c'\in C} \text{exp}(\varepsilon u(D,c')/2)}\right)
\leq %\text{exp} \left(\frac{\varepsilon}{2}\right) \cdot \text{exp}\left(\frac{\varepsilon}{2}\right) = 
\text{exp}(\varepsilon)
\end{multline*}
%Similarly, $\frac{\Pr[\mathcal{A}(D,u,C) = c']}{\Pr[\mathcal{A}(D',u,C) = c']}\geq \text{exp}(-\varepsilon)$. 
The last transition follows from:
\begin{itemize}[nosep,nolistsep]
  \item The left term is equal to $\text{exp}\left(\frac{\varepsilon}{2}\right)$: This follows since by the definition of our utility function, for every $D'$ and $c$, we have $u(D,c)-u(D',c) \leq 1$.
%As mentioned, our exponential mechanism is invoked under the worst case value, i.e., $(u(D,c')-u(D',c') \leq 1$. 
\item The right term is equal to  $\text{exp}\left(\frac{\varepsilon}{2}\right)$: This follows by 
the quotient inequality which states that if we have $K$ values $\alpha_1,\ldots,\alpha_K$ and $K$ respective values $\beta_1,\ldots,\beta_K$ and for every $i\in[K]$, we have $\alpha_i / \beta_i \leq \gamma$, for some $\gamma$, then
$\frac{\alpha_1+\ldots + \alpha_K} {\beta_1+\ldots+\beta_K} \leq \gamma$. In our right term, we have for every $c'\in C$,  $\alpha_{c'}=\exp(\epsilon u(D',c')/2)$ and $\beta_{c'}= \exp(\epsilon u(D,c')/2)$. 
That is  $\frac{\alpha_{c'}}{\beta_{c'}} = \frac{\exp(\epsilon u(D',c')/2)}{\exp(\epsilon u(D,c')/2)}=
 \exp(\epsilon (u(D',c')- u(D,c'))/2)$. Like in the left term, since  $u(D',c')-u(D,c') \leq 1$
 we get $\alpha_{c'}/\beta_{c'} \leq \exp(\epsilon/2)\triangleq \gamma$, and the claim follows. 
\end{itemize}
 \end{proof}

\ftc*
\begin{proof}[Proof Sketch]
By the correctness of \boundtool, its computed \propa provides a sound overapproximation to the set of leaking inputs. Thus, if an input $x$ has classification confidence above the respective \propa, its query satisfies $0$-\propi and thus also $\varepsilon$-\propi. For such inputs, \tool does not add noise, and thereby does not lose accuracy.  
%Each input $x$ can define either a leaking input or a non-leaking input. %If the input is non-leaking, then the $\varepsilon$-\propi definition holds trivially with $\varepsilon = 0$, which is upper bounded by any $\varepsilon$. 
Otherwise, the exponential mechanism is employed and thereby $\varepsilon$-\propi is guaranteed by~\Cref{lemma::exponential_mechanism}. %to ensure $\varepsilon$-\propi. The exponential mechanism selects and outputs a class $c' \in [C]$ with probability proportional to $\exp(\varepsilon u(D,c') / 2)$, where $u$ is the utility function and $\Delta u = 1$. Since an input can be either leaking or non-leaking, the response mechanism \reptool ensures $\varepsilon$-\propi in both scenarios. Thus, for any query, the access provided is $\varepsilon$-\propi.
%Minimizing the accuracy decrease by adding noise only to inputs with confidence less than~\propa is obtained by the tightness of~\propa.  
\end{proof}

\newpage
\section{Evaluation: Dataset Description}\label{sec:appeval}
In this section, we describe our evaluated datasets. % and the ablation study.

\paragraph{Datasets}
We evaluate \tool over four datasets:% studied by previous DP-SGD papers\Dana{correct? if yes, add citations}:
 \begin{itemize}[nosep,nolistsep]
    \item Cryptojacking~\cite{ref_43} (Crypto): %Websites engaging in cryptojacking are harmful webpages that abuse users' computational resources for cryptocurrency mining. 
    %The dataset, comprising 2000 malicious websites and 2000 benign websites, and is divided into a 70\% training set and a 30\% test set. 
    %Consequently, the dataset includes 2800 samples for training and 1200 samples for testing. Each sample comprises 7 input features (e.g., websocket and messageloop), and one output label of malicious/benign.  
    Cryptojacking refers to websites that exploit user resources for cryptocurrency mining. This dataset includes 2,000 malicious websites and 2,000 benign websites, where 2,800 are used for training and 1,200 for testing. Each website has seven features (e.g., websocket) and is labeled as malicious or benign.
    \item Twitter Spam Accounts~\cite{ref_44} (Twitter): %Twitter spam accounts are profiles that distribute unsolicited, irrelevant, or inappropriate messages, often automatically, to a large number of users on the platform. 
    %The dataset, comprising 20,000 spam accounts, and 20,000 benign twitter users accounts. The dataset is divied into 36,000 samples for training and 4000 samples for testing. Each sample comprises 15 input features (e.g., number of followers and number of tweets) and one output label of spam/bengin.  
 Twitter spam accounts refers to accounts that distribute unsolicited messages to users. This dataset includes 20,000 spam accounts and 20,000 benign accounts, split into 36,000 accounts for training and 4,000 for testing. Each data point has 15 features (e.g., followers and tweets) and is labeled as spam or benign.
    \item Adult Census~\cite{ref_45} (Adult): %The Adult Census dataset is designed to analyze the demographic characteristics of adults to predict whether their income exceeds a certain threshold. The dataset comprises 48,842 samples, with 32,561 samples allocated for training and 16,281 samples for testing. Each sample includes 14 input features (e.g., age, education level, occupation, and hours per week worked) and one output label indicating whether the individual's annual income is above or below \$50,000. 
    %\item Bank Marketing~\cite{ref_46}.
 This dataset is used for predicting whether an adult's annual income is over \$50,000. It includes 48,842 data points, split into 32,561 for training and 16,281 for testing. Each data point has 14 features (e.g., age, education, occupation, and working hours) and is labeled as yes or no. %, labels for annual incomes above or below \$50,000.
    \item Default of Credit Card Clients~\cite{ref_47} (Credit): %This dataset concerns individuals' credit card payment defaults in Taiwan, consisting of data from 30,000 clients. It includes information on payment history, amount of bill statement, and previous payment details over six months, along with demographic factors like age, education, and marital status. The dataset is split into 24,000 samples for training and 6,000 samples for testing. Each sample features 23 input variables, such as the amount of given credit, gender, education, marital status, age, history of past payment, and the bill and payment amounts over the last six months, along with one binary output variable indicating default/no default.
The Taiwan Credit Card Default dataset is used for predicting whether the default payment will be paid. It 
 includes 30,000 client records, split into 24,000 for training and 6,000 for testing.
 A data point has 23 features (e.g., bill amounts, age, education, marital status) and is labeled as yes or no.
  \end{itemize}

\end{document}